\DeclareMathAlphabet{\mathpzc}{OT1}{pzc}{m}{it}
\newtheorem{propo}{Proposition}[section]
\newtheorem{lemma}[propo]{Lemma}
\newtheorem{coro}[propo]{Corollary}
\newtheorem{thm}[propo]{Theorem}
\newtheorem{theorem}[propo]{Theorem}
\newtheorem{remark}[propo]{Remark}
\def\hv{{\hat v}}
\def\cP{{\cal P}}
\def\reals{{\mathbb R}}
\def\prob{{\mathbb P}}
\def\E{\mathbb E}
\def\id{{\mathds I}}
\def\prb{{\pi}}
\def\hprb{{\widehat \pi}}
\def\Prb{{\Pi}}
\def\wt{{w}}
\def\hwt{{\hat w}}
\def\twt{{\tilde w}}
\def\Wt{{W}}
\def\hW{{\widehat W}}
\def\m{{m}}
\def\k{{r}}
\def\r{{r}}
\def\n{{n}}
\def\<{\langle}
\def\>{\rangle}
\def\M{M}
\def\hM{\widehat M}
\def\F{F}
\def\Fmiss{F_{\rm miss}}
\def\Fnoise{F_{\rm noise}}
\def\F{F}
\def\tF{\widetilde F}
\def\hG{\widehat G}
\def\tG{\widetilde G}
\def\hS{\widehat S}
\def\hU{\widehat U}
\def\hV{\widehat V}
\def\tV{R_3}
\def\hR{\widehat R}
\def\tS{\widetilde S}
\def\tU{\widetilde U}
\def\bU{\overline U}
\def\hpi{\hat \pi}
\def\hPi{\widehat \Pi}
\def\hPrb{\widehat \Pi}
\def\tpi{\tilde \pi}
\def\hw{\hat w}
\def\cP{{\cal P}}
\def\S{{\cal S}}
\def\cS{{\cal S}}
\def\hSigma{\widehat{\Sigma}}
\def\wmin{w_{\rm min}}
\def\wmax{w_{\rm max}}
\def\hv{\hat v}
\def\tv{\tilde v}
\def\hlambda{\widehat \lambda}
\def\Z{Z}
\def\hZ{\widehat Z}
\def\tmu{\tilde \mu}
\def\musp{\mu}
\def\hLambda{\widehat{\Lambda}}
\def\diag{{\rm diag}}
\def\poly{{\rm poly}}
\def\tepsilon{{\tilde{\varepsilon}}}
\def\Spectral{{\text{Algorithm 1}} } 
\def\matrixcompletion{{\text{\sc{MatrixAltMin}}}}
\def\tensorLS{{\text{\sc TensorLS}}}
\def\R{\reals}
\def\e{{\bf e}}
\def\Qw{\widehat{Q}}
\def\hA{\widehat{A}}
\def\hnu{\widehat{\nu}}
\def\hsm{\hSigma_{M_2}}
\def\hum{\hU_{M_2}}
\def\hQ{\widehat{Q}}
\def\tQ{\widetilde{Q}}
\def\Uc{\widetilde{Q}}
\def\hH{\widehat{H}}
\def\hX{\widehat{X}}
\newcommand{\ip}[2]{\langle #1,#2\rangle}
\begin{document}
\begin{titlepage}
\title{Learning Mixtures of Discrete Product Distributions\\ using Spectral Decompositions }

\author{Prateek Jain\\
{Microsoft Research India, Bangalore}\\
{prajain@microsoft.com}
\and Sewoong Oh \\
{Department of Industrial and Enterprise Systems Engineering}\\
{University of Illinois at Urbana-Champaign}\\
{swoh@illinois.edu}
}

\date{}

\maketitle\thispagestyle{empty}

\begin{abstract}

We study the problem of learning a distribution from samples, when 
the underlying distribution is a mixture of 
product distributions over discrete domains. 
This problem is motivated by several practical applications such as  
crowdsourcing, recommendation systems, and learning Boolean functions. 
The existing solutions either heavily rely on the fact that the number of mixtures is finite or 
have  sample/time complexity that is exponential in the number of mixtures.  
In this paper, we introduce a polynomial time/sample 
complexity  method for learning a mixture of $r$ discrete product distributions 
over $\{1, 2, \dots, \ell\}^n$, for  general $\ell$ and $r$. 
We show that our approach is consistent and further provide finite sample guarantees. 

We use recently developed techniques from 
tensor decompositions for moment matching. 
A crucial step 
in these approaches is 
to construct certain  tensors  
with low-rank spectral decompositions. 
These tensors are typically estimated from the sample moments. 
The main challenge in learning mixtures of discrete product distributions is that 
the corresponding low-rank tensors cannot be obtained directly from the sample moments. 
Instead, we need to estimate a low-rank matrix using only off-diagonal entries, and 
estimate a tensor using a few linear measurements. 
We give an alternating minimization based method to estimate the low-rank matrix, and  
formulate the tensor estimation problem as a least-squares problem.

\end{abstract}
\end{titlepage}

%
%
\section{Introduction}
\label{sec:int}

%
%
Consider the following 
generative model 
for sampling from a mixture of product  distributions over discrete domains. 
We use $r$ to denote the number of components in the mixture, 
$\ell$ to denote the size of the discrete output alphabet in each coordinate, 
and $n$ to denote the total number of coordinates.  
Each sample belongs to one of $r$ components, 
and conditioned on its component $q\in\{1,\ldots,r\}$ 
the $n$ dimensional discrete sample $y\in\{1,\ldots,\ell\}^n$ 
is drawn from some distribution $\pi_{q}$. 
Precisely, the model is represented by the non-negative 
weights of the  components $w=[w_1 \ldots w_r]\in\reals^{r}$ that sum to one, 
and the $r$ distributions $\Pi = [\pi_1   \ldots \pi_r] \in \reals^{\ell\n\times \k}$. 
We use an $\ell\n$ dimensional binary random vector $x$ to represent a sample $y$. 
For $x = [x_1 \ldots x_n] \in\{0,1\}^{\ell \n}$,  the $i$-th coordinate 
$x_i\in\{0,1\}^\ell$  is an $\ell$ dimensional binary random vector such that 
\begin{eqnarray*}
 	x_{i}=e_j  \text{ if and only if } y_{i}=j \;, 
\end{eqnarray*}
where $e_j$ for some $j\in\{1,\ldots,\ell\}$ is the standard coordinate basis vector. 

When a sample is drawn, 
 the {\em type} of the sample is drawn from $w=[w_1 \ldots w_r]$ 
such that it has type $q$ with probability $w_q$. 
Conditioned on this type, the sample is distributed according to $\pi_q  \in \reals^{\ell\n}$, such that 
$y_i$'s are independent, hence it is a product distribution, and distributed according to  
\begin{eqnarray*}
	(\pi_q)_{(i,j)} = \prob( y_i=j \,|\, \text{$y$ belong to component $q$} )\;,
\end{eqnarray*}
where $(\pi_q)_{(i,j)}$ is the $\big((i-1)\ell+j \big)$-th entry of the vector $\pi^{(q)}$. 
Note that using the binary encoding, $\E[x|\text{its type is $q$}] = \pi_q$, 
and $\E[x] = \sum_q w_q \pi_q$.  
Also, we let $\pi^{(i)}\in \R^{\ell \times r}$ represent the distribution in the $i$-th coordinate 
such that  $\pi^{(i)}_{j, q}= (\pi_q)_{(i,j)}= \prob(y_i=j|\text{$y$ belongs to component  $q$})$. 
Then, the discrete distribution can be represented by the matrix 
$\Pi\in \R^{\ell n\times r}=[\pi^{(1)}; \pi^{(2)}; \dots; \pi^{(n)}]$ and the weights $w=[w_1, \dots, w_r]$. 

This mixture distribution (of $\ell$-wise discrete distributions over product spaces) captures as special cases 
the models used in several problems in  domains 
such as crowdsourcing \cite{DS79}, genetics \cite{SRH07}, and recommendation systems \cite{TM10}. 
For example, in the crowdsourcing application, this model is same as the popular Dawid and Skene \cite{DS79} model: 
$x_i$ represents answer of the $i$-th worker to a multiple choice question (or task) of type $q\in [r]$. 
Given the ground truth label $q$, each of the worker is assumed to answer independently. 
The goal is 
to find out the ``quality'' of the workers (i.e. learn $\Pi$) and/or
 to learn the type of each question (clustering). 

We are interested in the following two closely related problems:
\begin{itemize}
	\item Learn mixture parameters $\{\pi_q\}_{q \in \{1,\ldots,\k\}}$ and $\{\wt_q\}_{q\in\{1,\ldots,\k\}}$ accurately and efficiently. 
	\item Cluster the samples accurately and efficiently?
\end{itemize} 
Historically, however, different algorithms have been proposed depending on which question is addressed. 
Also, for each of the  problems, distinct measures of performances have been used to evaluate the proposed solution. In this paper, we propose an efficient method to address both questions. 

The first question of estimating 
the underlying parameters of the mixture components 
has been addressed in \cite{KMR94,FM99,FOS08}, where the error of a given algorithm is measured as the KL-divergence between the true distribution and the estimated distribution. More precisely, a mixture learning algorithm is said to be an {\em accurate learning algorithm}, if it outputs a mixture of product distribution 
such that the following holds with probability at least $1-\delta$: 
$$D_{\rm KL} \big(X \,||\,\hX \big)  \equiv \sum_x \prob(X=x) \log(\prob(X=x)/\prob(\hX=x))\leq \varepsilon ,\vspace*{-5pt}$$ where $\epsilon, \delta \in (0,1)$ are any given constants, and $X, \hX\in \{0, 1\}^{n\ell}$ denote the random vectors distributed according to the true and the estimated mixture distribution, respectively. Furthermore, the algorithm is said to {\em efficient} if its time complexity is polynomial in $\n$, $\k$, $\ell$, $1/\varepsilon$, and $\log(1/\delta)$. 


This Probably Approximately Correct (PAC) style framework was first introduced by Kearns et al. \cite{KMR94}, 
where they provided the first analytical result  
for a simpler problem of learning mixtures of Hamming balls, which is a special case of our model with $\ell=2$. However, the running time of the proposed algorithm is super-polynomial
 $O( (\n/\delta)^{\log \k} )$ and also assumes that one can obtain the exact probability of a sample $y$.  
Freund and Mansour \cite{FM99} were the first to 
addressed the sample complexity, but for the restrictive case of $\k=2$ and $\ell=2$. For this case, their method 
has running time $O( n^{3.5}  \log^3(1/\delta) / \varepsilon^5 )$  and sample complexity $O(n^2 \log(1/\delta) /\varepsilon^2)$. 
Feldman, O'Donnell, and Servedio in \cite{FOS08} 
generalized  approach of  \cite{FM99} to arbitrary number of types $\k$ 
and arbitrary number of output labels $\ell$. 
For general $\ell$, their algorithm requires  
running time scaling as $O((\n\ell^\ell/\varepsilon)^{\k^3})$. 
Hence, the proposed algorithm is 
an {\em efficient learning algorithm} 
only for finite values of $\k=O(1)$ and $\ell=O(1)$. 


A breakthrough in Feldman et al.'s result is that 
their result holds for all problem instances, with no dependence on the minimum weight $\wmin$ 
or the condition number $\sigma_1(\Pi\Wt^{1/2})/\sigma_\k(\Pi\Wt^{1/2})$, 
where $\sigma_i(\Pi\Wt^{1/2})$ is the $i$-th singular value of $\Pi\Wt^{1/2}$, 
and $\Wt$ is a $r\times r$ diagonal matrix with the weights $w$ in the diagonals. 
However, this comes at a cost of running time scaling exponentially in both $\k^3$ and $\ell$, which is unacceptable in practice for any value of $\k$ beyond two. Further, the running time is exponential for all problem instances, 
even when the problem parameters are {\em well-behaved}, 
with finite condition number. 

In this paper, we alleviate this issue by proposing an efficient algorithm for {\em well-behave} mixture distributions. In particular, we give an algorithm with polynomial running time, 
and prove that it gives $\varepsilon$-accurate estimate 
for any problem instance that satisfy the following two conditions: 
$a$) the weight $w_q$ is strictly positive for all $q$; and  $b$) the condition number $\sigma_1(\Pi\Wt^{1/2})/\sigma_\k(\Pi\Wt^{1/2})$ is bounded as per hypotheses in Theorem \ref{thm:finite}. 

The existence of an efficient learning algorithm for all problem instances  and parameters still remains 
an open problem, and 
it is conjectured in \cite{FOS08} that 
``solving the mixture learning problem for any $\k=\omega(1)$ would require a major breakthrough in learning theory''. 

\begin{table}[h]
\begin{center}
\begin{tabular}{| c | c | c |}
\hline
& $\k,\ell=O(1)$& General $\k$ and $\ell$\\ \hline
 $\sigma_1(\Pi\Wt^{1/2})/\sigma_\k(\Pi\Wt^{1/2}) = poly(\ell, r, n) $ & {\sc WAM}\cite{FOS08}, \Spectral & \Spectral \\\hline
 General cond. number & {\sc WAM} \cite{FOS08} & Open  \\ \hline
\end{tabular}
\caption{Landscape of efficient learning algorithms}
\end{center}\vspace*{-10pt}
\end{table}

The second question finding the clusters 
has been addressed in \cite{CHR07,CR08}. 
Chaudhuri et al. in \cite{CHR07} introduced an iterative clustering algorithm 
but their method is restricted to the case of a mixture of two product distributions with binary outputs, i.e. $\k=2$ and $\ell=2$. 
Chaudhuri and Rao in \cite{CR08} proposed a spectral method for general $r, \ell$. However, for the algorithm to correctly recover cluster of each sample w.h.p, the underlying mixture distribution should 
satisfy a certain `spreading' condition. 
Moreover, the algorithm need to know  
the parameters characterizing the `spread' of the distribution, 
which typically is not available apriori.  
Although it is possible to estimate the mixture distribution, once the samples are clustered, 
Chaudhuri et al. provides no guarantees for estimating the distribution. 
As is the case for the first problem, for clustering also, 
we provide an efficient algorithm for general $\ell, r$, 
under the assumption that the condition number of $\Pi\Wt^{1/2}$ to be bounded. 
This condition is not directly comparable with the spreading condition assumed in previous work. 
Our algorithm first estimates the mixture parameters and then uses the distance based clustering method of \cite{AK01}. 

Our method for estimating the mixture parameters is based on the moment matching technique from \cite{AHK12, AroraGMS12}.  
 Typically, second and third (and sometimes fourth) moments of the true distribution are estimated using the given samples. Then, using the spectral decomposition of the second moment one develops certain whitening operators that reduce the higher-order moment tensors to orthogonal tensors. Such higher order tensors are then decomposed using a power-method based method \cite{AGHKT12} to obtain the required distribution parameters. 

While such a technique is generic and applies to several popular models \cite{HK13, AGHKT12}, for many of the models the moments themselves constitute the ``correct'' intermediate quantity that can be used for whitening and tensor decomposition. However, because there are dependencies in the $\ell$-wise model (for example, $x_1$ to $x_\ell$ are correlated), the higher-order moments are ``incomplete'' versions of the intermediate quantities that we require (see \eqref{eq:def_m2}, \eqref{eq:def_m3}). Hence, we need to complete these moments so as to use them for estimating distribution parameters $\Pi, W$. 

Completion of the ``incomplete'' second moment, can be posed as a 
low-rank matrix completion problem where the {\em block-diagonal} elements are missing. For this problem, we propose an alternating minimization based method and, borrowing techniques from the recent work of \cite{JNS13}, we prove that alternating minimization is able to complete the second moment exactly. We would  like to note that our  alternating minimization result also solves a generalization of the low-rank+diagonal decomposition problem of \cite{SaundersonCPW12}. Moreover, unlike trace-norm based method of \cite{SaundersonCPW12}, which in practice is computationally expensive, our method is efficient, requires only one 
Singular Value Decomposition (SVD) step, and is robust to noise as well.

We reduce the completion of the ``incomplete'' third moment to a simple least squares problem that is robust as well. Using techniques from our second moment completion method, we can analyze an alternating minimization method also for the third moment case as well. However, for the mixture problem we can exploit the structure to reduce the problem to an efficient least squares problem with closed form solution. 

Next, we present our method (see Algorithm~\ref{algo:main}) that combines the estimates from the above  mentioned steps to estimate the distribution parameters $\Pi, W$ (see Theorem~\ref{thm:consistency}, Theorem~\ref{thm:finite}). After estimating the model parameters $\Pi$, and $W$, we also show that the KL-divergence measure 
and the clustering error measure 
can also be shown to be small.
In fact the excess error vanishes as the number of samples grow (see Corollary~\ref{coro:KL}, Corollary~\ref{coro:cluster}). 

\section{Related Work}

Learning mixtures of distributions is an important problem with several applications such as clustering, crowdsourcing, community detection etc. One of the most well studied problems in this domain is that of learning a mixture of Gaussians. 
There is a long list of  interesting recent results, and discussing the literature in detail is out side of the scope of this paper. 
Our approach is inspired by both spectral and moment-matching based techniques that have been successfully applied 
in learning a mixture of Gaussians \cite{VempalaW04, AK01, MoitraV10, HK13}. 

Another popular mixture distribution  arises in topic models, where each word $x_i$ is selected from a $\ell$-sized dictionary. Several recent results show that such a model can also be learned efficiently using spectral as well as moments based methods \cite{RabaniSS12, AHK12, AroraGM12}. However, there is a crucial difference between the general mixture of product distribution that we consider and the topic model  distribution. Given a topic (or question) $q$, each of the words $x_i$ in the topic model have exactly the same probability. That is, $\pi^{(i)}=\pi$ for all $i\in\{1,\ldots,n\}$. 
In contrast, for our problem,  $\pi^{(i)}\neq \pi^{(j)}, i\neq j$, in general. 

Learning mixtures of discrete distribution over product spaces has 
several practical applications such as  
crowdsourcing, recommendation systems, etc. However, as 
discussed in the previous section, most of the existing results for this 
problem are designed for the case of small alphabet size $\ell$ or 
the number of mixture components $r$. 
For several practical problems \cite{KOS13SIGMETRICS}, $\ell$ can be  large and 
hence existing methods either do not apply or are very inefficient. In this work, we 
propose first provably efficient method for learning mixture of discrete distributions 
for general $\ell$ and $r$. 




Our method is based on tensor decomposition methods for moment matching 
that have recently been made popular for learning mixture distributions.   
For example, \cite{HK13} provided a method  
to learn mixture of Gaussians without any separation assumption. Similarly, \cite{AHK12} introduced a method for learning mixture of HMMs, and also for topic models. Using similar techniques, another interesting result has been obtained for the problem of independent component analysis (ICA) \cite{AroraGMS12, GoyalL12, HK13}. 

Typically, tensor decomposition methods proceed in two steps. 
First, obtain a whitening operator using the second moment estimates. 
Then, use this whitening operator to construct a tensor with orthogonal decomposition, 
which reveals the true parameters of the distribution. 
However, in a mixture of $\ell$-way distribution that we consider, 
the second or the third moment do not reveal all the ``required'' 
entries, making it difficult to find the standard whitening operator. 
We handle this problem by posing it as a matrix completion problem and using 
an alternating minimization method to complete the second moment. 
Our proof  for the alternating minimization method closely follows the analysis of \cite{JNS13}. However, \cite{JNS13} handled a matrix completion problem where the entries are missing uniformly at random, while in our case the block diagonal elements are missing. 

\subsection{Notation} 
\label{sec:notation}
Typically, we denote a matrix or a tensor by an upper-case letter (e.g. $M$) while a vector is denoted by a small-case letter (e.g. $v$). $M_i$ denotes the $i$-th column of matrix $M$. $M_{ij}$ denotes the $(i,j)$-th entry of matrix $M$ and $M_{ijk}$ denotes the $(i,j,k)$-th entry of the third order {\em tensor} $M$. $A^T$ denotes the transpose of matrix $A$, i.e., $A^T_{ij}=A_{ji}$. $[k]=\{1,\ldots,k\}$ denotes the set of first $k$ integers. $\e_i$ denotes the $i$-th standard basis vector. 

If $M\in \R^{\ell n \times d}$, then $M^{(m)}$ ($1\leq m\leq n$) denotes the $m$-th block of $M$, i.e., $(m-1)\ell+1$ to $m\ell$-th rows of $M$. 
The operator $\otimes$ denotes the outer product. For example, $H=v_1\otimes v_2 \otimes v_3$ denote a rank-one tensor such that $H_{abc}=(v_1)_a\cdot (v_2)_b\cdot (v_3)_c$. 
For a symmetric  third-order tensor $T\in\reals^{d\times d\times d}$, define 
an  $r\times r\times r$ dimensional operation with respect to a matrix $R\in\reals^{d\times r}$  as \vspace*{-5pt}
$$T[R,R,R] \equiv  \sum_{i_1,i_2,i_3\in[d]} T_{i_1,i_2,i_3} R_{i_1,j_1} R_{i_2,j_2} R_{i_3,j_3} (e_{j_1}\otimes e_{j_2}\otimes e_{j_3}).\vspace*{-5pt}$$
$\|A\|=\|A\|_2$ denotes the spectral norm of a tensor $A$. 
That is, $\|A\|_2=\max_{x, \|x\|=1}A[x, \dots, x]$. $\|A\|_F$ denotes the Frobenius norm of $A$, i.e., $\|A\|_F=\sqrt{\sum_{i_1, i_2, \dots, i_p}A_{i_1i_2\dots i_p}^2}$. 
We use $M=U\Sigma V^T$ to denote the singular value decomposition (SVD) of $M$, 
where $\sigma_r(M)$ denotes the $r$-th singular value of $M$. Also, wlog, assume that $\sigma_1\geq \sigma_2\dots \geq \sigma_r$. 
%
%
\section{Main results}
\label{sec:result}
In this section, we present our main results for estimating the mixture weights $w_q, 1\leq q\leq r$ and the probability matrix $\Pi$ of the mixture distribution. Our estimation method is based on the moment-matching technique that has been popularized by several recent results \cite{AHK12,HKZ12a,HK13,AGHKT12}. However, our method differs from the existing methods in the following crucial aspects: we propose 
$(a)$ a matrix completion approach to estimate the second moments from samples (Algorithm~\ref{algo:altmin}); and 
$(b)$ a least squares approach with an appropriate change of basis to estimate the third moments from samples (Algorithm~\ref{algo:ls}).
These approaches provide robust algorithms to estimating the moments and might be of independent interest to 
 a broad range of applications in the domain of learning mixture distributions. 

The key step in our method is estimation of the following two quantities: 
\begin{eqnarray}
	\M_2 &\equiv& \sum_{q\in[\k]} \wt_q\, \big(\prb_q\otimes\prb_q\big) = \Prb \Wt\Prb^T \; \in\; \reals^{\ell\n\times\ell\n} \;,\label{eq:def_m2}\vspace*{-5pt}\\ 
	\M_3 &\equiv& \sum_{q\in[\k]} w_q\, \big(\prb_q\otimes\prb_q\otimes\prb_q\big) \;\in\;\reals^{\ell\n\times\ell\n\times\ell\n} \;,\label{eq:def_m3}\vspace*{-5pt}
\end{eqnarray}
where $W$ is a diagonal matrix s.t. $W_{qq}=w_q$. 

Now, as is standard in the moment based methods, we exploit spectral structure of $M_2, M_3$ to recover the latent parameters $\Pi$ and $W$. The following theorem presents a method for estimating $\Pi, W$, assuming $M_2, M_3$ are estimated {\em exactly}: 

\begin{theorem}
  Let $M_2, M_3$ be as defined in \eqref{eq:def_m2}, \eqref{eq:def_m3}. Also, let $M_2=U_{M_2}\Sigma_{M_2} U_{M_2}^T$ be the eigenvalue decomposition of $M_2$. Now, define $G=M_3[U_{M_2}\Sigma_{M_2}^{-1/2}, U_{M_2}\Sigma_{M_2}^{-1/2}, U_{M_2}\Sigma_{M_2}^{-1/2}]$. Let $V^G=[v^G_1\, v^G_2\, \dots v^G_r]\in \R^{r\times r}$, $\lambda^G_q, 1\leq q\leq r$ be the eigenvectors and eigenvalues obtained by the orthogonal tensor decomposition of $G$ (see \cite{AGHKT12}), i.e., $G=\sum_{q=1}^r \lambda^G_q (v^G_q\otimes v^G_q \otimes v^G_q)$. Then, 
$$\Pi=U_{M_2} \Sigma_{M_2}^{1/2} \,V^G\,  \Lambda^G\;,\;\;\text{ and }\;\;\;\;\quad W=(\Lambda^G)^{-2},$$
where $\Lambda^{G}\in \R^{r\times r}$ is a diagonal matrix with $\Lambda^G_{qq}=\lambda^G_q$. \label{thm:main_comps}
\end{theorem}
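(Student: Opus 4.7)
The plan is to follow the now-standard whitening-plus-orthogonal-tensor-decomposition recipe, adapted to the specific factorization $M_2 = \Pi W \Pi^T$ and $M_3 = \sum_q w_q\, \pi_q^{\otimes 3}$. I would break the argument into three steps: (i) show that $\Sigma_{M_2}^{-1/2}U_{M_2}^T$ acts as a whitening map on the columns of $\Pi W^{1/2}$; (ii) compute $G$ and show it has a symmetric orthogonal decomposition whose components are explicit functions of $w_q$ and $\pi_q$; (iii) read off $W$ and $\Pi$ from the eigenpairs produced by the tensor decomposition and invert the whitening map.

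For step (i), note that $M_2 = \Pi W \Pi^T = (\Pi W^{1/2})(\Pi W^{1/2})^T$ has rank exactly $r$ (implicit in the statement, since $\Sigma_{M_2}$ is invertible on its $r$-dimensional range), so $U_{M_2}\in\reals^{\ell n\times r}$ spans the column space of $\Pi W^{1/2}$. Let $\Uc := U_{M_2}\Sigma_{M_2}^{-1/2}$. A direct computation gives $\Uc^T M_2 \Uc = I_r$, which rewrites as $A A^T = I_r$ where $A := \Uc^T \Pi W^{1/2}\in\reals^{r\times r}$. Since $A$ is square, it is orthogonal, and its columns $a_q = \sqrt{w_q}\,\Uc^T \pi_q$ form an orthonormal basis of $\reals^r$.

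For step (ii), plug the multilinear action of $\Uc$ into the definition of $M_3$:
\begin{equation*}
G \;=\; M_3[\Uc,\Uc,\Uc] \;=\; \sum_{q\in[r]} w_q\, (\Uc^T\pi_q)^{\otimes 3} \;=\; \sum_{q\in[r]} \frac{1}{\sqrt{w_q}}\; a_q^{\otimes 3}.
\end{equation*}
Because $\{a_q\}_{q=1}^r$ is orthonormal and the coefficients $1/\sqrt{w_q}$ are positive, this is precisely the symmetric orthogonal decomposition of $G$, which is known to be unique up to sign/labeling by the results used in \cite{AGHKT12}. Therefore the outputs of the orthogonal tensor decomposition satisfy, after matching labels, $v_q^G = a_q$ and $\lambda_q^G = 1/\sqrt{w_q}$.

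For step (iii), the identity $\lambda_q^G = 1/\sqrt{w_q}$ immediately gives $w_q = (\lambda_q^G)^{-2}$, i.e. $W = (\Lambda^G)^{-2}$. For $\Pi$, invert the whitening relation $a_q = \sqrt{w_q}\,\Uc^T\pi_q = \Uc^T\pi_q / \lambda_q^G$. Left-multiplying by the pseudo-inverse $(\Uc^T)^+ = U_{M_2}\Sigma_{M_2}^{1/2}$ and using $U_{M_2}^T\pi_q = \pi_q$-in-the-range (which holds because $\pi_q\in\mathrm{range}(\Pi W^{1/2}) = \mathrm{range}(U_{M_2})$ from step (i)), we recover $\pi_q = \lambda_q^G\, U_{M_2}\Sigma_{M_2}^{1/2}\, v_q^G$. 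Collecting columns yields $\Pi = U_{M_2}\Sigma_{M_2}^{1/2} V^G \Lambda^G$.

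The only subtle point is the sign/permutation ambiguity in the orthogonal tensor decomposition; here it is harmless because the eigenvalues $\lambda_q^G = 1/\sqrt{w_q}$ are positive, so flipping a sign of $v_q^G$ would force a sign flip of $\lambda_q^G$ and violate positivity, and any permutation simply relabels the components of the mixture. Thus no step is genuinely hard once the algebraic identity $A A^T = I_r$ in step (i) is established, which is itself a one-line computation; the main thing to be careful about is keeping track of the factors of $\sqrt{w_q}$ in passing between $\Uc^T\pi_q$ and $a_q$.
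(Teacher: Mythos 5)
Your proposal is correct and follows essentially the same whitening-plus-orthogonal-tensor-decomposition argument as the paper's proof; both identify that $\Sigma_{M_2}^{-1/2}U_{M_2}^T$ maps $\{\sqrt{w_q}\pi_q\}$ to an orthonormal set, read off $w_q^{-1/2}$ as the tensor eigenvalues, and invert the whitening. The only difference is cosmetic: the paper verifies the orthonormality by pushing the SVD of $\Pi$ through $\Sigma_{M_2}$, whereas you establish it more directly via $\Uc^T M_2\Uc = I_r \Rightarrow AA^T = I_r$, which is a slightly cleaner route to the same identity.
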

\noindent The above theorem reduces the problem of estimation of mixture parameters $\Pi, W$ to that of estimating $M_2$ and  $M_3$. 
Typically, in moment based methods, tensors  corresponding to $M_2$ and $M_3$ 
can be estimated directly using the second moment or third moment of the distribution, which can be estimated 
efficiently using the provided data samples. 
In our problem, however, the block-diagonal entries of $\M_2$ and $\M_3$ 
cannot be directly computed from these sample moments. 
For example, the expected value of a diagonal entry at $j$-th coordinate is 
$\E[x  x^T]_{j,j} = \E[x_{j}] = \sum_{q\in[\k]} \wt_q \Pi_{j,q}$, 
where as the corresponding entry for $\M_2$ is $(\M_2)_{j,j} = \sum_{q\in[\k]} \wt_q (\Pi_{j,q})^2$. 

To recover these unknown $\ell\times\ell$ block-diagonal entries of $\M_2$, we use an alternating minimization algorithm. Our algorithm writes $M_2$ in a bi-linear form and solves for each factor of the bi-linear form using the computed off-diagonal blocks of $M_2$. We then prove that this algorithm exactly recovers the  missing entries when we are given the exact second moment. For estimating $M_3$,  we reduce the problem of estimating unknown block-diagonal entries of $\M_3$ to a least squares problem that can be solved efficiently.

Concretely, to get a consistent estimate of $\M_2$, we pose it as a matrix completion problem, where 
we use the off-block-diagonal entries of the second moment, 
which we know are consistent, to estimate the missing entries. 
Precisely, let \vspace*{-5pt}
 $$\Omega_2 \;\equiv\;  \Big\{\,(i,j) \subseteq [\ell\n]\times [\ell\n] \,|\, \lceil \frac{i}{\ell} \rceil \neq\lceil \frac{j}{\ell}\rceil  \,\Big\}, \vspace*{-5pt}$$
 be the indices of the off-block-diagonal entries, and define 
a masking operator as: 
\begin{eqnarray}
	\label{eq:defcP}
	 	\cP_{\Omega_2}(A)_{i,j} &\equiv& \left\{ \begin{array}{rl} A_{i,j}\;, & \text{ if } (i,j)\in\Omega_2  \;,\\ 0\;, & \text{ otherwise . } \end{array}\right. \vspace*{-5pt}
\end{eqnarray}
Now, using the fact that $\M_2$ has rank at most $r$, we find a rank-$\r$ estimate that explains the off-block-diagonal entries  
using an alternating minimization algorithm defined in Section \ref{sec:algorithm}. \vspace*{-5pt}
\begin{eqnarray}
	\hM_2 &\equiv& \matrixcompletion\left(\frac{2}{|\cS|} \sum_{t\in [|\cS|/2]} x_t x_t^T ,\Omega_2, r, T\right)\;, \label{eq:defcp2}\vspace*{-5pt}
\end{eqnarray}
where $\{x_1, \dots, x_{|\cS|}\}$ is the set of observed samples, and $T$ is the number of iterations. 
We use the first half of the samples to estimate $M_2$ and the rest to estimate the third-order tensor. 

Similarly for the tensor $\M_3$, the sample third moment does not converge to $\M_3$. However, the off-block diagonal  entries do converge to the corresponding entries of $\M_3$. That is, let 
 $$\Omega_3 \;\equiv\; \Big\{\,(i,j,k) \subseteq [\ell\n]\times [\ell\n]\times[\ell\n] \,|\, \lceil \frac{i}{\ell} \rceil \neq \lceil \frac{j}{\ell} \rceil \neq \lceil \frac{k}{\ell} \rceil \neq \lceil \frac{i}{\ell} \rceil \,\Big\},\vspace*{-5pt}$$
 be the indices of the off-block-diagonal entries, and define the following masking operator: 
\begin{eqnarray}
	\label{eq:defcP3}
	 	\cP_{\Omega_3}(A)_{i,j,k} &\equiv& \left\{ \begin{array}{rl} A_{i,j,k}\;, & \text{ if } (i,j,k)\in\Omega_3 \;,\\ 0\;, & \text{ otherwise . } \end{array}\right. \vspace*{-5pt}
\end{eqnarray}
Then, we have consistent estimates for $\cP_{\Omega_3}(\M_3)$ from the sample third moment. 

Now, in the case of $M_3$, we do not explicitly compute $\M_3$. 
Instead, we estimate a $\k\times\k\times\k$ dimensional tensor $\tG \equiv M_3[\hU_{M_2}\hSigma_{M_2}^{-1/2}, \hU_{M_2}\hSigma_{M_2}^{-1/2}, \hU_{M_2}\hSigma_{M_2}^{-1/2}]$ 
(cf. Theorem~\ref{thm:main_comps}), 
using a least squares formulation that uses only off-diagonal blocks of $P_\Omega(M_3)$. That is, \vspace*{-5pt}
\begin{eqnarray*}
	\hG &\equiv& \tensorLS\Big(\frac{2}{|\cS|} \sum_{t=1+|\cS|/2}^{|\cS|} x_t\otimes x_t\otimes x_t ,\Omega_3,\hU_{M_2},\hSigma_{M_2}  \Big) \;,\vspace*{-5pt}
\end{eqnarray*}
where $\hM_2=\hU_{M_2}\hS_{M_2}\hU_{M_2}^T$ is the singular value decomposition of 
the rank-$r$ matrix $\hM_2$. 
After estimation of $\hG$, similar to Theorem~\ref{thm:main_comps}, we use the whitening and tensor decomposition to estimate $\Pi, W$. 
See Algorithm~\ref{algo:main} for a pseudo-code of our approach. 

{\bf Remark}: Note that we use a new set of $|\cS|/2$ samples to estimate the third moment. 
This sub-sampling helps us in our analysis, as it ensures independence of the samples 
$x_{|\cS|/2+1}, \dots, x_{|\cS|}$ from the output of the alternating minimization step \eqref{eq:defcp2}. 

\begin{algorithm}[t!]
\caption{{\bf Spectral-Dist:} Moment method for Mixture of Discrete Distribution}
\label{algo:main}
\begin{algorithmic}[1]
  \STATE Input: Samples $\{x_t\}_{t\in\cS}$
\STATE $\hM_2\leftarrow \matrixcompletion\left( \left(\frac{2}{|\cS|} \sum_{t\in [|\cS|/2]} x_t x_t^T\right) ,\Omega_2, r, T \right)$ \hfill(see Algorithm~\ref{algo:altmin})
\STATE Compute eigenvalue decomposition of $\hM_2=\hU_{M_2} \hSigma_{M_2} \hU_{M_2}^T$
\STATE $\hG\leftarrow \tensorLS\left( \left(\frac{2}{|\cS|}\sum_{t=|\cS|/2+1}^{|\cS| }x_t\otimes x_t\otimes x_t\right), \Omega_3, \hU_{M_2}, \hsm\right)$ \hfill (see Algorithm~\ref{algo:ls})
\STATE Compute a rank-$r$ orthogonal tensor decomposition $\sum_{q\in [r]} \hlambda_{q}^G (\hv^G_q\otimes \hv^G_q \otimes \hv^G_q)$ of $\hG$, using Robust Power-method of \cite{AGHKT12}
\STATE Output: $\hPi=\hU_{M_2} \hSigma_{M_2}^{1/2} \hV^G \hLambda^G$, $\hW=(\hLambda_q^G)^{-2}$, where $(\hV^G)^T=[\hv^G_1\ \dots \ \hv^G_r]$ 
\end{algorithmic}
\end{algorithm}

The next theorem shows that the moment matching approach (Algorithm~\ref{algo:main}) is consistent. 
Let $\hW=\diag([\hwt_1, \dots, \hwt_r])$ and $\hPi=[\hprb_1, \dots, \hprb_r]$ denote the estimates 
obtained using Algorithm~\ref{algo:main}. 
Also, let $\mu$ denote the block-incoherence 
of $M_2=\Pi W \Pi^T$ 
as defined in \eqref{eq:defincoherence}. 

\begin{theorem}
	\label{thm:consistency}
	Assume that the sample second and the third moments are exact, i.e., \\
	$\cP_{\Omega_2}(\frac{2}{|\cS|}\sum_{t\in[|\cS|/2]} x_tx_t^T) =\cP_{\Omega_2}(\M_2)$ 
	and $\cP_{\Omega_3}(\frac{2}{|\cS|}\sum_{t=|\cS|/2+1}^{|\cS|} x_t\otimes x_t\otimes x_t) = \cP_{\Omega_3}(\M_3)$. Also,  let $T=\infty$ for the  \matrixcompletion procedure and let $\n\,\geq \,{C\,\sigma_1(M_2)^5\mu^5 r^{3.5}}/{\sigma_r(M_2)^5}$, for a global constant $C>0$. Then, there exists a permutation $P$ over $[r]$ such that, for all $q\in[r]$, \vspace*{-5pt}
	\begin{eqnarray*}
		\pi_q = \widehat{\pi}_{P(q)}	\;\;\text{ and }\;\;\;\; w_q = \hwt_{P(q)} \;. \vspace*{-5pt}
	\end{eqnarray*}
\end{theorem}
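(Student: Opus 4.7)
The plan is to decompose the result into three clean stages that mirror the structure of Algorithm~\ref{algo:main}: (i) exact recovery of $M_2$ from its off-block-diagonal observations by \matrixcompletion, (ii) exact recovery of the whitened tensor $G$ from \tensorLS, and (iii) invoking Theorem~\ref{thm:main_comps} to recover $\Pi$ and $W$ up to permutation.

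\textbf{Stage 1 (exact matrix completion).} Since $M_2 = \Pi W \Pi^T$ has rank at most $r$ and the algorithm observes $\cP_{\Omega_2}(M_2)$ exactly, the first step is to prove that the alternating minimization iterates of \matrixcompletion, when run for $T=\infty$ iterations on a rank-$r$ factorization $M_2 = UV^T$, converge to a factorization of $M_2$ itself. The approach will follow the template of \cite{JNS13}: show that after an initial SVD-based warm start on $\cP_{\Omega_2}(M_2)$, the factors $\hU^t, \hV^t$ lie in a basin around $U_{M_2}, V_{M_2}$, and then establish a geometric contraction of the principal-angle distance $\mathrm{dist}(\hU^t,U_{M_2})$ across one update. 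The key obstacle, and genuinely the hardest step of the whole theorem, is that the missing pattern $\Omega_2^c$ is \emph{not} uniform random but consists of $n$ deterministic diagonal $\ell\times\ell$ blocks, so the usual RIP-type/sampling concentration inequalities of \cite{JNS13} must be replaced by deterministic inequalities driven by the block-incoherence $\mu$. This is exactly where the hypothesis $n\geq C\,\sigma_1(M_2)^5\mu^5 r^{3.5}/\sigma_r(M_2)^5$ enters: it ensures that, under $\mu$-block-incoherence, the ``block-mask'' operator $\cP_{\Omega_2}$ acts as a near-isometry on the tangent space of rank-$r$ matrices with factors incoherent to the block partition, which in turn yields the one-step contraction. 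Composing contractions and passing $T\to\infty$ yields $\hM_2=M_2$.

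\textbf{Stage 2 (exact tensor estimation).} Given $\hM_2=M_2$, the eigendecomposition step produces $\hU_{M_2}\hSigma_{M_2}\hU_{M_2}^T = U_{M_2}\Sigma_{M_2}U_{M_2}^T$; any residual orthogonal ambiguity in eigenspaces is absorbed by the downstream tensor decomposition, so without loss of generality we may take $\hU_{M_2}=U_{M_2}$ and $\hSigma_{M_2}=\Sigma_{M_2}$. The task is then to show that \tensorLS, which solves a least-squares problem for the $r\times r\times r$ tensor $G$ using the exactly observed off-block-diagonal third-moment entries $\cP_{\Omega_3}(M_3)$, returns $\hG=G$ as defined in Theorem~\ref{thm:main_comps}. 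Writing the linear map $G\mapsto \cP_{\Omega_3}\!\bigl(M_3[U_{M_2}\Sigma_{M_2}^{-1/2},U_{M_2}\Sigma_{M_2}^{-1/2},U_{M_2}\Sigma_{M_2}^{-1/2}]^{-1}\cdot G\bigr)$ (inverting the whitening in the coordinate system of $\Omega_3$) one checks that this map has trivial kernel: any $r\times r\times r$ tensor whose unwhitened image vanishes on every off-block-diagonal entry must vanish, because the rank-$r$ factors $\Pi$ remain $\mu$-block-incoherent by Stage~1 and each triple of distinct blocks provides enough linearly independent equations under the same $n$ lower bound. Exact off-block-diagonal data therefore forces $\hG=G$ as the unique least-squares solution.

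\textbf{Stage 3 (parameter recovery).} With $\hU_{M_2},\hSigma_{M_2},\hG$ all equal to their population counterparts from Theorem~\ref{thm:main_comps}, the orthogonal tensor decomposition routine of \cite{AGHKT12} applied to $\hG=G$ returns an orthonormal set of eigenvectors $\{\hv^G_q\}$ and eigenvalues $\{\hlambda^G_q\}$ that coincide with $\{v^G_q\},\{\lambda^G_q\}$ up to simultaneous permutation and sign; the sign ambiguity is resolved by the requirement that the recovered weights $\hwt_q=(\hlambda^G_q)^{-2}$ and probabilities $\hprb_q=U_{M_2}\Sigma_{M_2}^{1/2}\hv^G_q\hlambda^G_q$ be nonnegative and sum correctly. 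Substituting into the formulas of Algorithm~\ref{algo:main} and applying Theorem~\ref{thm:main_comps} directly yields a permutation $P$ on $[r]$ with $\hprb_{P(q)}=\prb_q$ and $\hwt_{P(q)}=\wt_q$, completing the proof.
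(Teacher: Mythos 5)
Your three-stage plan matches the paper's proof exactly: invoke the matrix-completion guarantee (Theorem~\ref{thm:matrixam}) with zero noise to get $\hM_2=M_2$, invoke the tensor-least-squares guarantee (Theorem~\ref{thm:tensorls}, via the near-isometry of $\hA$ in Lemma~\ref{lemma:tls_ha}) with zero sampling error to get $\hG=\tG$, and then apply the orthogonal tensor decomposition of \cite{AGHKT12}. One place you are slightly looser than the paper: the ``without loss of generality $\hU_{M_2}=U_{M_2}$'' reduction is not literally valid when $M_2$ has repeated eigenvalues, since the eigenbasis the algorithm computes may be an arbitrary rotation of the canonical one. The paper sidesteps this by defining $R_3=\hSigma_{M_2}^{-1/2}\hU_{M_2}^T\Pi W^{1/2}$ in terms of the \emph{computed} $\hU_{M_2},\hSigma_{M_2}$, proving $R_3R_3^T=I$ from $\hM_2=M_2$, and then observing that the final output $\hPi=\hU_{M_2}\hSigma_{M_2}^{1/2}R_3 W^{-1/2}=\hU_{M_2}\hU_{M_2}^T\Pi$ depends only on the (basis-invariant) projector $\hU_{M_2}\hU_{M_2}^T$, whose range equals ${\rm range}(\Pi)$; you should make this invariance explicit rather than invoking WLOG, since otherwise Theorem~\ref{thm:main_comps} as stated does not directly apply to the algorithm's output.
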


We now provide a finite sample version of the above theorem. 

\begin{theorem}[Finite sample bound] 
	\label{thm:finite}
	There exists 
	positive  constants $C_0$, $C_1$, $C_2$, $C_3$ and 
	a permutation $P$ on $[r]$ such that
	if $n\geq {C_0 \,\sigma_1(M_2)^{4.5}\mu^4 r^{3.5}}/{\sigma_r(M_2)^{4.5}}$ then  
	for any $\varepsilon_M\leq \frac{C_1}{\sqrt{r+\ell}}$ and for a large enough sample size: \vspace*{-5pt}
	\begin{eqnarray*}
		|\cS| &\geq& C_2\,\frac{\mu^6 \,r^6}{\wmin}\,\frac{ \sigma_1(M_2)^6 n^3 }{\sigma_r(M_2)^9 } \frac{\log(\n/\delta)}{\varepsilon_M^2}\;,\vspace*{-5pt}
	\end{eqnarray*}
	the following holds for all $q\in [r]$, with probability at least $1-\delta$:\vspace*{-5pt} 
	\begin{eqnarray*}
		|\hwt_{P(q)} - \wt_q |&\leq& \varepsilon_M \;, \text{  }\vspace*{-5pt}\\
		\|\hprb_{P(q)} - \prb_q\| &\leq& \varepsilon_M \sqrt{\frac{r\,\wmax\, \sigma_1(M_2)}{w_{min}}}   \;.\vspace*{-5pt}
	\end{eqnarray*} 
	Further,  \Spectral runs in 
	time $\poly\big(\, \n,\ell,\k,1/\varepsilon, \log(1/\delta), 1/\wmin,\sigma_1(\M_2)/\sigma_r(\M_2) \,\big)$. 
\end{theorem}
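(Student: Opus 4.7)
The plan is to turn the exact‐moment analysis of Theorem~\ref{thm:consistency} into a finite‐sample statement by controlling four perturbation errors in sequence: (i)~the statistical error between $\cP_{\Omega_2}(\frac{2}{|\cS|}\sum x_tx_t^T)$ and $\cP_{\Omega_2}(M_2)$ and the analogous third‐moment error; (ii)~the output error $\|\hM_2-M_2\|$ of \matrixcompletion; (iii)~the whitened‐tensor error $\|\hG-G\|$ produced by \tensorLS; and (iv)~the tensor‐decomposition error of the robust power method of \cite{AGHKT12}, which is then pushed through the explicit formula $\hPi=\hU_{M_2}\hSigma_{M_2}^{1/2}\hV^G\hLambda^G$, $\hW=(\hLambda^G)^{-2}$. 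The condition $n\ge C_0\,\sigma_1(M_2)^{4.5}\mu^4 r^{3.5}/\sigma_r(M_2)^{4.5}$ is the same incoherence/scale condition that the alternating‐minimization analysis of \cite{JNS13} requires, adapted in the text to the block‐diagonal missing pattern $\Omega_2$.

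\paragraph{Step~1: sample concentration.}
First I would apply a matrix Bernstein inequality (and its tensor analogue) to the i.i.d.\ zero‐mean summands $x_tx_t^T-\E[x_tx_t^T]$ restricted to $\Omega_2$, and similarly for the third moment on $\Omega_3$. Each $x_t$ lies in $\{0,1\}^{\ell n}$ with exactly $n$ ones, so the summands have spectral norm at most $O(n)$ and bounded variance. This yields, with probability at least $1-\delta$, bounds of the form
\begin{equation*}
\big\|\cP_{\Omega_2}\big(\tfrac{2}{|\cS|}\textstyle\sum_t x_tx_t^T-M_2\big)\big\|\;\le\;\varepsilon_2,\qquad
\big\|\cP_{\Omega_3}\big(\tfrac{2}{|\cS|}\textstyle\sum_t x_t^{\otimes 3}-M_3\big)\big\|\;\le\;\varepsilon_3,
\end{equation*}
where $\varepsilon_2,\varepsilon_3$ scale as $\sqrt{n^3\log(n/\delta)/|\cS|}$ up to constants. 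Choosing the sample‐size lower bound in the theorem forces these to be much smaller than $\sigma_r(M_2)/\poly(r,\mu,\sigma_1/\sigma_r)$, which is what the later perturbation bounds will require.

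\paragraph{Step~2: matrix completion and whitening.}
Plugging Step~1 into the noisy‐alt‐min analysis established earlier in the paper (the block‐diagonal variant of \cite{JNS13}), I would conclude $\|\hM_2-M_2\|\le C\,r\,\mu^2\varepsilon_2/\sigma_r(M_2)$, say. Wedin's $\sin\Theta$ theorem and standard perturbation bounds for $\Sigma^{\pm 1/2}$ then give $\|\hU_{M_2}\hSigma_{M_2}^{1/2}-U_{M_2}\Sigma_{M_2}^{1/2}Q\|$ and $\|\hU_{M_2}\hSigma_{M_2}^{-1/2}-U_{M_2}\Sigma_{M_2}^{-1/2}Q\|$ bounded by $\varepsilon_W\sim r\mu^2\varepsilon_2\,\sigma_1(M_2)/\sigma_r(M_2)^{5/2}$ for some $r\times r$ orthogonal $Q$ (the orthogonality of $Q$ can be absorbed into the permutation $P$ at the end because the orthogonal tensor decomposition is covariant under $Q$).

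\paragraph{Step~3: least‐squares tensor estimate and tensor decomposition.}
For the third moment, the \tensorLS step solves, on the off‐block‐diagonal support $\Omega_3$, a least‐squares problem whose design operator has minimum singular value bounded below by $1/\poly(r,\sigma_1/\sigma_r)$ once $n$ is sufficiently large (this is where the lower bound on $n$ in the hypothesis is again used). Combining the sample error $\varepsilon_3$ with the whitening error $\varepsilon_W$ through the trilinear operation $M_3[\cdot,\cdot,\cdot]$, I get $\|\hG-G'\|\le \varepsilon_G$ where $G'=Q^T G\,[\,\cdot\,]$ is the orthogonal rotation of the true whitened tensor; here $\|G\|\le 1/\sqrt{\wmin}$. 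Feeding $\hG$ into the robust tensor power method of \cite{AGHKT12} gives eigenvalue/eigenvector errors $|\hlambda^G_q-\lambda^G_{P(q)}|$ and $\|\hv^G_q-Qv^G_{P(q)}\|$ bounded by $O(\varepsilon_G\,\sqrt{r})$ for some permutation $P$, provided $\varepsilon_G\le C_1/\sqrt{r+\ell}$ — this is precisely the hypothesis $\varepsilon_M\le C_1/\sqrt{r+\ell}$ translated through the chain of errors.

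\paragraph{Step~4: assembling the bounds.}
Finally, substituting into $\hpi_{P(q)}=\hU_{M_2}\hSigma_{M_2}^{1/2}\hv^G_{P(q)}\hlambda^G_{P(q)}$ and $\hw_{P(q)}=(\hlambda^G_{P(q)})^{-2}$ and expanding each factor as its true counterpart plus its perturbation, all cross terms are higher‐order in $\varepsilon_G$; the leading‐order error becomes
\begin{equation*}
|\hw_{P(q)}-w_q|\;\lesssim\;\varepsilon_M,\qquad
\|\hpi_{P(q)}-\pi_q\|\;\lesssim\;\varepsilon_M\sqrt{r\,\wmax\,\sigma_1(M_2)/\wmin},
\end{equation*}
after absorbing $\sigma_1(M_2)^{1/2}$ factors from the $\hSigma_{M_2}^{1/2}$ rescaling and $w_q^{-3/2}$ factors from differentiating $\lambda\mapsto\lambda^{-2}$ at $\lambda=\sqrt{w_q}$. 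Choosing the sample size as in the statement makes $\varepsilon_G$ (and hence $\varepsilon_M$) as small as desired. The polynomial running time follows because each of \matrixcompletion, one SVD, \tensorLS, and the robust power method is polynomial in the problem parameters.

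\paragraph{Main obstacle.}
The cleanest concentration/perturbation bounds are the first and last steps; the delicate part is Step~3. The rotation ambiguity $Q$ introduced by the SVD of $\hM_2$ must be tracked consistently into the whitened tensor so that $\hG$ is close to an orthogonal transform of $G$ (not merely close in some looser sense), and the least‐squares operator on $\Omega_3$ must be shown to be well‐conditioned with dependence only polynomial in $\sigma_1(M_2)/\sigma_r(M_2)$, $\mu$, $r$. This conditioning argument, together with propagating the $\hU_{M_2}\hSigma_{M_2}^{-1/2}$ error cubically through the trilinear map, is where the hypothesis $n\ge C_0\,\sigma_1(M_2)^{4.5}\mu^4 r^{3.5}/\sigma_r(M_2)^{4.5}$ is actually used and is the step requiring the most care.
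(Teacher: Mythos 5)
Your plan follows essentially the same four-step route the paper takes: concentration bounds for the sample second and third moments (Lemmas~\ref{lem:conc_mx}, \ref{lem:conc_tr}), plugging these into the noisy alternating-minimization guarantee (Theorem~\ref{thm:matrixam}), the least-squares tensor estimate (Theorem~\ref{thm:tensorls}), then the robust tensor power method of \cite{AGHKT12}, and finally assembling the error via a triangle-inequality decomposition of $\|\hPi-\Pi\|_2$. The order, the role of the hypothesis on $n$, and the role of the $\varepsilon_M \le C_1/\sqrt{r+\ell}$ threshold in guaranteeing the power method's hypotheses are all consistent with the paper's proof in Section~\ref{sec:prf_main}, so the overall structure is sound.

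The one place where your proposal is slightly awkward compared to the paper is exactly the ``main obstacle'' you flag: tracking the rotation $Q$ from the SVD of $\hM_2$ through the whitening map. The paper sidesteps this entirely by changing coordinates \emph{before} any perturbation argument: it defines the reference tensor $\tG$ in \eqref{eq:tensor_g} directly in terms of the \emph{empirical} whitener, i.e.\ $\tG=\sum_q w_q^{-1/2}(R_3 e_q)^{\otimes 3}$ with $R_3 = \hSigma_{M_2}^{-1/2}\hU_{M_2}^T\Pi W^{1/2}$. Then it proves (Remark~\ref{claim:tls_cl2}) that $R_3 R_3^T$ is within $2\varepsilon_M$ of the identity, so $\tG$ is automatically near an orthogonal tensor decomposition with no separate orthogonal $Q$ to chase. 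The \tensorLS\ subroutine is explicitly designed to target $\tG$, so the comparison $\|\hG-\tG\|$ is well defined without a rotation ambiguity, and the power method error bound (Theorem~\ref{thm:aghkt}) applies after one more triangle inequality replacing the near-orthogonal $R_3$ by its nearest orthogonal matrix $R$. This is the precise device that resolves the concern in your last paragraph; the rest of your argument then goes through as sketched.
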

\noindent Note that, the estimated $\hpi_i$'s and $\hwt_i$'s using Algorithm~\ref{algo:main} do not necessarily define a {\em valid} probability measure: 
they can take negative values and might not sum to one. 
We can process the estimates further to get a valid probability distribution, 
and show that the estimated mixture distribution is close in Kullback-Leibler divergence to the original one. 
Let $\varepsilon_w = C_3 \varepsilon_M/\sqrt{\wmin}$. 
We first set \vspace*{-5pt}
\begin{eqnarray*}
	\twt'_q &=& \left\{ \begin{array}{rl} 
		\hwt_q & \text{ if } \hwt_q \geq  \varepsilon_w \;,\\
		\varepsilon_w & \text{ if }  \hwt_q< \varepsilon_w\;, 
	\end{array}
	\right.\vspace*{-5pt}
\end{eqnarray*}
and set mixture weights $\twt_q=\twt'_q/\sum_{q'} \twt'_{q'}$. 
Similarly, let $\varepsilon_\pi=C_3 \varepsilon_M\sqrt{\frac{\sigma_1(M_2)\,r(1+\varepsilon_M\sigma_r(M_2))}{w_{min}}}$ and  set \vspace*{-5pt}
\begin{eqnarray*}
	\tpi'^{(j)}_{q,p} &=& \left\{ \begin{array}{rl} 
		\hpi^{(j)}_{q,p} & \text{ if } \hpi^{(j)}_{q,p}\geq  \varepsilon_\pi \;,\\
		\varepsilon_\pi & \text{ if }  \hpi^{(j)}_{q,p} < \varepsilon_\pi\;, 
	\end{array}
	\right.\vspace*{-5pt}
\end{eqnarray*}
for all $q\in[\k]$, $p\in[\ell]$, and  $j\in[\n]$, 
and normalize it to get valid distributions $\tpi^{(j)}_{q,p}=\tpi'^{(j)}_{q,p}/\sum_{p'}\tpi'^{(j)}_{q,p'}$. 
Let $\hX$ denote a random vector in $\{0,1\}^{\ell\n}$ obtained by 
first selecting a random type $q$ with probability $\twt_q$ 
and then drawing from a random vector according to $\tpi_q$.
\begin{coro}[KL-divergence bound]
	\label{coro:KL}
	Under the hypotheses of Theorem \ref{thm:finite}, 
	there exists a positive constant $C$ such that 
	if $|\cS|\geq C n^7 r^7  \mu^6 \sigma_1(M_2)^7 \ell^{12} \wmax \log( n /\delta)  /(\sigma_r(M_2)^9 \eta^6 \wmin^2)$, then   
	\Spectral with the above post-processing produces 
	a $\k$-mixture distribution $\hX$ 
	that, with probability at least $1-\delta$, satisfies : $D_{KL}(X||\hX) \leq \eta $. 
\end{coro}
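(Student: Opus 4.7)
The plan is to reduce the bound $D_{KL}(X\|\hat X)\le\eta$ to the entry-wise parameter error bounds from Theorem~\ref{thm:finite} by combining three standard tools: (i) the log-sum inequality, which upper bounds the KL divergence of a mixture by a KL of weights plus an averaged KL of components, namely $D_{KL}(X\|\hat X) \le D_{KL}(w\|\tilde w) + \sum_q w_q\,D_{KL}(\pi_q\|\tilde\pi_q)$; (ii) the product-distribution identity $D_{KL}(\pi_q\|\tilde\pi_q) = \sum_j D_{KL}(\pi^{(j)}_q\|\tilde\pi^{(j)}_q)$, valid because both $\pi_q$ and $\tilde\pi_q$ factor across coordinates; and (iii) the chi-squared upper bound $D_{KL}(P\|Q)\le \sum_x (P(x)-Q(x))^2/Q(x)$, whose denominator is controlled by the uniform lower bounds $\tilde w_q \ge \varepsilon_w/(1+O(r\varepsilon_w))$ and $\tilde\pi^{(j)}_{q,p}\ge \varepsilon_\pi/(1+O(\ell\varepsilon_\pi))$ that the clipping-and-normalization step enforces by construction.

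The execution proceeds in three steps. First, I would verify that the constants $C_3$ defining $\varepsilon_w$ and $\varepsilon_\pi$ are large enough so that, on the high-probability event of Theorem~\ref{thm:finite}, the unprocessed estimates already satisfy $|\hat w_q-w_q|\le \varepsilon_w/2$ and $|\hat\pi^{(j)}_{q,p}-\pi^{(j)}_{q,p}|\le \varepsilon_\pi/2$ entry-wise; the $\pi$-part uses $\|\hat\pi_{P(q)}-\pi_q\|_2\le \varepsilon_M\sqrt{rw_{\max}\sigma_1(M_2)/w_{\min}}$ which is exactly the scale built into $\varepsilon_\pi$. Next, a short perturbation calculation shows that clipping plus normalization adds at most $|\tilde w_q-w_q|=O(r\varepsilon_w)$ and $\|\tilde\pi^{(j)}_q-\pi^{(j)}_q\|_2^2 = O(\ell\varepsilon_\pi^2)$, so that summing over coordinates yields $\sum_j\|\tilde\pi^{(j)}_q-\pi^{(j)}_q\|_2^2 = O(n\ell\,\varepsilon_\pi^2)$. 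Plugging these into the chi-squared bound and then into the mixture decomposition gives
\[ D_{KL}(X\|\hat X)\;\lesssim\; \frac{\|w-\tilde w\|_2^2}{\varepsilon_w} + \sum_q w_q\,\frac{\sum_j\|\pi^{(j)}_q-\tilde\pi^{(j)}_q\|_2^2}{\varepsilon_\pi}\;\lesssim\; r\varepsilon_w + n\ell\,\varepsilon_\pi. \]

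Finally, substituting $\varepsilon_w\asymp \varepsilon_M/\sqrt{w_{\min}}$ and $\varepsilon_\pi\asymp \varepsilon_M\sqrt{r\sigma_1(M_2)/w_{\min}}$, the requirement $D_{KL}\le\eta$ becomes an upper bound on $\varepsilon_M$ polynomial in $\eta$ and inverse-polynomial in $n,\ell,r,\sigma_1(M_2),1/w_{\min}$; substituting this choice into the sample complexity of Theorem~\ref{thm:finite} (which scales as $1/\varepsilon_M^2$) produces the polynomial bound stated in the corollary. The main technical obstacle I anticipate is handling the coordinates where the true $\pi^{(j)}_{q,p}$ is essentially zero: there the raw estimate may be negative and clipping it up to $\varepsilon_\pi$ introduces error of order $\varepsilon_\pi$ per entry, and the key point is to show this contribution sums to $O(n\ell\,\varepsilon_\pi^2)$ across all coordinates rather than being amplified by the subsequent normalization. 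A secondary issue is verifying that the normalization denominators $\sum_{p'}\tilde\pi'^{(j)}_{q,p'}$ stay within $1\pm o(1)$ so that the lower bound $\tilde\pi^{(j)}_{q,p}\gtrsim \varepsilon_\pi$ survives renormalization; this requires $\ell\varepsilon_\pi = o(1)$, which is consistent with the stated sample size assumption.
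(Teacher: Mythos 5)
Your proof is correct in outline, but it takes a genuinely different route from the paper's. The paper proves this corollary by quoting a black-box result of Feldman, O'Donnell, and Servedio (restated as ``Theorem 12'' in the appendix), which bounds $D_{\rm KL}(X\|\hat X) \le 12 n\ell^3\varepsilon_\pi^{1/2} + nk\varepsilon_{\rm min}\log(\ell/\varepsilon_\pi) + \varepsilon_w^{1/3}$ given the entry-wise parameter error guarantees and the truncate-and-renormalize post-processing; the exponents $\ell^{12}$ and $\eta^6$ in the stated sample complexity come directly from the $\varepsilon_\pi^{1/2}$ and $\varepsilon_w^{1/3}$ dependencies in that bound (forcing $\varepsilon_\pi\sim\eta^2/(n^2\ell^6)$ and $\varepsilon_w\sim\eta^3$). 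You instead re-derive the KL bound from scratch via the log-sum inequality for mixtures, tensorization across coordinates, and the $\chi^2$ upper bound with the clipping floors controlling the denominators, arriving at $D_{\rm KL}\lesssim \mathrm{poly}(r)\,\varepsilon_w + n\,\mathrm{poly}(\ell)\,\varepsilon_\pi$. This is a valid and more self-contained argument, and it actually buys you a \emph{better} sample complexity than the corollary states — linear rather than square- and cube-root dependence on the parameter errors — which of course still implies the stated bound since $|\cS|$ only appears as a lower bound. Two small quantitative caveats in your sketch: after normalization the per-entry error is $O(\ell\varepsilon_\pi)$ rather than $O(\varepsilon_\pi)$, so the coordinate-wise squared error should be $O(\ell^2\varepsilon_\pi^2)$ not $O(\ell\varepsilon_\pi^2)$, and similarly $\|w-\tilde w\|_2^2 = O(r^2\varepsilon_w^2)$ once the renormalization of $\tilde w'$ is accounted for; neither changes the conclusion, since these only shift the polynomial exponents, which are anyway weaker in your bound than in the paper's. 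Your identified obstacles (entries with $\pi^{(j)}_{q,p}\approx 0$ being clipped up to $\varepsilon_\pi$, and the normalizer staying near one, which requires $\ell\varepsilon_\pi = o(1)$) are exactly the right things to check, and both are handled under the stated sample-size hypothesis.
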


Moreover, we can show that the ``type'' of each data point can also be recovered accurately. 	
\begin{coro}[Clustering bound]
	\label{coro:cluster} 
	Define: \vspace*{-5pt}
	\begin{eqnarray*}
		 \tepsilon &\equiv & \max_{i,j\in[\k]} \,\left\{ \frac{\|\pi_i-\pi_j\|^2 -  2\|\Pi\|_F \sqrt{2 \log(\k/\delta)}}{(\|\pi_i-\pi_j\| + 2 \sqrt{2\log(\k/\delta)})\k^{1/2} } \right\} \;. \vspace*{-5pt}
	\end{eqnarray*}
	Under the hypotheses of Theorem \ref{thm:finite}, there exists a positive numerical constant $C$ such that 
	if $\tepsilon>0$ and $|\cS| \geq C \mu^6 r^7 n^3 \sigma_1(M_2)^7 \wmax \log(n/\delta)/ (\wmin^2 \sigma_r(M_2)^9 \tepsilon^2)$, then  
	with probability at least $1-\delta$, 
	the distance based clustering algorithm of \cite{AK01} 	
	computes a correct clustering of the samples. 
\end{coro}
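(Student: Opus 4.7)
The plan is to combine the parameter recovery guarantee of Theorem~\ref{thm:finite} with a sample-level concentration argument for Euclidean distances under the product structure, and to feed the resulting estimates into the distance-based clustering routine of~\cite{AK01}.

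First, I would invoke Theorem~\ref{thm:finite} with $\varepsilon_M$ chosen small enough---on the order of $\tepsilon\sqrt{\wmin/(r\,\wmax\,\sigma_1(M_2))}$---so that the column-wise error $\|\hprb_{P(q)}-\prb_q\|\leq \varepsilon_M\sqrt{r\,\wmax\,\sigma_1(M_2)/\wmin}$ produced by Algorithm~\ref{algo:main} is a strict fraction of the inter-cluster separation $\min_{q\neq q'}\|\prb_q-\prb_{q'}\|$. The sample-size requirement $|\cS|\geq C\mu^6 r^7 n^3\sigma_1(M_2)^7\wmax\log(n/\delta)/(\wmin^2\sigma_r(M_2)^9\tepsilon^2)$ stated in the corollary is calibrated so that this choice of $\varepsilon_M$ is admissible under Theorem~\ref{thm:finite} and so the event succeeds with probability at least $1-\delta/2$.

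Second, I would control pairwise distance comparisons for each sample. Conditioning on the type $q$ of $x_t$, use the identity
\[
\|x_t-\prb_{q'}\|^2-\|x_t-\prb_q\|^2 \;=\; \|\prb_q-\prb_{q'}\|^2+2\langle x_t-\prb_q,\,\prb_q-\prb_{q'}\rangle,
\]
and bound the cross term via Hoeffding's inequality: the $n$ blocks $x_{t,i}-\prb_q^{(i)}$ are independent mean-zero and supported on vectors of norm at most two, so $\langle x_t-\prb_q,v\rangle$ is a sum of $n$ independent bounded summands with $\sum_i(\mathrm{range}_i)^2\leq 4\|v\|^2$. After applying this with $v=\prb_q-\prb_{q'}$ and taking a union bound over the at most $\binom{r}{2}$ pairs and $|\cS|$ samples, one obtains a fluctuation that is controlled by $\|\Pi\|_F\sqrt{2\log(r/\delta)}$---matching the numerator of $\tepsilon$---provided the AK01 routine first restricts distances to the $r$-dimensional subspace spanned by the estimated centers $\hPi$, which is what makes the deviation scale with $\|\Pi\|_F$ rather than with $\sqrt{n}$.

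Third, I would close the loop via the triangle inequality: replacing $\prb_q$ by $\hprb_{P(q)}$ in the distance comparison incurs an additive error bounded by the Step~1 estimate, which has been arranged to be dominated by $\tepsilon$. Combining Steps~2 and~3, the quantity $\|x_t-\hprb_{P(q')}\|-\|x_t-\hprb_{P(q)}\|$ is strictly positive for every $q'\neq q$ as soon as the condition $\tepsilon>0$ holds, so the nearest estimated center is the correct one for every sample. The main obstacle will be the bookkeeping in Step~2: securing the precise $r^{1/2}$ and $\|\Pi\|_F$ factors appearing in $\tepsilon$ requires working with projected distances on the $r$-dimensional subspace of $\hPi$ and tracking how the subspace perturbation from Theorem~\ref{thm:finite} propagates through the concentration bound. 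Once the projected-distance concentration is in place, the union bound over $|\cS|$ samples and $r$ pairs assembles into the stated $1-\delta$ success probability.
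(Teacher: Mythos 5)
Your proposal follows the paper's general template (recover $\hPi$ via Theorem~\ref{thm:finite}, then combine with a concentration bound and a triangle inequality), but Step~2 as written does not produce the $\|\Pi\|_F$ scaling that the definition of $\tepsilon$ encodes, and it analyzes a different quantity than the one the paper actually controls. Concretely: Hoeffding applied to the scalar cross term $\<x_t-\prb_q,\,v\>$ with $v=\prb_q-\prb_{q'}$ and $\sum_i(\mathrm{range}_i)^2\le 4\|v\|^2$ gives a fluctuation of order $\|v\|\sqrt{\log(\cdot)}=\|\prb_q-\prb_{q'}\|\sqrt{\log(\cdot)}$, not $\|\Pi\|_F\sqrt{\log(\cdot)}$. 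Projecting onto the column space of $\hPi$ does not change this, since the cross term already lives there (up to the subspace perturbation you account for separately). So the claim "one obtains a fluctuation controlled by $\|\Pi\|_F\sqrt{2\log(\k/\delta)}$" is asserted but never derived from the Hoeffding step, and that $\|\Pi\|_F$ factor is exactly what the numerator of $\tepsilon$ requires.

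The paper gets the $\|\Pi\|_F$ factor by controlling a genuinely different object: the vector-valued projected deviation $\|\hPi^T(x_i-\E[x_i])\|$ in $\R^\k$. Each of its $\k$ coordinates is $\hpi_a^T(x_i-\E[x_i])$, which by concentration is $\le \|\hpi_a\|\sqrt{2\log(\k/\delta)}$ with probability $1-\delta/\k$; a union bound over the $\k$ coordinates and then taking the $\ell_2$ norm produces $\|\hPi\|_F\sqrt{2\log(\k/\delta)}$, which is converted to the $\|\Pi\|_F$ form using $\|\hpi_a-\pi_a\|\le\tepsilon$. The paper also needs a matching lower bound on the projected separation, namely $\|\hPi^T(\pi_a-\pi_b)\|\ge\|\pi_a-\pi_b\|^2-\sqrt{\k}\,\tepsilon\,\|\pi_a-\pi_b\|$, which it gets by peeling off the $a$-th and $b$-th coordinates of $\Pi^T(\pi_a-\pi_b)$; your proposal never lower-bounds the projected separation because your identity lives in the ambient space. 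Finally, the argument in the paper is structured around pairwise sample distances ($\|\hPi^T(x_i-x_j)\|$ versus $\|\hPi^T(x_i-x_k)\|$, as the AK01 routine uses), so the sufficient condition is that the projected center separation exceed $4\max_i\|\hPi^T(x_i-\E[x_i])\|$; your distance-to-estimated-center decomposition would need to be reconciled with this. To close the gap, replace your Step~2 scalar Hoeffding bound with the per-coordinate concentration for $\hpi_a^T(x_i-\E[x_i])$, combine over the $\k$ coordinates to bound $\|\hPi^T(x_i-\E[x_i])\|$, and add the projected-separation lower bound; then the stated $\tepsilon$ and the $4\times$ margin drop out of the union bound.
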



%
%
\section{Algorithm}
\label{sec:algorithm}

\begin{algorithm}[t!]
\caption{\matrixcompletion: Alternating Minimization for Matrix Completion}
\label{algo:altmin}
\begin{algorithmic}[1]
\STATE Input: $S_2=\frac{2}{|\S|}\sum_{t\in \{1,\ldots, |\cS|/2\} }x_tx_t^T$, $\Omega_2$, $r$, $T$
\STATE Initialize $\ell\n\times\r$ dimensional matrix $U_0\leftarrow $ top-$r$ eigenvectors of $\cP_{\Omega_2}(S_2)$
\FORALL{$\tau=1 $ to $T-1$}
\STATE $\hU_{\tau+1}=\arg\min_{U}\|\cP_{\Omega_2}(S_2)-\cP_{\Omega_2}(UU_\tau^T)\|_F^2$
\STATE $[U_{\tau+1} R_{\tau+1}]={\rm QR}(\hU_{t+1})$ \hspace{6cm} \hfill (standard QR decomposition)
\ENDFOR
\STATE Output: $\hM_2=(\hU_{T})(U_{T-1})^T$
\end{algorithmic}
\end{algorithm}

\begin{algorithm}[t!]
\caption{\tensorLS: Least Squares method for Tensor Estimation}
\label{algo:ls}
\begin{algorithmic}[1]
\STATE Input: $S_3=\frac{2}{|\S|}\sum_{t\in \{|\cS|/2+1,\ldots,|\cS|\}}(x_t\otimes x_t\otimes x_t) $, $\Omega_3$, $\hU_{M_2}$, $\hsm$
\STATE Define operator $\hnu: \R^{r\times r\times r}\rightarrow \R^{\ell n\times \ell n\times \ell n}$ as follows
\begin{equation}
  \label{eq:hnu}
  \hnu_{ijk}(Z)=\begin{cases}\sum_{abc}Z_{abc}(\hU_{M_2}\hsm^{1/2})_{ia}(\hU_{M_2}\hsm^{1/2})_{jb}(\hU_{M_2}\hsm^{1/2})_{kc},& \text{ if } \lceil \frac{i}{\ell} \rceil\neq \lceil \frac{j}{\ell} \rceil\neq \lceil \frac{k}{\ell} \rceil\neq \lceil \frac{i}{\ell} \rceil, \\
0, & \mbox{otherwise}.\end{cases}
\end{equation}
\STATE Define $\hA: \R^{r\times r\times r}\rightarrow \R^{r\times r\times r}$ s.t. $\hA(Z)=\hnu(Z)[\hU_{M_2}\hsm^{-1/2}, \hU_{M_2}\hsm^{-1/2}, \hU_{M_2}\hsm^{-1/2}]$
\STATE Output: $\hG=\arg\min_{Z} \|\hA(Z)-\cP_{\Omega_3}(S_3)[\hU_{M_2}\hsm^{-1/2}, \hU_{M_2}\hsm^{-1/2}, \hU_{M_2}\hsm^{-1/2}]\|_F^2$
\end{algorithmic}
\end{algorithm}
In this section, we describe the proposed approach in detail and 
provide finite sample performance guarantees for each components: \matrixcompletion {} and \tensorLS.
These results are crucial in proving the finite sample bound in 
Theorem \ref{thm:finite}. 
As mentioned in the previous section, the algorithm 
first estimates $M_2$ using the alternating minimization procedure. 
Recall that the second moment of the data given by $S_2$ cannot 
estimate the block-diagonal entries of $M_2$. That is, even in the case 
of infinite samples, we only have consistency in the off-block-diagonal entries: 
$\cP_{\Omega_2}(S_2)= \cP_{\Omega_2}(M_2)$. 
However, to apply the ``whitening'' operator to the third order tensor 
(see Theorem~\ref{thm:main_comps}) we need to estimate $M_2$. 

In general it is not possible to estimate $M_2$ from $\cP_{\Omega_2}(M_2)$ as one can fill any entries in the block-diagonal entries. Fortunately, we can avoid such a case since $M_2$ is guaranteed to be of rank $r\ll \ell n$. However, even a low-rank assumption is not enough to recover back $M_2$. For example, if $M_2=\e_1\e_1^T$, then $\cP_{\Omega_2}(M_2)=0$ and one cannot recover back $M_2$. Hence, we  make an additional standard assumption that $M_2$ is $\mu$-block-incoherent, where a symmetric rank-$r$ matrix $A$ with singular value decomposition $A=USV^T$ is 
$\mu$-block-incoherent if the operator norm of all $\ell\times \k$ blocks of $U$ are upper bounded by \vspace*{-5pt}
\begin{eqnarray}
\label{eq:defincoherence}
	\big\| U^{(i)} \big\|_2 &\leq& \mu\sqrt{\frac{r}{n}}\;, \text{ for all } i \in[n]\;,\vspace*{-5pt}
\end{eqnarray}
where $U^{(i)}$ is an $\ell\times \k$ sub matrix of $U$ which is defined by the 
block from the $((i-1)\ell +1)$-th row to the $(i \ell )$-th row. 
For a given matrix $M$, the smallest value of $\mu$ that satisfy the above condition is referred to as 
the block-incoherence of $M$.

Now, assuming that $M_2$ satisfies two assumptions, $r\ll \ell n$ and 
$M_2$ is $\mu$-block incoherent, we provide an alternating minimization method that provably recovers $M_2$. 
In particular, we model $M_2$ explicitly using a bi-linear form 
$M_2=\hU^{(t+1)} (U^{(t)})^T$ with variables $\hU^{(t+1)}\in\reals^{\ell\n\times r}$ and $U^{(t)}\in\reals^{\ell\n \times r}$. 
We iteratively solve for $\hU^{(t+1)}$ for fixed $U^{(t)}$, and 
use QR decomposition to orthonormalize $\hU^{(t+1)}$ to get $U^{(t+1)}$.
Note that the 
QR-decomposition is {\em not required} for our method but we use 
it only for ease of analysis. Below, we give the precise recovery 
guarantee for the alternating minimization method (Algorithm~\ref{algo:altmin}). 
\begin{theorem}[Matrix completion using alternating minimization]
	\label{thm:matrixam}
	For an $\ell n \times \ell n$ symmetric rank-$r$ matrix $\M$ with block-incoherence $\mu$, 
	we observe off-block-diagonal entries corrupted by noise: \vspace*{-2pt}
	\begin{eqnarray*}
		\hM_{ij} &=& \left\{ \begin{array}{rl} \M_{ij} + E_{ij} & \text{ if } \lceil  \frac{i}{\ell} \rceil \neq \lceil \frac{j}{\ell} \rceil \;, \\ 
			0 & \text{ otherwise.}\end{array} \right.\vspace*{-15pt}
	\end{eqnarray*}
	Let $\hM^{(\tau)}$ denote the output after $\tau$ iterations of \matrixcompletion. 
	If $\mu \leq (\sigma_r(\M)/\sigma_1(\M)) \sqrt{n/(32\,r^{1.5})} $, 
	the noise is bounded by 
	$\|\cP_{\Omega_2}(E)\|_2 \leq \sigma_r(\M)  /32\sqrt{r}$, 
	and each column of the noise is bounded by 
	$\|\cP_{\Omega_2}(E)_{i}\| \leq \sigma_1(\M) \mu \sqrt{3r/(8\,\n\,\ell)}$, $\forall i\in\n\ell$,  
	then after $\tau \geq (1/2)\log\big(2\|\M\|_F/\varepsilon \big)$ iterations of \matrixcompletion, 
	the estimate $\hM^{(\tau)}$ satisfies:\vspace*{-5pt} 
	\begin{eqnarray*}
		\| \M - \hM^{(\tau)} \|_2 &\leq& \varepsilon +  \frac{9\, \|M\|_F \,\sqrt{r}\,}{\sigma_r(M)}\|\cP_{\Omega_2}(E)\|_2 \;, \vspace*{-5pt}
	\end{eqnarray*}
	for any $\varepsilon\in(0,1)$. 
	Further, $\hM^{(\tau)}$ is $\mu_1$-incoherent with $\mu_1=6\mu\sigma_1(M_2)/\sigma_r(M_2)$. 
\end{theorem}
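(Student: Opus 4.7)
The proof follows the alternating minimization framework of \cite{JNS13}, adapted from the uniform random sampling pattern to the deterministic block-diagonal missing pattern in $\Omega_2$. Write $M = U^* \Sigma (U^*)^T$ with $U^* \in \reals^{\ell n \times r}$ having orthonormal columns and $\mu$-block-incoherent. The key quantity to track across iterations is the principal-angle distance $d_\tau := \|(I - U^*(U^*)^T)\, U_\tau\|_2$. The plan is to prove a geometric contraction $d_{\tau+1} \leq \tfrac{1}{2}\, d_\tau + C\sqrt{r}\,\|\cP_{\Omega_2}(E)\|_2/\sigma_r(M)$, so that after $O(\log(\|M\|_F/\varepsilon))$ iterations the distance collapses to the noise floor. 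The target spectral-norm bound then follows from a standard estimate $\|M - \hat U_\tau U_{\tau-1}^T\|_2 \lesssim \|M\|_F \, d_\tau + \text{noise}$.

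For the initialization, I would control the spectral perturbation $\cP_{\Omega_2}(\hat M) = M - \mathrm{blockdiag}(M) + \cP_{\Omega_2}(E)$ relative to $M$. By $\mu$-block-incoherence, each diagonal block $M^{(k,k)} = U^{*(k)} \Sigma (U^{*(k)})^T$ satisfies $\|M^{(k,k)}\|_2 \leq \mu^2 r\, \sigma_1(M)/n$, and since the diagonal blocks sit on disjoint rows and columns, the whole block-diagonal part has the same spectral-norm bound. Combined with the hypothesis $\mu \leq (\sigma_r/\sigma_1)\sqrt{n/(32\, r^{1.5})}$ and the assumed noise bound, the perturbation is small enough that Wedin's $\sin\Theta$ theorem gives $d_0 \leq 1/2$; a complementary row-wise perturbation argument using the column-wise noise bound establishes block-incoherence of $U_0$ with parameter $\mu_0 = O(\mu\,\sigma_1/\sigma_r)$.

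For the inductive step, the least-squares problem $\min_U \|\cP_{\Omega_2}(S_2 - U U_\tau^T)\|_F^2$ decouples across rows of $U$. For row $i$ in block $k = \lceil i/\ell\rceil$, setting $A := U_\tau^T U^*$, $A_k := (U_\tau^{(k)})^T U^{*(k)}$, and $B_k := I - (U_\tau^{(k)})^T U_\tau^{(k)}$, the normal equations (using that $E$ vanishes on the same block-diagonal that we project out) yield
\[
  \hat U_{\tau+1}^{(k)} \;=\; U^{*(k)} \Sigma A^T B_k^{-1} \;-\; U^{*(k)} \Sigma A_k^T B_k^{-1} \;+\; E^{(k)} U_\tau B_k^{-1}.
\]
The dominant term $U^{*(k)} \Sigma A^T B_k^{-1}$ lies approximately in the column space of $U^*$ (exactly so when $B_k = I$), while the two remaining ``error'' terms are controlled via $\|I - B_k\|_2 \leq \mu_\tau^2 r/n$ and $\|A_k\|_2 \leq \mu_\tau \mu\, r/n$ --- both of order $r/n$ under the hypothesis on $\mu$, where $\mu_\tau$ is the block-incoherence of $U_\tau$. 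After the QR normalization (which effectively divides by $\sigma_r(A) \geq \sqrt{1 - d_\tau^2}$), these bounds yield the desired contraction in $d_{\tau+1}$.

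The main obstacle I anticipate is maintaining $\mu_\tau = O(\mu\,\sigma_1/\sigma_r)$ across iterations, since both key estimates above depend on it. I would carry this as a second invariant of the induction: bounding each row block of the raw update via the display, the dominant contribution is $\|U^{*(k)}\|_2 \cdot \sigma_1 \cdot \|A\|_2 \cdot \|B_k^{-1}\|_2 \leq \mu\sqrt{r/n}\cdot \sigma_1 \cdot O(1)$, and multiplication by the inverse QR factor (of norm $O(1/\sigma_r)$) produces $\|U_{\tau+1}^{(k)}\|_2 \leq 6\mu(\sigma_1/\sigma_r)\sqrt{r/n}$, which is exactly the final incoherence bound claimed in the theorem. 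Noise handling is then routine: the spectral-norm bound on $\cP_{\Omega_2}(E)$ drives the additive term in the subspace-distance contraction, while the column-wise bound $\|\cP_{\Omega_2}(E)_i\| \leq \sigma_1 \mu \sqrt{3r/(8 n \ell)}$ is used precisely to ensure that $E^{(k)} U_\tau B_k^{-1}$ does not spoil the per-block incoherence estimate. Combining geometric convergence with the noise floor gives the claimed error $\varepsilon + 9\|M\|_F \sqrt{r}\,\|\cP_{\Omega_2}(E)\|_2/\sigma_r(M)$.
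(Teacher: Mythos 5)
Your proposal tracks the paper's proof almost step for step: per-row normal equations from the least-squares subproblem, a three-way decomposition of the raw update into a power-iteration term, an error term due to the missing block-diagonal, and a noise term, maintaining the principal-angle distance $d_\tau$ and the block-incoherence $\mu_\tau$ as coupled invariants, and controlling $\|R_\tau^{-1}\|_2$ from the QR step to close the contraction. Your expressions for $\hat U_{\tau+1}^{(k)}$ in terms of $A = U_\tau^T U^*$, $A_k = (U_\tau^{(k)})^T U^{*(k)}$ and $B_k = I - (U_\tau^{(k)})^T U_\tau^{(k)}$ match the paper's $D$, $C^{(i)}$, $B^{(i,a)}$; the paper just factors the leading piece one step further, $(B_k)^{-1}(A - A_k) = A^T - (B_k)^{-1}\bigl(B_k A^T - (A - A_k)^T\bigr)$, so that the power-iteration term is \emph{exactly} $(M U_\tau)^{(k)}$ and lies exactly in $\mathrm{col}(U^*)$ (your ``approximately, exactly when $B_k=I$'' phrasing is correct but less convenient when bounding the projection onto $U^*_\perp$).

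The one place you genuinely diverge, and where the sketch currently has a gap, is the initialization. You assert that a ``complementary row-wise perturbation argument using the column-wise noise bound'' yields a block-incoherence bound $\mu_0 = O(\mu\,\sigma_1/\sigma_r)$ for the top-$r$ eigenvectors of $\cP_{\Omega_2}(\hat M)$. Wedin's $\sin\Theta$ theorem controls only the spectral-norm angle between subspaces; it gives no block-by-block control of $U_0$, and row-wise eigenvector perturbation under a deterministic structured sampling pattern is not automatic. The paper sidesteps this by \emph{truncation} (Lemma~\ref{lem:aminitial}): it zeroes every $\ell\times r$ block of the top-$r$ singular vectors whose Frobenius norm exceeds $2\mu\sqrt{r/n}$, re-orthonormalizes, and shows via a refinement of a lemma from \cite{JNS13} that the result satisfies $d_0\le 1/2$ with $\mu_0\le 4\mu$ --- using only the \emph{spectral}-norm noise bound, not the column-wise one (the column bound on $\cP_{\Omega_2}(E)$ is used only in Lemma~\ref{lem:amincoherence} to propagate incoherence across iterations). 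Your route might be salvageable via the identity $\tilde U = \cP_{\Omega_2}(\hat M)\,\tilde U\,\tilde\Sigma^{-1}$ together with block-incoherence of $M$, the bound $\|M^{(k,k)}\|_2\le \mu^2 r\sigma_1(M)/n$, symmetry of $E$, and a conversion of the column-norm hypothesis to a block spectral bound at cost $\sqrt{\ell}$, but as written this is an assertion rather than an argument, and without a verified $\mu_0$ bound the induction has no base case. Either flesh this step out rigorously or adopt the paper's truncation step.
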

For estimating $M_2$, the noise $E$ in the off-block-diagonal entries are due to insufficient sample size. 
We can precisely bound how large the sampling noise is in the following lemma. 
\begin{lemma}\label{lem:conc_mx}
  Let $S_2=\frac{2}{|\S|}\sum_{t\in \{1,\ldots,|\cS|/2\}} x_t x_t^T$ be the sample co-variance matrix. Also, let 
  $E=\| \cP_{\Omega_2}(S_2)- \cP_{\Omega_2}(M_2)\|_2$. Then, \vspace*{-5pt}
$$\|E\|_2\;\; \leq\;\;8 \sqrt{\frac{n^2\log(n\ell/\delta)}{|\S|}}.\vspace*{-5pt}$$
Moreover, $\|E_i\|_2\leq  8\sqrt{{n\log(1/\delta)}/{|\S|}}$, for all $i\in [n\ell].$
\end{lemma}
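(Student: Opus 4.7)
The plan is to write the deviation as a sum of independent zero-mean random matrices and apply matrix/vector Bernstein inequalities. Set $m=|\cS|/2$ and
\[
Y_t \;\equiv\; \cP_{\Omega_2}(x_t x_t^T) - \cP_{\Omega_2}(M_2),\qquad t=1,\ldots,m,
\]
so that $\cP_{\Omega_2}(S_2)-\cP_{\Omega_2}(M_2)=\tfrac1m\sum_{t}Y_t$. Each $Y_t$ is mean zero, because for any off-block-diagonal index $(i,j)\in\Omega_2$ the coordinates $x_i,x_j$ lie in different blocks and are therefore conditionally independent given the latent component, so $\E[(x_t)_i(x_t)_j]=\sum_q w_q (\pi_q)_i(\pi_q)_j=(M_2)_{ij}$. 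The remaining work is to produce uniform and variance bounds on the $Y_t$.

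For the spectral bound, I would argue as follows. Since each $\ell$-block of $x_t$ is a standard basis vector, $\|x_t\|_2^2=n$ and hence $\|x_t x_t^T\|_2=n$; the block-diagonal correction $B_t=x_t x_t^T-\cP_{\Omega_2}(x_tx_t^T)$ is block-diagonal with rank-one blocks of norm one, so $\|B_t\|_2=1$. Similarly $\|M_2\|_2\leq \operatorname{tr}(\Pi W\Pi^T)=\sum_q w_q\|\pi_q\|^2\leq n$ since each row-block of $\pi_q$ has $\ell_1$-norm one. Combining, $\|Y_t\|_2\leq 2n+O(1)$, and the crude variance estimate $\|\E[Y_t^2]\|_2\leq \|Y_t\|_2^2\lesssim n^2$ suffices. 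Matrix Bernstein in dimension $\ell n$ then yields
\[
\Bigl\|\tfrac1m\textstyle\sum_t Y_t\Bigr\|_2 \;\lesssim\; \sqrt{\frac{n^2\log(n\ell/\delta)}{m}}+\frac{n\log(n\ell/\delta)}{m},
\]
and in the regime $m\gtrsim\log(n\ell/\delta)$ the first term dominates and produces the stated bound $8\sqrt{n^2\log(n\ell/\delta)/|\cS|}$.

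For the column bound, I would apply vector Bernstein to $E_i=\tfrac1m\sum_t Y_t e_i$. The $i$-th column of $\cP_{\Omega_2}(x_tx_t^T)$ vanishes unless $(x_t)_i=1$, and when nonzero it equals $x_t$ with the block containing $i$ zeroed out, so its Euclidean norm is at most $\sqrt{n-1}$. The $i$-th column of $M_2$ equals $\sum_q w_q(\pi_q)_i \pi_q$, whose norm is at most $\sum_q w_q(\pi_q)_i\|\pi_q\|\leq \sqrt n$. Hence $\|Y_t e_i\|\leq 2\sqrt n$ and $\E\|Y_t e_i\|^2\leq 4n$, so vector Bernstein gives
\[
\|E_i\|_2 \;\lesssim\; \sqrt{\frac{n\log(1/\delta)}{m}}+\frac{\sqrt n\log(1/\delta)}{m},
\]
again matching the stated bound in the sample-dominated regime.

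The main obstacle is not conceptual but bookkeeping: I must carefully track absolute constants (to obtain the leading factor of $8$) and verify that the variance proxies above are really of the stated order rather than a logarithmic factor worse. Both steps are standard applications of the scalar/matrix/vector Bernstein framework once the structural observations that $\|x_t\|_2=\sqrt n$ and $\|M_2\|_2\leq n$ are in place.
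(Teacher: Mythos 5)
Your proof is correct and takes essentially the same approach as the paper: both express the error as an average of bounded zero-mean random matrices and invoke a matrix concentration inequality driven by the structural bound $\|x_t\|_2^2=n$ (and the analogous scalar/vector bound for the columns). The only differences are cosmetic — the paper applies the Matrix Hoeffding inequality of Tropp after splitting $E$ into the unmasked deviation $S_2-\E[S_2]$ minus its block-diagonal correction, while you work directly with the masked summands $\cP_{\Omega_2}(x_tx_t^T)-\cP_{\Omega_2}(M_2)$ and use matrix Bernstein with a crude variance proxy, which degrades to the same Hoeffding-type rate.
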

\noindent The above theorem  shows that $M_2$ can be recovered exactly from infinite many samples, if $n\geq \frac{\mu^2\sigma_1(M)^2r^{1.5}}{\sigma_r(M)^2}$. Furthermore, using Lemma~\ref{lem:conc_mx}, $M_2$ can be recovered approximately, 
with sample size $|\S|= O({n^2(\ell+r)}/{\sigma_r(M)^2})$. Now, recovering $M_2=\Pi W \Pi^T$ recovers the left-singular space of $\Pi$, i.e., range($U$). However, we still need to recover $W$ and the right-singular space of $\Pi$, i.e., range($V$). 

To this end, we can estimate the tensor $M_3$, ``whiten'' the tensor using $\hum\hsm^{-1/2}$ (recall that, $\hM_2=\hum \hsm \hum^T$), and then use tensor decomposition techniques to solve for $V, W$. However, we show that estimating $M_3$ is not necessary, we can directly estimate the ``whitened'' tensor by solving a system of linear equations. In particular, we design an operator $\hA: \R^{r\times r\times r} \rightarrow \R^{r\times r\times r}$ such that $\hA(\tG)\approx \cP_{\Omega_3}(S_3)[\hum\hsm^{-1/2},\hum\hsm^{-1/2},\hum\hsm^{-1/2}]$, where \vspace*{-5pt}
\begin{equation}
	\tG\;\equiv\;\sum_{q\in [r]} \frac{1}{\sqrt{w_q}}(R_3\e_q\otimes R_3\e_q\otimes R_3\e_q),\text{ and }\;\; 
	R_3\;\equiv\;\hsm^{-1/2}\hum^T \Pi W^{1/2}.\label{eq:tensor_g}\vspace*{-5pt}\end{equation}
Moreover, we show that $\hA$ is nearly-isometric. Hence, we can efficiently estimate $\tG$, using the following system of equations: 
\begin{equation}
	\hG\;=\;\arg\min_{Z}\|\hA(Z)-\cP_{\Omega_3}(S_3)[\hum\hsm^{-1/2},\hum\hsm^{-1/2},\hum\hsm^{-1/2}]\|_F^2.\label{eq:tensor_hg}\end{equation}

Let $\mu$ and $\mu_1$ denote the block-incoherence 
of $M_2$ and $\hM_2$ respectively,  
as defined in \eqref{eq:defincoherence}. 

\begin{theorem}\label{thm:tensorls}
Let $\tG$, $\hG$ be as defined in \eqref{eq:tensor_g}, \eqref{eq:tensor_hg}, respectively. 
If $n\geq 144 r^3 \sigma_1(M_2)^2/\sigma_r(M_2)^2$,  
then the following holds with probability at least $1-\delta$: 
\begin{eqnarray*}
	\|\hG-\tG\|_F &\leq & \ \frac{24\mu_1^3\mu r^{3.5} \sigma_1(M_2)^{3/2} }{n\sqrt{\wmin}\sigma_r(M_2)^{3/2}}\,\varepsilon_{M_2} +
		 2 \Big\| \cP_{\Omega_3}(M_3-S_3)[\hum\hsm^{-1/2},\hum\hsm^{-1/2},\hum\hsm^{-1/2}]\Big\|_F  \;,
\end{eqnarray*}
for $\varepsilon_{M_2} \equiv (1/\sigma_r(M_2)) \|\hM_2-M_2\|_2$.
\end{theorem}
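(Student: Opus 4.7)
\medskip
\noindent\emph{Proof plan.} The plan is to exploit the least-squares structure: since $\hG$ minimizes the objective and $\tG$ is feasible, optimality gives
\[
\|\hA(\hG-\tG)\|_F \;\le\; 2\,\bigl\|\hA(\tG)-\cP_{\Omega_3}(S_3)[B,B,B]\bigr\|_F, \qquad B\equiv \hum\hsm^{-1/2}.
\]
Writing out the definition of $\hA$ and using both $B^T(\hum\hsm^{1/2})=I_r$ and the identity $(\hum\hsm^{1/2}) R_3 e_q = \sqrt{w_q}\,\hum\hum^T\pi_q$, a direct computation shows $\hA(\tG)=\cP_{\Omega_3}(T_{\tG})[B,B,B]$ where $T_{\tG}\equiv \sum_q w_q(\hum\hum^T\pi_q)^{\otimes 3}$. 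Inserting $\pm M_3$ decomposes the residual as the sum of a model-mismatch term $\cP_{\Omega_3}(T_{\tG}-M_3)[B,B,B]$ and the sampling noise $\cP_{\Omega_3}(M_3-S_3)[B,B,B]$, reducing the theorem to (i) establishing near-isometry of $\hA$, and (ii) bounding the model-mismatch term by the first error term in the statement.

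For (i), observe that without the mask one has $T_Z[B,B,B]=Z$ by construction, so $\hA(Z)=Z-\cP_{\bar\Omega_3}(T_Z)[B,B,B]$. I would split $\bar\Omega_3$ into the three collision patterns (pairs of indices sharing a length-$\ell$ block) and, on each pattern, contract the shared blocks $B^{(m)}$ and $(\hum\hsm^{1/2})^{(m)}$ using $\|\hum^{(m)}\|_2\le \mu_1\sqrt{r/n}$, followed by Cauchy--Schwarz over $m\in[n]$. This produces the leakage bound $\|\cP_{\bar\Omega_3}(T_Z)[B,B,B]\|_F \le O\!\bigl(\mu_1^4 r^2 \sigma_1(\hM_2)/(n\sigma_r(\hM_2))\bigr)\|Z\|_F$. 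The hypothesis $n\ge 144\,r^3\sigma_1(M_2)^2/\sigma_r(M_2)^2$, combined with the incoherence bound $\mu_1\le 6\mu\,\sigma_1(M_2)/\sigma_r(M_2)$ from Theorem~\ref{thm:matrixam}, renders this at most $\tfrac12\|Z\|_F$, whence $\|\hG-\tG\|_F\le 2\|\hA(\hG-\tG)\|_F$.

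For (ii), the key observation is a cancellation at the unmasked level: $B^T(\hum\hum^T\pi_q) = \hsm^{-1/2}\hum^T\pi_q = B^T\pi_q$, so $(T_{\tG}-M_3)[B,B,B]=0$ and consequently $\cP_{\Omega_3}(T_{\tG}-M_3)[B,B,B] = -\cP_{\bar\Omega_3}(T_{\tG}-M_3)[B,B,B]$. I would then telescope $(\hum\hum^T\pi_q)^{\otimes 3}-\pi_q^{\otimes 3}$ into three rank-one terms, each carrying exactly one factor $(\hum\hum^T-I)\pi_q$, whose norm Wedin's theorem controls by $\varepsilon_{M_2}\|\pi_q\|$. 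On each of the three collision patterns, bound the blockwise contractions using $\|\pi_q^{(m)}\|\le \mu\sqrt{r\sigma_1(M_2)/(nw_q)}$ (from $\pi_q\sqrt{w_q}$ lying in the column span of the $\mu$-incoherent $U\Sigma^{1/2}$) and $\|B^{(m)}\|_2 \le \mu_1\sqrt{r/(n\sigma_r(M_2))}$, then sum over $m$ by Cauchy--Schwarz and over $q$ with weights $w_q$. This yields the advertised $\mu_1^3\mu\, r^{3.5}\sigma_1(M_2)^{3/2}/\bigl(n\sqrt{\wmin}\sigma_r(M_2)^{3/2}\bigr)\cdot \varepsilon_{M_2}$ contribution. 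The main obstacle is this combinatorial accounting: carefully allocating one factor of $\mu$ (via a $\pi_q$-block) and three of $\mu_1$ (via $B$-blocks) across the three collision patterns and three telescoped terms, and summing via Cauchy--Schwarz so that the final exponents of $r$, $n$, $\sigma_1(M_2)$, $\sigma_r(M_2)$ match the theorem statement.
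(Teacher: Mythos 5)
Your proposal follows the paper's proof essentially step for step: it isolates a model-mismatch term (the paper's $E$ in Lemma~\ref{lemma:tls_haz}) from the sampling-noise term, shows $\hA$ is nearly isometric (Lemma~\ref{lemma:tls_ha}), and bounds the mismatch via blockwise incoherence contractions over the collision patterns --- and your explicit cancellation $(T_{\tG}-M_3)[B,B,B]=0$ is exactly the fact the paper uses implicitly when both $F_{abc}$ and $H_{abc}$ in Lemma~\ref{lemma:tls_haz} share the identical leading term $\tG_{abc}$. The only small deviation is that you invoke the generic least-squares optimality inequality rather than noting that $\hA$ is a square, invertible operator on $\R^{r\times r\times r}$ (so the residual $\hA(\hG)-\cP_{\Omega_3}(S_3)[B,B,B]$ is exactly zero); your route gives up a constant factor $2$ relative to the stated bound, which is cosmetic and easily repaired.
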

We can also prove a bound on the sampling noise for the third order tensor in the following lemma. 
\begin{lemma}
\label{lem:conc_tr}
Let $S_3=\frac{2}{|\S|}\sum_{t\in \{|\cS|/2+1,\ldots,|\cS|\}}(x_t\otimes x_t\otimes x_t)$. Then, there exists a positive numerical constant $C$ such that, 
with probability at least $1-\delta$, 
\begin{eqnarray*}
	\Big\| \cP_{\Omega_3}(M_3-S_3)[\hum\hsm^{-1/2},\hum\hsm^{-1/2},\hum\hsm^{-1/2}]\Big\|_F  &\leq&
		\frac{C\, r^3\, \mu_1^3\, \n^{3/2}}{\sigma_r(M_2)^{3/2}}\sqrt{\frac{\log(1/\delta)}{|\cS|}}\;.
\end{eqnarray*}
\end{lemma}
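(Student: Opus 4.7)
The plan is to view the quantity as an empirical mean of i.i.d.\ $r\times r\times r$ tensors (after conditioning on $\hum,\hsm$) and apply a scalar concentration inequality entry-wise. First, observe that for $(i,j,k)\in\Omega_3$, conditional independence of the coordinates given the type gives $\E[(x_t)_i(x_t)_j(x_t)_k] = (M_3)_{ijk}$, so $\cP_{\Omega_3}(S_3 - M_3) = \cP_{\Omega_3}(S_3 - \E[S_3])$. Conditioning on $\hum,\hsm$, define the i.i.d.\ tensors
\begin{equation*}
Y_t \;\equiv\; \cP_{\Omega_3}(x_t\otimes x_t\otimes x_t)[\hum\hsm^{-1/2},\hum\hsm^{-1/2},\hum\hsm^{-1/2}],
\end{equation*}
so that the object inside the Frobenius norm equals $(2/|\cS|)\sum_{t} (Y_t - \E[Y_t])$. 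Since $\|\cdot\|_F\leq r^{3/2}\|\cdot\|_\infty$, it suffices to bound each of the $r^3$ entries and take a union bound.

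Next I will uniformly bound $|Y_t(a,b,c)|$. Writing $U = \hum\hsm^{-1/2}$ and $y_{t,\alpha}\in[\ell]$ for the unique nonzero coordinate of $x_t$ in block $\alpha$,
\begin{equation*}
Y_t(a,b,c) \;=\; \sum_{\alpha\neq\beta\neq\gamma\neq\alpha} U^{(\alpha)}_{y_{t,\alpha},a}\,U^{(\beta)}_{y_{t,\beta},b}\,U^{(\gamma)}_{y_{t,\gamma},c}.
\end{equation*}
Theorem~\ref{thm:matrixam} guarantees that $\hum$ is $\mu_1$-block-incoherent, so $|\hum^{(\alpha)}_{ij}|\leq\|\hum^{(\alpha)}\|_2\leq\mu_1\sqrt{r/n}$. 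Under the sample-size hypothesis of Theorem~\ref{thm:finite}, $\sigma_r(\hM_2)\geq\sigma_r(M_2)/2$ (by Lemma~\ref{lem:conc_mx} and Theorem~\ref{thm:matrixam}), giving $\|\hsm^{-1/2}\|_2\leq\sqrt{2/\sigma_r(M_2)}$. Hence every entry of $U^{(\alpha)}$ is bounded by $\mu_1\sqrt{2r/(n\sigma_r(M_2))}$, and crudely bounding the number of triples by $n^3$ yields
\begin{equation*}
|Y_t(a,b,c)| \;\leq\; n^3\left(\mu_1\sqrt{\tfrac{2r}{n\sigma_r(M_2)}}\right)^{3} \;=\; \frac{\mu_1^3 (2r)^{3/2} n^{3/2}}{\sigma_r(M_2)^{3/2}}.
\end{equation*}

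With this uniform bound in hand, Hoeffding's inequality applied to the scalar average $(2/|\cS|)\sum_t(Y_t(a,b,c)-\E Y_t(a,b,c))$ yields deviation of order $\mu_1^3(rn)^{3/2}\sigma_r(M_2)^{-3/2}\sqrt{\log(1/\delta')/|\cS|}$ with probability at least $1-\delta'$. Setting $\delta'=\delta/r^3$, union-bounding over the $r^3$ entries, and multiplying by the Frobenius conversion factor $r^{3/2}$ then yields the claimed bound, with the $\log r$ absorbed into the universal constant $C$.

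The main technical subtlety is that $\hum,\hsm$ are themselves data-dependent, which would invalidate a naive concentration step. This is handled precisely by the sample-splitting built into Algorithm~\ref{algo:main}: $\hum,\hsm$ are computed from $\{x_t\}_{t\leq |\cS|/2}$, while the $x_t$'s contributing to $S_3$ come from the second half, so after conditioning on the first half the $Y_t$'s are genuinely i.i.d.\ and the concentration argument applies, giving an unconditional bound after taking expectations. A secondary care point is the lower bound $\sigma_r(\hM_2)\geq\sigma_r(M_2)/2$ (needed to control $\|\hsm^{-1/2}\|_2$), which follows from Theorem~\ref{thm:matrixam} combined with Lemma~\ref{lem:conc_mx} and the sample-size assumption of Theorem~\ref{thm:finite}.
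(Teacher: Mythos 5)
Your proposal is correct and matches the paper's proof in all essential respects: both reduce the Frobenius bound to an entrywise bound, view each entry of $\cP_{\Omega_3}(S_3)[\hum\hsm^{-1/2},\cdot,\cdot]$ as an average of i.i.d.\ bounded scalars (valid because of the sample split, which both you and the paper's Remark in Section 3 explicitly flag), derive a uniform bound of order $\mu_1^3(rn/\sigma_r(M_2))^{3/2}$ on each summand using the block-incoherence of $\hum$ and the lower bound $\sigma_r(\hM_2)\gtrsim\sigma_r(M_2)$, apply Hoeffding per entry, and union-bound over the $r^3$ entries. The only cosmetic difference is in how the uniform bound on $|Y_t(a,b,c)|$ is obtained: the paper expands $Y_t(a,b,c)$ by inclusion--exclusion into products of global and per-block inner products $\langle\hQ_a,x^t\rangle$ and $\langle\hQ_a^{(m)},(x^t)^{(m)}\rangle$ and bounds each via incoherence, whereas you bound the raw sum over $\leq n^3$ triples by the cube of the largest entry of $\hum\hsm^{-1/2}$; both give the identical order and constants, so your route is if anything a bit more elementary without any loss.
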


Next, we apply the tensor decomposition method of \cite{AGHKT12} to decompose obtained tensor, $\hG$, and obtain $\hR_3, \hW$ that approximates $R_3$ and $W$. We then use the obtained estimate $\hR_3, \hW$ to estimate $\Pi$; see Algorithm~\ref{algo:main} for the details.
In particular, using  Theorem~\ref{thm:matrixam} and Theorem~\ref{thm:tensorls}, Algorithm~\ref{algo:main} provides the following estimate for $\Pi$: $$\hPi=\hum\hsm^{1/2}\hR_3\hW^{-1/2}\approx \hum\hum^T\Pi.$$
Now, $\|\hPi-\Pi\|_2$ can be bounded by using the above equation along with the fact that ${\rm range}(\hum)\approx {\rm range}(\Pi)$. See Section~\ref{sec:prf_main} for a detailed proof.

%
%
\section{Applications in Crowdsourcing}

Crowdsourcing has emerged as an effective paradigm for solving large-scale data-processing tasks
in domains where humans have an advantage over computers. 
Examples include image classification, video annotation, data entry, optical character recognition, and translation. 
For tasks with discrete choice outputs, 
one of the most widely used model is the Dawid-Skene model 
introduced in \cite{DS79}: each expert $j$ is modeled through a $r \times r$ {\em confusion matrix} $\pi^{(j)}$ 
where $\pi^{(j)}_{pq}$ is the probability that the expert answers $q$ when the true label is $p$. 
This model was developed to study how different clinicians give different diagnosis, even when 
they are presented with the same medical chart. 
This is a special case, with $\ell=\k$, 
of the mixture model studied in this paper. 

Historically, 
a greedy algorithm based on Expectation-Maximization 
has been widely used for inference \cite{DS79,smyth95,HZ98,SPI08}, 
but with no understanding of how the performance 
changes with the problem parameters and sample size. 
Recently, spectral approaches were proposed and analyzed with provable guarantees. 
For a simple case when there are only two labels, i.e. $\k=\ell=2$, 
Ghosh et al. in \cite{Ghosh} and Karger et al. in \cite{KOS11allerton} 
analyzed a spectral approach of using the top singular vector 
for clustering under Dawid-Skene model. 
The model studied in these work is a special case of our model with 
$\k=\ell=2$ and $\wt=[1/2,1/2]$, and $\pi^{(j)} = \begin{bmatrix} p_j& 1-p_j\\1-p_j & p_j\end{bmatrix}$. 
Let $q=(1/n)\sum_{j\in[n]}{2(p_j-1)^2}$, then it follows that $\sigma_1(M_2)=(1/2)n$ and $\sigma_2(M_2)=(1/2)nq$. 
It was proved in \cite{Ghosh,KOS11allerton} that 
if we project each data point $x_i$ onto the second singular  vector of $S_2$ the empirical second moment, 
and make a decision based on the sign of this projection, we get good estimates with 
the probability of misclassification scales as 
$O(1/\sigma_r(M_2))$. 

More recently, Karger et al. in \cite{KOS11OR} proposed a new approach 
based on a message-passing algorithm for computing the top singular vectors, and improved this 
misclassification bound to an exponentially decaying $O(e^{-C\sigma_r(M_2)})$ for some positive numerical constant $C$.
However, these approaches highly rely on the fact that there are only two ground truth labels, and the algorithm and analysis cannot be generalized. 
These spectral approaches has been extended to general $\k$ in \cite{KOS13SIGMETRICS} 
with misclassification probability scaling as $O(\k/\sigma_r(M_2))$, 
but this approach still uses the existing binary classification algorithms as a black box and tries to solve 
a series of binary classification tasks.

Furthermore, existing spectral approaches 
use $S_2$ directly for inference. 
This is not consistent, since even if infinite number of samples are provided, 
this empirical second moment does not converge to $\M_2$. 
Instead, we use recent developments in matrix completion to 
recover $M_2$ from samples, thus providing a consistent estimator. 
Hence, we provide a robust 
clustering algorithm for crowdsourcing and provide estimates for the mixture distribution with provable guarantees. 
Corollary \ref{coro:cluster} shows that with large enough samples, the misclassification probability of our approach scales as 
$O(re^{-C(r\,\sigma_r(\M_2)^2/n)})$ for some positive constant $C$. 
This is an exponential decay and a significant improvement over the known 
error bound of $O(\k/\sigma_r(M_2))$.

\section{Conclusion}
We presented a method for learning a mixture of $\ell$-wise discrete distribution with distribution parameters $\Pi, W$. Our method shows that assuming $n\geq C  r^3 \kappa^{4.5}$ 
and the number of samples to be $|S|\geq C_1 (n\,r^7\,\kappa^9 \log (n/\delta))/(w_{min}^2\varepsilon_{\Pi}^2)$, 
we have $\|\hPi-\Pi\|_2\leq \varepsilon_{\Pi}$ where $\kappa = {\sigma_1(M_2)}/{\sigma_r(M_2)}$,  and $M_2=\Pi W\Pi^T$. 

Note that our algorithm does not require any separability condition on the distribution, is consistent for infinite samples, and is robust to noise as well. That is, our analysis can be easily extended to the noisy case, where there is a small amount of noise in each sample. 

Our sample complexity bounds include the condition number of the distribution $\kappa$ which implies that our method requires $\kappa$ to be at most $poly(\ell, r)$. This makes our method unsuitable for the problem of learning Boolean functions \cite{FOS08}. However, it is not clear if is possible to design an efficient algorithm with sample complexity independent of the condition number.  We leave further study of the dependence of sample complexity on the condition number as a topic for future research. 

Another drawback of our method is that $n$ is required to be $n=\Omega(r^3)$. We believe that this condition is natural, as one cannot recover the distribution for $n=1$. However, establishing tight information theoretic lower bound on  $n$ (w.r.t. $\ell, r$) is still an open problem. 

For the crowdsourcing application, the current error bound for clustering translates into  
$O(e^{-Cn q^2})$ when $r=2$. This is not as strong as the best known error bound of $O(e^{-Cnq})$, 
since $q$ is always less than one. 
The current analysis and algorithm for clustering needs to be improved 
to get an error bound of $O(re^{-Cr\sigma_r(M_2)})$ for general $r$ 
such that it gives optimal error rate for the special case of $r=2$.

The sample complexity also depends on $1/\wmin$, which we believe is unnecessary. 
If there is a component with small  mixing weight, we should be able to 
ignore such component smaller than the sample noise level and still guarantee the same level accuracy. 
To this end, we need an adaptive algorithm that detects the number of components that are non-trivial and this is a subject of future research. 

More fundamentally, all of the moment matching methods based on the spectral decompositions 
suffer from the same restrictions. It is required that the underlying tensors have rank equal to the number of components, 
and the condition number needs to be small. 
However, the problem itself is not necessarily more difficult when the condition number is larger. 

Finally, we believe that our technique of completion of the second and the higher order moments should have application to several other mixture models that involve $\ell$-wise distributions, e.g., mixed membership stochastic block model with $\ell$-wise connections between nodes. 

\bibliographystyle{amsalpha}

\bibliography{mturk}
\newpage
\appendix 
\section*{Appendix}
\section{Proofs}
\label{sec:prf}
In this section, we give detailed proofs for all the key theorems/lemmata that we require to prove our main result (Theorem~\ref{thm:matrixam}, Theorem~\ref{thm:tensorls}). 

\subsection{Proof of Theorem~\ref{thm:matrixam}}
We analyze each iteration and show that we get closer to the optimal solution up to  
a certain noise level at each step. 
To make the block structures explicit, we use index $(i,a)$ for some $i\in[n]$ and $a\in[\ell]$ 
to denote $(i-1)\ell+a \in [\ell n]$. The least squares update gives: 
\begin{eqnarray*}
	U^{(t+1)} &=& \arg\min_{V\in\reals^{\ell n\times \ell n}} \sum_{i,j\in[n], a,b\in[\ell], i\neq j} \Big( \hM_{(i,a),(j,b)} - \big(V(\hU^{(t)})^T\big)_{(i,a),(j,b)} \Big)^2. 
\end{eqnarray*}
Setting the gradient to zero, we get: 
\begin{eqnarray*}
	 -2 \sum_{j\neq i, b\in[\ell]} \Big( \M_{(i,a),(j,b)} +E_{(i,a),(j,b)} - \big\< U^{(t+1)}_{(i,a)}, \hU^{(t)}_{(j,b)}\big\>  \Big) \hU^{(t)}_{(j,b)}  &=& 0\;,
\end{eqnarray*}
for all $i\in[n]$ and $a\in[\ell]$. 
Here, $U^{(t)}_{(j,b)}$ is a $\k$-dimensional column vector representing the ${((j-1)\ell+b)}$-th row of $U^{(t)}$.  
Let $\M = USU^T$ be the singular value decomposition of $\M$.
The $\k$-dimensional column vector $U^{(t+1)}_{(i,a)}$  can be written as:   
\begin{eqnarray}
	U^{(t+1)}_{(i,a)} &=& (B^{(i,a)})^{-1}\, C^{(i,a)} \,S  \,U_{(i,a)} \,+\, (B^{(i,a)})^{-1}\,N_{(i,a)} \nonumber\\ 
		&=& \underbrace{D\,S\,U_{(i,a)}}_{\text{power iteration}} \,-\, \underbrace{(B^{(i,a)})^{-1}\, \big( B^{(i,a)} D \,-\, C^{(i,a)} \big) \,S  \,U_{(i,a)}}_{\text{error due to missing entries}} \,+\, \underbrace{(B^{(i,a)})^{-1}\, N_{(i,a)}}_{\text{error due to noise}} \;,   \label{eq:amupdaterule}
\end{eqnarray}
where, 
\begin{eqnarray*}
	B^{(i,a)} &=& \sum_{j\neq i, j\in[n], b\in[\ell]} \hU^{(t)}_{(j,b)} (\hU^{(t)})^T_{(j,b)}  \, \in \, \reals^{\k\times \k} \\
	C^{(i,a)} &=& \sum_{j\neq i, j\in[n], b\in[\ell]} \hU^{(t)}_{(j,b)} U^T_{(j,b)}  \, \in \, \reals^{\k\times \k} \\
	D 	      &=& \sum_{j\in[n], b\in[\ell]} \hU^{(t)}_{(j,b)} U^T_{(j,b)}  \, \in \, \reals^{\k\times \k} \\
	N_{(i,a)} &=& \sum_{j\neq i, j\in[n], b\in[\ell]} E_{(i,a),(j,b)} \hU^{(t)}_{(j,b)} \,\in\,\reals^{\k\times 1}\;.
\end{eqnarray*}
Note that, the above quantities are independent of index $a$, but we carry the index for uniformity of notation. 

In a matrix form of dimension $\ell n \times \k$, 
we use $\Fmiss\in\reals^{\ell n \times \k}$ to denote the error due to missing entries 
and $\Fnoise\in\reals^{\ell n \times \k}$ to denote the error due to the noise 
such that  
\begin{eqnarray}
	U^{(t+1)}  &=& \M \,\hU^{(t)} - \Fmiss^{(t+1)} + \Fnoise^{(t+1)} \;, \text{ and } \nonumber\\ 
	\hU^{(t+1)} &=& \Big( \M \,\hU^{(t)} - \Fmiss^{(t+1)} + \Fnoise^{(t+1)} \Big) \big( R_U^{(t+1)}\big)^{-1} \;,   
	\label{eq:ammatrixstep}
\end{eqnarray} 
where we define $R_U^{(t+1)}$ to be the upper triangular matrix obtained by QR decomposition of 
$U^{(t+1)}=\hU^{(t+1)} R_U^{(t+1)}$.
The explicit formula for $\Fmiss$ and $\Fnoise$ is given in \eqref{eq:degFmiss} and \eqref{eq:degFnoise}. 
Then, the error after $t$ iterations of the alternating minimization is bounded by
\begin{eqnarray}
	\big\| \,\M - \hU^{(t)} \, \big(U^{(t+1)}\big)^T\,\big\|_F &\leq&  \big\| \,(\id-\hU^{(t)}\big(\hU^{(t)}\big)^T) \,U\,S\,\big\|_F + \big\| \Fmiss^{(t+1)}\big\|_F + \big\|\Fnoise^{(t+1)} \big\|_F \;. \label{eq:amerror1}
\end{eqnarray}
Let $U_\perp\in\reals^{\ell n \times (\ell n-\k)}$ be an orthogonal matrix 
spanning the  
subspace orthogonal to $U$.
We use the following definition of distance 
between two $\k$-dimensional subspaces in $\reals^{\ell n}$. 
\begin{eqnarray*}
	d(\hU,U) = \big\| \,U^T_\perp \, \hU \, \big\|_2 \;.
\end{eqnarray*}
The following key technical lemma provides upper bounds on each of the error terms in \eqref{eq:amerror1}. 
\begin{lemma}
	\label{lem:ammatrixbound}
	For any $\mu_1$-incoherent orthogonal matrix $U^{(t)}\in\reals^{\ell\n\times\k}$ 
	and $\mu$-incoherent matrix $\M\in\reals^{\ell\n\times\ell\n}$, 
	the error after one step of alternating minimization is upper bounded by 
	\begin{eqnarray*}
		\|\Fmiss^{(t+1)}\|_F &\leq& \frac{\sigma_1(\M)\k^{1.5}\mu\mu_1}{n(1-\frac{\mu_1^2 \k}{n})}\, d(\hU^{(t)},U) \;, \\
		\|\Fnoise^{(t+1)}\|_F &\leq&  \frac{1}{1-\frac{\mu_1^2\k}{\n}}\, \sqrt{\k}\, \|\cP_\Omega(E)\|_2 \;, \\
	\end{eqnarray*}
	where $\sigma_i(\M)$ is the $i$-th singular value of $\M$. 
\end{lemma}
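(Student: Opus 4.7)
The plan is to analyze the explicit row formula \eqref{eq:amupdaterule} block-by-block. Fix $i \in [n]$, write $X_i := (\hU^{(t)})^{(i)} \in \reals^{\ell \times r}$ for the $i$-th block of $\hU^{(t)}$, and write $P^\perp := I - \hU^{(t)}(\hU^{(t)})^T$ for the projector onto the orthogonal complement of the current iterate. Since $B^{(i,a)}$ depends only on $i$ and equals $I_r - X_i^T X_i$, the block-incoherence bound $\|X_i\|_2 \leq \mu_1\sqrt{r/n}$ yields $\|(B^{(i)})^{-1}\|_2 \leq (1-\mu_1^2 r/n)^{-1}$. Using $D = (\hU^{(t)})^T U$ and the identity $D - C^{(i)} = X_i^T U^{(i)}$, a direct calculation gives
\begin{equation*}
 B^{(i)} D - C^{(i)} \;=\; X_i^T\bigl(U^{(i)} - X_i D\bigr) \;=\; X_i^T\,(P^\perp U)^{(i)}.
\end{equation*}

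For the missing-entries term, one checks that the $i$-th $\ell \times r$ block of $\Fmiss^{(t+1)}$ equals $U^{(i)} S (P^\perp U)^{(i) T} X_i (B^{(i)})^{-1}$. The key to obtaining an $r^{1.5}$ (rather than the naive $r^2$) dependence is a carefully chosen split of submultiplicative inequalities: bound the outermost factor in operator norm using the incoherence of $U$ ($\|U^{(i)}\|_2 \leq \mu\sqrt{r/n}$), the $X_i$ factor in operator norm using the incoherence of $\hU^{(t)}$, the $(B^{(i)})^{-1}$ factor as above, and the remaining middle factor in \emph{Frobenius} norm. This gives
\begin{equation*}
 \|\Fmiss^{(t+1)}\|_F^2 \;\leq\; \frac{\mu^2 \mu_1^2 r^2 \sigma_1(M)^2}{n^2 (1-\mu_1^2 r/n)^2}\,\sum_{i=1}^n \|(P^\perp U)^{(i)}\|_F^2 \;=\; \frac{\mu^2 \mu_1^2 r^2 \sigma_1(M)^2}{n^2 (1-\mu_1^2 r/n)^2}\,\|P^\perp U\|_F^2,
\end{equation*}
and the standard principal-angle bound $\|P^\perp U\|_F^2 \leq r\,d(\hU^{(t)}, U)^2$ (which holds because $P^\perp U$ has rank at most $r$ and operator norm exactly $d(\hU^{(t)}, U)$) closes the estimate after taking a square root.

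For the noise term, observe that $N_{(i,a)}$ is exactly the $(i,a)$-th row of $\cP_{\Omega_2}(E)\,\hU^{(t)}$ (written as a column), because $E_{(i,a),(j,b)}$ vanishes whenever $(i,a)$ and $(j,b)$ share a block. Hence
\begin{equation*}
 \|\Fnoise^{(t+1)}\|_F^2 \;\leq\; \max_i \|(B^{(i)})^{-1}\|_2^2 \cdot \|\cP_{\Omega_2}(E)\,\hU^{(t)}\|_F^2 \;\leq\; \frac{\|\cP_{\Omega_2}(E)\|_2^2\,\|\hU^{(t)}\|_F^2}{(1-\mu_1^2 r/n)^2},
\end{equation*}
and $\|\hU^{(t)}\|_F^2 = r$ completes the bound. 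The main obstacle is the $\Fmiss$ analysis: a purely operator-norm factorization only yields $r^2$, and extracting $r^{1.5}$ requires simultaneously leveraging the block-incoherence of the unknown factor $U$ and of the current iterate $\hU^{(t)}$, while routing the subspace error through the globally small Frobenius quantity $\|P^\perp U\|_F$.
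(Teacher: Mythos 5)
Your proof is correct and follows essentially the same route as the paper's: bound $\|(B^{(i)})^{-1}\|_2$ via incoherence of $\hU^{(t)}$, bound the remaining factors via incoherence of $U$ and $\hU^{(t)}$, route the subspace error through the globally small quantity $\|(I-\hU^{(t)}(\hU^{(t)})^T)U\|_F$, and finish with $\|(I-\hU^{(t)}(\hU^{(t)})^T)U\|_F^2 \leq r\, d(\hU^{(t)},U)^2$. The paper organizes this as two entry-wise Cauchy--Schwarz applications after a dual test-matrix characterization of $\|\Fmiss\|_F$, whereas your block-level factorization $\Fmiss^{(i)} = U^{(i)} S (P^\perp U)^{(i)T} X_i (B^{(i)})^{-1}$ with a norm-submultiplicativity split is a cleaner presentation of the same estimate; the noise-term argument is likewise identical.
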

We show in Lemma  \ref{lem:amincoherence} that
the incoherence assumption is satisfied for all $t$ with $\mu_1=6(\sigma_1(M)/\sigma_\k(M))\mu$.
For $\mu_1\leq \sqrt{n/2\k}$ as per our assumption  
and substituting these bounds into \eqref{eq:amerror1}, we get 
\begin{eqnarray*}
	\big\| \,\M - \hU^{(t)} \, \big(U^{(t+1)}\big)^T\,\big\|_F &\leq& 
		\|\M \|_F\,d(\hU^{(t)},U) \,+\,
		\frac{12\, \sigma_1(\M)^2\,\k^{1.5}\,\mu^2}{n\,\sigma_\k(M)} \,d(\hU^{(t)},U) \,+\, 
		2 \, \sqrt{\k}\, \|\cP_\Omega(E)\|_2 \;, 
\end{eqnarray*}
where the first term follows from the fact that 
$\|(\id-\hU^{(t)}(\hU^{(t)})^T)U\|_2 = \|\hU^{(t)}_\perp(\hU^{(t)}_\perp)^TU\|_2= d(\hU^{(t)},U)$. 
To further bound the distance $d(\hU^{(t)},U)$, 
we first claim that after $t$ iterations of the alternating minimization algorithm, 
the estimates satisfy 
\begin{eqnarray}
		d(\hU^{(t)},U) &\leq& \frac{\varepsilon}{2\|\M\|_F} \, + \,
		\frac{2\sqrt{3\,\k}} {\sigma_\k(\M)}\,\|\cP_\Omega(E)\|_2 \;, \label{eq:amdistance}
\end{eqnarray}
for $t\geq (1/2) \log \big(2\|M\|_F/\varepsilon\big)$. 
For $\mu  \leq \sqrt{n\,\sigma_\k(M)/(12 \k \sigma_1(M))}$ as per our assumption, this gives 
\begin{eqnarray*}
	\big\| \,\M - \hU^{(t)} \, \big(U^{(t+1)}\big)^T\,\big\|_F &\leq& \varepsilon \,+\, 
		\frac{9\,\|M\|_F\, \sqrt{\k}}{\sigma_\k(M)}\, \|\cP_\Omega(E)\|_2 \;. 
\end{eqnarray*}
This proves the desired error bound of Theorem \ref{thm:matrixam}.  

Now, we are left to prove \eqref{eq:amdistance} 
for $t\geq (1/2) \log \big(2\|M\|_F/\varepsilon\big)$. 
This follows from the analysis of each step of the algorithm, 
which shows that we improve at each step up to a certain noise level. 
Define $R_U^{(t+1)}$ to be the upper triangular matrix obtained by QR decomposition of 
$U^{(t+1)}=\hU^{(t+1)} R_U^{(t+1)}$.
Then we can represent the distance using \eqref{eq:ammatrixstep} as: 
	\begin{eqnarray*}
		d(\hU^{(t+1)},U) &=& \Big\| U_\perp^T \big( USU^T \,U^{(t)} - \Fmiss^{(t+1)} + \Fnoise^{(t+1)} \big) \big( R_U^{(t+1)}\big)^{-1} \Big\|_2, \\
		&\leq& \Big(\, \|\Fmiss^{(t+1)}\|_2 \,+ \, \|\Fnoise^{(t+1)}\|_2 \,\Big) \, \big\| \big( R_U^{(t+1)}\big)^{-1}\big\|_2, \\
			&\leq& \frac{12\sqrt{3}\,\sigma_1(M)^2\,\k^{1.5}\,\mu^2 }{\sigma_\k(M)^2\,n} \,d(\hU^{(t)},U) \,+\, 
			\frac{2\,\sqrt{3\k}}{\sigma_\k(M)} \,\|\cP_\Omega(E)\|_2\;,
	\end{eqnarray*}
where we used Lemma \ref{lem:amQR} to bound $\big\| \big( R_U^{(t+1)}\big)^{-1}\big\|_2$,  
Lemma \ref{lem:ammatrixbound} to bound $\|\Fmiss^{(t+1)}\|_2$ and $\|\Fnoise^{(t+1)}\|_2$, 
and Lemma \ref{lem:amincoherence} to bound $\mu_1$.  
For $\mu\leq  \sqrt{n\sigma_\k(M) / (10\,\k^{1.5}\sigma_1(M))}$ as per our assumption, it follows that 
	\begin{eqnarray*}
		d(\hU^{(t)},U) &=& \Big(\frac14\Big)^{t} \,d(\hU^{(0)},U) \,+\, 
			\frac{2\,\sqrt{3\k}}{\sigma_\k(M)} \,\|\cP_\Omega(E)\|_2\;,
	\end{eqnarray*}
Taking $t\geq (1/2) \log \big(2\|M\|_F /\varepsilon\big)$, this finishes the proof of  
the desired bound in \eqref{eq:amdistance}. 

Now we are left to prove that 
starting from a good initial guess we obtain using a simple Singular Value Decomposition(SVD), 
the estimates at every iterate $t$ is incoherent with bounded $\|(R_U^{(t+1)})^{-1}\|_2$.
We first state the following two lemmas upper bounding $\mu_1$ and 
$\|(R_U^{(t+1)})^{-1}\|_2$. Then we prove that the hypotheses of the lemmas 
are satisfied, if we start from a good initialization. 
\begin{lemma}
	Assume that $U$ is $\mu$-incoherent with 
	$\mu\leq (\sigma_\k(M)/\sigma_1(M))\sqrt{n/(32\k^{1.5})} $, 
	$d(\hU^{(t)},U) \leq 1 / 2$, and  
	$\|\cP_\Omega(E)\|_2 \leq \sigma_\k(M)/(16\sqrt{\k})$. 
	Then, 
	\begin{eqnarray*}
		\|(R_U^{(t+1)})^{-1}\|_2 &\leq& \frac{\sqrt{3}}{\sigma_\k(M)}\;. 
	\end{eqnarray*}
	\label{lem:amQR}
\end{lemma}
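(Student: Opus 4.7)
The plan is to exploit the QR relation $U^{(t+1)} = \hU^{(t+1)} R_U^{(t+1)}$. Since $\hU^{(t+1)}$ has orthonormal columns, $\sigma_r(U^{(t+1)}) = \sigma_r(R_U^{(t+1)})$, so
\[
\big\|(R_U^{(t+1)})^{-1}\big\|_2 \;=\; \frac{1}{\sigma_r(U^{(t+1)})}.
\]
Thus the entire problem reduces to showing $\sigma_r(U^{(t+1)}) \geq \sigma_r(M)/\sqrt{3}$. Using the update rule \eqref{eq:ammatrixstep}, $U^{(t+1)} = M\hU^{(t)} - \Fmiss^{(t+1)} + \Fnoise^{(t+1)}$, so by Weyl's inequality,
\[
\sigma_r(U^{(t+1)}) \;\geq\; \sigma_r(M\hU^{(t)}) \,-\, \|\Fmiss^{(t+1)}\|_2 \,-\, \|\Fnoise^{(t+1)}\|_2.
\]

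First I would lower bound $\sigma_r(M\hU^{(t)})$. Writing $M = USU^T$ with $\sigma_r(S) = \sigma_r(M)$, we get $\sigma_r(M\hU^{(t)}) \geq \sigma_r(M)\,\sigma_r(U^T \hU^{(t)})$. Using the standard identity $\sigma_r(U^T \hU^{(t)})^2 = 1 - d(\hU^{(t)}, U)^2 \geq 1 - 1/4 = 3/4$ (from the hypothesis $d(\hU^{(t)}, U) \leq 1/2$), I would conclude $\sigma_r(M\hU^{(t)}) \geq (\sqrt{3}/2)\,\sigma_r(M)$.

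Next, I would invoke Lemma \ref{lem:ammatrixbound} to bound the two error terms, but this requires controlling the incoherence $\mu_1$ of $\hU^{(t)}$. By Lemma \ref{lem:amincoherence}, $\mu_1 \leq 6(\sigma_1(M)/\sigma_r(M))\mu$; combined with the hypothesis $\mu \leq (\sigma_r(M)/\sigma_1(M))\sqrt{n/(32 r^{1.5})}$, this gives $\mu_1^2 r / n \leq 36/(32\sqrt{r}) \leq 9/8$ — actually small enough that $1/(1 - \mu_1^2 r/n)$ is $O(1)$ for $r$ not too tiny; if needed, one can use a slightly sharper constant (the hypothesis on $\mu$ is more than enough to ensure $\mu_1^2 r / n \leq 1/2$, making $1/(1-\mu_1^2 r/n) \leq 2$). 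Plugging into Lemma \ref{lem:ammatrixbound}:
\[
\|\Fmiss^{(t+1)}\|_2 \;\leq\; \frac{2\sigma_1(M)\,r^{1.5}\,\mu\mu_1}{n} \,\cdot\, \tfrac{1}{2}  \;\leq\; \frac{12\,\sigma_1(M)^2\,r^{1.5}\mu^2}{\sigma_r(M)\,n} \;\leq\; \frac{\sigma_r(M)}{\sqrt{3}}\cdot\frac{1}{c_1},
\]
for a suitable constant $c_1$, where the last step uses the hypothesis $\mu \leq (\sigma_r(M)/\sigma_1(M))\sqrt{n/(32 r^{1.5})}$. Similarly, $\|\Fnoise^{(t+1)}\|_2 \leq 2\sqrt{r}\,\|\cP_\Omega(E)\|_2 \leq \sigma_r(M)/8$ by the hypothesis on $\|\cP_\Omega(E)\|_2$.

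Combining these yields $\sigma_r(U^{(t+1)}) \geq \sigma_r(M)(\sqrt{3}/2 - \text{small}) \geq \sigma_r(M)/\sqrt{3}$, which gives the desired bound $\|(R_U^{(t+1)})^{-1}\|_2 \leq \sqrt{3}/\sigma_r(M)$. The main obstacle is not conceptual but numerical: verifying that the constants $32$ in the $\mu$ bound and $16$ in the noise bound are large enough so that both error terms together eat up at most $\sigma_r(M)(\sqrt{3}/2 - 1/\sqrt{3}) \approx 0.289\,\sigma_r(M)$; this is where the precise constants in the hypotheses get used up, and one needs to be careful that the cascade through Lemma \ref{lem:amincoherence} does not inflate them.
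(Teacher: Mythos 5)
Your proposal is correct and follows essentially the same route as the paper: the paper cites Lemma 7 of [GAGG13], which gives the exact bound $\|(R_U^{(t+1)})^{-1}\|_2 \leq 1/\big(\sigma_r(M)\sqrt{1-d^2(\hU^{(t)},U)} - \|\Fmiss^{(t+1)}\|_2 - \|\Fnoise^{(t+1)}\|_2\big)$, and you simply re-derive that same inequality from first principles via $\|(R_U^{(t+1)})^{-1}\|_2 = 1/\sigma_r(U^{(t+1)})$, Weyl's inequality, and $\sigma_r(U^T\hU^{(t)}) = \sqrt{1-d^2}$, before plugging in the same bounds from Lemma~\ref{lem:ammatrixbound}. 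The numerical bookkeeping you flag at the end is a real concern (the paper's own proof quietly uses $\mu \leq (\sigma_r(M)/(6\sigma_1(M)))\sqrt{n/(2r^{1.5})}$, which is tighter than the lemma's stated hypothesis, and your factor $\mu_1^2 r/n \leq 36/(32\sqrt{r})$ is not small for small $r$), but this is an issue shared with the paper's proof rather than a flaw specific to your argument.
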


\begin{lemma}[Incoherence of the estimates]
	\label{lem:amincoherence}
	Assume that $\hU^{(t)}$ is $\tmu$-incoherent with 
	$\tmu\leq \sqrt{n/(2\k)}$, and $U$ is $\mu$-incoherent with 
	$\mu\leq (\sigma_\k(M)/\sigma_1(M))\sqrt{n/(32\k)} $, 
	and the noise $E$ satisfy $\|\cP_\Omega(E)_{(i,a)}\| \leq \sigma_1(M) \mu \,\sqrt{3\k/(8n\ell)}$ 
	for all $i\in[\n]$ and $a\in[\ell]$.
	Then, $\hU^{(t+1)}$ is $\mu_1$-incoherent with  
	\begin{eqnarray*}
		\mu_1 &=&  \frac{6\, \mu\,\sigma_1(\M)}{\sigma_\k(\M)}\;.
	\end{eqnarray*} 
\end{lemma}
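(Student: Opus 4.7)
The plan is to use the closed-form update rule \eqref{eq:amupdaterule} to express the $i$-th $\ell\times r$ block of $U^{(t+1)}$ as a product of matrices whose operator norms I can control, then transfer the bound to $\hU^{(t+1)}$ through the QR identity $\hU^{(t+1)}=U^{(t+1)}(R_U^{(t+1)})^{-1}$. Since this identity only acts on the right, it preserves row-blocks, and thus $\|\hU^{(t+1),(i)}\|_2 \leq \|U^{(t+1),(i)}\|_2 \cdot \|(R_U^{(t+1)})^{-1}\|_2$. Lemma~\ref{lem:amQR} already gives $\|(R_U^{(t+1)})^{-1}\|_2 \leq \sqrt{3}/\sigma_r(M)$, so the real work is to show $\|U^{(t+1),(i)}\|_2$ is of order $\mu\,\sigma_1(M)\sqrt{r/n}$.

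The key observation is that $B^{(i,a)}$ and $C^{(i,a)}$ from \eqref{eq:amupdaterule} do not depend on the intra-block index $a$, only on $i$; call them $B^{(i)}$ and $C^{(i)}$. Letting $N^{(i)}$ denote the $\ell\times r$ matrix whose $a$-th row is $N_{(i,a)}^T$, I would stack \eqref{eq:amupdaterule} across $a\in[\ell]$ to obtain
\[
U^{(t+1),(i)} \;=\; U^{(i)}\,S\,(C^{(i)})^T (B^{(i)})^{-T} \;+\; N^{(i)}(B^{(i)})^{-T}.
\]
Because $(\hU^{(t)})^T\hU^{(t)}=I$ and $U,\hU^{(t)}$ both have orthonormal columns, peeling off the $i$-th block gives $B^{(i)} = I - (\hU^{(t),(i)})^T\hU^{(t),(i)}$ and $C^{(i)} = (\hU^{(t)})^T U - (\hU^{(t),(i)})^T U^{(i)}$. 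Plugging in the $\tmu$-incoherence of $\hU^{(t)}$ and the $\mu$-incoherence of $U$ yields $\|(B^{(i)})^{-1}\|_2 \leq 1/(1-\tmu^2 r/n) \leq 2$ and $\|C^{(i)}\|_2 \leq 1+\tmu\mu r/n \leq 9/8$ under the hypotheses $\tmu\leq\sqrt{n/(2r)}$ and $\mu\leq(\sigma_r(M)/\sigma_1(M))\sqrt{n/(32r)}$. Block incoherence of $U$ also gives $\|U^{(i)}\|_2\leq \mu\sqrt{r/n}$, and $\|S\|_2=\sigma_1(M)$.

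For the noise term I would rewrite $N_{(i,a)} = (\hU^{(t)})^T e_{(i,a)}$, where $e_{(i,a)}$ is the $(i,a)$-th row of $\cP_{\Omega_2}(E)$ viewed as a column; then $\|N_{(i,a)}\|\leq\|\cP_{\Omega_2}(E)_{(i,a)}\|$ by orthonormality of $\hU^{(t)}$, and summing the squared per-row bounds over $a\in[\ell]$ using the hypothesis $\|\cP_{\Omega_2}(E)_{(i,a)}\|\leq \sigma_1(M)\mu\sqrt{3r/(8n\ell)}$ yields $\|N^{(i)}\|_2 \leq \|N^{(i)}\|_F \leq \sigma_1(M)\mu\sqrt{3r/(8n)}$. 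Combining all factors gives
\[
\|U^{(t+1),(i)}\|_2 \;\leq\; 2\cdot\tfrac{9}{8}\,\mu\sigma_1(M)\sqrt{r/n} \;+\; 2\,\sigma_1(M)\mu\sqrt{3r/(8n)} \;\leq\; 4\,\mu\sigma_1(M)\sqrt{r/n},
\]
and multiplying by $\sqrt{3}/\sigma_r(M)$ recovers $\|\hU^{(t+1),(i)}\|_2\leq 6\,\mu(\sigma_1(M)/\sigma_r(M))\sqrt{r/n}$, i.e.\ $\mu_1$-incoherence with $\mu_1 = 6\mu\sigma_1(M)/\sigma_r(M)$.

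The main obstacle I anticipate is the clean availability of $\|(R_U^{(t+1)})^{-1}\|_2 \leq \sqrt{3}/\sigma_r(M)$: Lemma~\ref{lem:amQR} demands $d(\hU^{(t)},U)\leq 1/2$, which must be propagated as a simultaneous induction hypothesis together with $\tmu$-incoherence through the outer iteration, using the contractive distance recursion from the proof of Theorem~\ref{thm:matrixam} and the SVD initialization. A secondary technical point is that the noise hypothesis is stated on columns of $\cP_{\Omega_2}(E)$ while the estimate for $N_{(i,a)}$ uses rows; symmetry of $E$ (which follows from symmetry of $M$ and of the off-block-diagonal sampling pattern $\Omega_2$) closes this gap, but it is worth verifying explicitly when writing out the argument.
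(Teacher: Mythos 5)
Your approach is essentially the same as the paper's: bound $\|(B^{(i,a)})^{-1}\|_2\leq 2$, $\|C^{(i,a)}\|_2\leq 9/8$ under the stated incoherence hypotheses, bound the noise term using the per-row hypothesis, and transfer the bound through the QR step via Lemma~\ref{lem:amQR}. The paper works entrywise, summing $\sum_{a}\|\hU^{(t+1)}_{(i,a)}\|^2$ (which is the block Frobenius norm squared) to land on $36\sigma_1^2\mu^2 r/(\sigma_r^2 n)$; you repackage the same computation at the level of the $\ell\times r$ block $U^{(t+1),(i)}$, exploiting the fact that $B^{(i,a)},C^{(i,a)}$ are $a$-independent. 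That repackaging is valid and arguably cleaner. Both of your flagged ``obstacles'' are real and worth keeping: Lemma~\ref{lem:amQR} does require $d(\hU^{(t)},U)\leq 1/2$ (and $\|\cP_\Omega(E)\|_2\leq\sigma_r/(16\sqrt r)$), which the paper's Lemma~\ref{lem:amincoherence} silently assumes as part of the outer induction, and the hypothesis bounds columns of $\cP_{\Omega_2}(E)$ while the update uses rows, which symmetry of $E$ and $\Omega_2$ reconciles.

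The one slip is the final arithmetic. You compute $\|U^{(t+1),(i)}\|_2 \leq \bigl(\tfrac{9}{4} + 2\sqrt{3/8}\bigr)\mu\sigma_1\sqrt{r/n} \approx 3.47\,\mu\sigma_1\sqrt{r/n}$ and then round this up to $4\,\mu\sigma_1\sqrt{r/n}$ before multiplying by $\sqrt{3}/\sigma_r$, which gives $4\sqrt{3}\approx 6.93 > 6$, so your chain of inequalities does not literally yield $\mu_1 = 6\mu\sigma_1/\sigma_r$. Keeping the exact prefactor, $(\tfrac{9}{4} + 2\sqrt{3/8})\sqrt{3}\approx 6.02$, which matches the paper's constant up to the paper's own slack (the paper uses $\|C\|^2\leq 9/8$ where the honest bound is $(9/8)^2$, so its $\sqrt{36}=6$ is similarly generous). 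Either carry the exact factor and write ``$\leq 6.1$'' or note that the constant $6$ is claimed up to this kind of rounding; as written, the jump from your displayed inequality to the stated $\mu_1$ does not follow.
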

For the above two lemmas to hold, 
we need a good initial guess $\hU^{(0)}$ with incoherence less than $4\mu$ 
and error upper bounded by $d(\hU^{(0)},U)\leq 1/2$. Next lemmas shows that we can get 
such a good initial guess by singular value decomposition and truncation. 
And this finishes the proof of Theorem \ref{thm:matrixam}.

\begin{lemma}[Bound on the initial guess]
	\label{lem:aminitial}
	Let $\hU^{(0)}$ be the output of step 3 in the alternating minimization algorithm, 
	and let $\mu_0$ be the incoherence of $\hU^{(0)}$. 
	Assuming $\mu\leq\sqrt{\sigma_\k(M)\,\n/(32\,\sigma_1(M)\,\k^{1.5})}$ and 
	$\|\cP_\Omega(E)\|_2 \leq \sigma_\k(M)/(32 \sqrt{\k}) $, 
	we have the following upper bound on the error and the incoherence: 
	\begin{eqnarray*}
		d(\hU^{(0)},U) &\leq& \frac12, \\
		\mu_0 &\leq& 4 \,\mu. \\
	\end{eqnarray*}
\end{lemma}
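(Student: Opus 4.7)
\textbf{Plan for the proof of Lemma \ref{lem:aminitial}.}
The initial guess $\hU^{(0)}$ consists of the top-$r$ eigenvectors of
$\cP_{\Omega_2}(S_2) = M - \cB(M) + \cP_{\Omega_2}(E)$, where I write $\cB$ for the operator that \emph{retains} the $\ell\times\ell$ block-diagonal entries (so that $M-\cB(M) = \cP_{\Omega_2}(M)$). The natural three-step strategy is: (i) bound the total perturbation $F \equiv -\cB(M)+\cP_{\Omega_2}(E)$ in spectral norm, (ii) convert this to a distance bound via Davis--Kahan, and (iii) establish incoherence via a row-block analysis of the eigenvector equation.

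For step (i), the matrix $\cB(M)$ is itself block-diagonal with $i$-th block $U^{(i)}\,S\,(U^{(i)})^T$, so
$\|\cB(M)\|_2 = \max_i \|U^{(i)} S (U^{(i)})^T\|_2 \le \mu^2 \sigma_1(M)\, r/n$.
The hypothesis $\mu \le (\sigma_r(M)/\sigma_1(M))\sqrt{n/(32\,r^{1.5})}$ then gives $\|\cB(M)\|_2 \le \sigma_r(M)/(32\sqrt{r})$. Combined with the noise hypothesis $\|\cP_{\Omega_2}(E)\|_2 \le \sigma_r(M)/(32\sqrt{r})$, this yields $\|F\|_2 \le \sigma_r(M)/(16\sqrt{r})$.

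For step (ii), Weyl's inequality implies $\hlambda_r(\hM) \ge (15/16)\sigma_r(M)$ and $\hlambda_{r+1}(\hM) \le \sigma_r(M)/(16\sqrt{r})$, so the spectral gap is at least $(7/8)\sigma_r(M)$. Davis--Kahan's $\sin\Theta$ theorem then gives $d(\hU^{(0)},U) \le \|F\|_2/((7/8)\sigma_r(M)) \le 1/(14\sqrt{r}) \le 1/2$, which also yields $\|\hLambda^{-1}\|_2 \le 16/(15\,\sigma_r(M))$.

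For step (iii), I use the eigenvector identity $\hU^{(0)} = \hM\,\hU^{(0)}\,\hLambda^{-1}$. Projecting onto the $i$-th block and using that $P_i\, \cP_{\Omega_2}(M) = P_i M (I - P_i) = U^{(i)} S U^T (I-P_i)$,
\begin{eqnarray*}
P_i \hU^{(0)} &=& U^{(i)}\, S\, U^T (I - P_i)\, \hU^{(0)}\, \hLambda^{-1} \;+\; P_i\, \cP_{\Omega_2}(E)\, \hU^{(0)}\, \hLambda^{-1}.
\end{eqnarray*}
The first summand is bounded by $\|U^{(i)}\|_2 \cdot \|S\|_2 \cdot \|U^T(I-P_i)\hU^{(0)}\|_2 \cdot \|\hLambda^{-1}\|_2$; after pulling out the factor $\mu\sqrt{r/n}$ and using the assumed smallness of $\kappa\mu^2 r/n \le 1/(32\sqrt{r})$, this contributes at most a constant multiple of $\mu\sqrt{r/n}$. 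For the second summand I use the row-wise noise control inherited from the setting (each row-block of $\cP_{\Omega_2}(E)$ has Frobenius norm $\lesssim \sigma_1(M)\mu\sqrt{r/n}$) together with $\|\hLambda^{-1}\|_2 \le 16/(15\sigma_r(M))$ and $\kappa \mu$ small, giving another contribution $\lesssim \mu\sqrt{r/n}$. Summing and tracking constants yields $\|P_i\hU^{(0)}\|_2 \le 4\mu\sqrt{r/n}$ for every $i\in[n]$, i.e.\ $\mu_0 \le 4\mu$.

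\textbf{Main obstacle.} The crux is step (iii). A naive decomposition $\hU^{(0)} = UA + U_\perp B$ with $\|B\|_2\le 1/2$ is useless because the orthogonal complement $U_\perp$ is not incoherent: $\|P_i U_\perp\|_2$ can be close to $1$, so the $U_\perp B$ piece alone would overwhelm the target bound $4\mu\sqrt{r/n}$. One must instead work directly with the eigen-equation and exploit the fact that $P_i M$ factors through $U^{(i)}$ (so the ``signal'' row-block is already small), while $P_i\cP_{\Omega_2}(E)$ only picks up off-diagonal noise, whose row-wise scale is what we assume to control. This row-wise perturbation analysis parallels, and in fact serves as the base case for, the inductive incoherence argument of Lemma~\ref{lem:amincoherence}.
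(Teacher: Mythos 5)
Your steps (i) and (ii) are essentially sound and give the distance bound, but step (iii) — the incoherence bound — does not actually deliver the claimed $\mu_0 \le 4\mu$, and this is precisely the point where the paper uses a different and crucial ingredient.

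The paper's proof does \emph{not} try to show that the top-$r$ eigenvectors $\tU$ of $\cP_{\Omega_2}(S_2)$ are already incoherent. Instead it performs an explicit \emph{truncation}: it forms $\bU$ by zeroing out every $\ell\times r$ block of $\tU$ whose Frobenius norm exceeds $2\mu\sqrt{r/n}$, and sets $\hU^{(0)}$ to be the orthonormal basis of $\bU$. By construction every block of $\bU$ has norm at most $2\mu\sqrt{r/n}$, and since $d(\tU,U)\le 1/(8\sqrt{r})$ implies $\|R^{-1}\|_2\le 2$ for the QR factor of $\bU$, the incoherence bound $\mu_0\le 4\mu$ is completely free of the condition number $\kappa=\sigma_1(M)/\sigma_r(M)$. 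A companion lemma then shows that the truncation only mildly degrades the subspace distance, giving $d(\hU^{(0)},U)\le 1/2$.

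Your eigen-equation argument cannot reproduce this $\kappa$-free constant. Even granting the sharper bookkeeping you would need (bounding $\|SU^T\hU^{(0)}\hLambda^{-1}\|_2\le 16/15$ using the eigen-identity $SU^T\hU^{(0)} = U^T\hU^{(0)}\hLambda + U^T(\cB(M)-\cP_{\Omega_2}(E))\hU^{(0)}$ rather than the crude $\|S\|_2\cdot 1\cdot\|\hLambda^{-1}\|_2$, which as written in your sketch leaves an uncontrolled factor $\kappa$), the noise summand $P_i\cP_{\Omega_2}(E)\hU^{(0)}\hLambda^{-1}$ remains a problem. Under the row-wise noise bound $\|\cP_\Omega(E)_{(j,a)}\|\lesssim \sigma_1(M)\mu\sqrt{r/(n\ell)}$ you invoke, its $i$-th block has norm on the order of $\sigma_1(M)\mu\sqrt{r/n}\cdot\|\hLambda^{-1}\|_2 \approx \kappa\,\mu\sqrt{r/n}$. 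This is exactly the mechanism by which Lemma~\ref{lem:amincoherence} produces incoherence $\mu_1 = 6\kappa\mu$ for later iterates; your argument is, as you yourself observe, essentially the base case of that lemma, and it inherits the same $\kappa$-dependent constant. It therefore proves at best $\mu_0 \lesssim \kappa\mu$, not $\mu_0\le 4\mu$, and fails whenever $\kappa$ is large. Note also that Lemma~\ref{lem:aminitial} as stated only assumes a spectral-norm bound on $\cP_\Omega(E)$, not a row-wise bound; the paper's truncation argument needs only the former, while your approach smuggles in the latter. The truncation step is the missing idea.
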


\subsubsection{Proofs of Lemmas ~\ref{lem:ammatrixbound}, \ref{lem:amQR}, \ref{lem:amincoherence}, \ref{lem:aminitial}}
\label{sec:ammatrixbound}
\begin{proof}[Proof of Lemma~\ref{lem:ammatrixbound}]
First, we prove the following upper bound for $\mu_1$-incoherent $\hU^{(t+1)}$. 
\begin{eqnarray*}
	\|\Fmiss^{(t+1)}\|_F &\leq& \frac{\sigma_1(\M)\k^{1.5}\mu\mu_1}{n(1-\frac{\mu_1^2 \k}{n})} d(\hU^{(t)},U)  \;.
\end{eqnarray*} 
We drop the time index $(t+1)$ whenever it is clear from the context, 
to simplify notations. 
Let $\F_{(i,a)}\in\reals^{\k}$ be a column vector 
representing the $(\ell(i-1)+a)$-th row of $\Fmiss\in\reals^{\ell n \times \k}$.
We know from \eqref{eq:amupdaterule} that 
\begin{eqnarray}
	\F_{(i,a)} &=& (B^{(i,a)})^{-1} \, \underbrace{\big( B^{(i,a)} D - C^{(i,a)} \,\big)}_{\equiv\,H^{(i)}}\, S \, U_{(i,a)}\;,
	\label{eq:degFmiss}
\end{eqnarray}
where we define $H^{(i)}\equiv B^{(i,a)} D - C^{(i,a)}$. 
Notice that we dropped $a$ from the index to emphasize that 
$B^{(i,a)}$ and $C^{(i,a)}$ do not depend on $a$. 
\begin{eqnarray*}
	 \|\Fmiss\|_F 
			&\leq& \sqrt{\sum_{i,a} \|(B^{(i,a)})^{-1}\|_2^2 \, \| H^{(i)}\, S \, U_{(i,a)}\|^2} \\ 
			&=&  \max_{j,b} \|(B^{(j,b)})^{-1}\|_2 \, \max_{x\in\reals^{\ell n\times \k}, \|x\|_F=1} \sum_{i\in[n],a\in[\ell],q\in[\k]} 
			x_{(i,a),q}\, e_q^T\, H^{(i)}\, S \, U_{(i,a)} \;. 
\end{eqnarray*}
To upper bound the first term, notice that 
$ \|(B^{(j,b)})^{-1}\|_2 \leq 1/\sigma_\k(B^{(j,b)}) $. 
Since $B^{(j,b)} = \id_{\k\times \k} - \sum_{a\in[\ell]}\hU_{(j,a)}(\hU_{(j,a)})^T$, 
and by incoherence property from Lemma \ref{lem:amincoherence}, 
we have 
\begin{eqnarray}
	\| (B^{(j,b)} )^{-1}\|_2 \leq \frac{1}{1- \frac{\mu_1^2 \k}{n}}\;, \label{eq:boundB}
\end{eqnarray}
for all $(j,b)$.

The second term can be bounded using Cauchy-Schwarz inequality: 
\begin{eqnarray*}
	 \sum_{i\in[n],a\in[\ell],q\in[\k]} 
			x_{(i,a),q}\, e_q^T\, H^{(i)}\, S \, U_{(i,a)} &=& \sum_{i\in[n],q,p\in[\k]}  \big(\sum_{a\in[\ell]} S_p \, U_{(i,a),p} x_{(i,a),q} \big)\, \big(e_q^T\, H^{(i)}e_p\big) \\
			 &\leq& 
	 	\sqrt{ \sum_{i,p,q}\big(\sum_{a\in[\ell]} S_p \, U_{(i,a),p} x_{(i,a),q} \big)^2} \sqrt{ \sum_{i,p,q}( e_q^T\, H^{(i)}e_p)^2 } \;,
\end{eqnarray*}
where $S_p$ is the $p$-th eigenvalue of $\M$. 
Applying Cauchy-Schwarz again, and by the incoherence of $U$ is and $\|x\|=1$, 
\begin{eqnarray}
	\sum_{i,p,q}\big(\sum_{a\in[\ell]} S_p \, U_{(i,a),p} x_{(i,a),q} \big)^2 &\leq& 
	\sum_{i,p,q}S_p^2 \big(\sum_{a\in[\ell]}  \, U_{(i,a),p}^2 \sum_{b\in[\ell]} x_{(i,b),q}^2 \big) \nonumber\\
		&\leq& \sigma_1(\M)^2\, \frac{\mu^2\, \k }{ n } \;. \label{eq:ammatrixbound1}
\end{eqnarray}
\begin{eqnarray}
	\sum_{i,p,q}( e_q^T\, H^{(i)}e_p)^2 
		&=&  \sum_{i,p,q}  \Big(\sum_a \hU_{(i,a),q} \big(\, U_{(i,a),p} - \hU_{(i,a)}^T\hU^T U_p \,\big)\Big)^2 \nonumber\\ 
		&\leq& \sum_i \Big\{ \sum_{a,q} \hU_{(i,a),q}^2 \sum_{b,p} \big(\, U_{(i,b),p} - \hU_{(i,b)}^T\hU^T U_p \,\big)^2 \Big\} \nonumber\\
		&\leq& \frac{\mu_1^2 \,\k }{n}\sum_{i,b,p} \big(\, U_{(i,b),p} - \hU_{(i,b)}^T\hU^T U_p \,\big)^2 \nonumber\\
		&\leq& \frac{\mu_1^2 \,\k }{n} \big(\, \k - {\rm Tr}( U^T \hU \hU^T U) \,\big)^2 \nonumber\\
		&\leq& \frac{\mu_1^2 \,\k^2 \, d(\hU,U)^2 }{n}  \label{eq:ammatrixbound2}\;,
\end{eqnarray}
where the last inequality follows from the fact that 
$d(\hU,U)^2 = \| \hU_\perp^TU\|_2^2 = \|U^T \hU_\perp\hU_\perp^T U\|_2 = \|\id_{\k\times \k} - U^T\hU\hU^TU \|_2 = 1-\sigma_\k(\hU^TU)^2 \geq 1-(1/\k) \sum_p \sigma_p(\hU^TU)^2$. 

Now, we prove an upper bound on $\|\Fnoise^{(t+1)}\|_F$. 
Again, we drop the time index ${(t+1)}$ or $(t)$ whenever it is clear from the context. 
Let $\tF_{(i,a)}\in\reals^{\k}$ denote a column vector representing  the $(\ell(i-1)+a)$-th row of $\Fnoise$. 
We know from \eqref{eq:amupdaterule} that 
\begin{eqnarray}
	\tF_{(i,a)} &=& (B^{(i,a)})^{-1} \,\Big( \hU^TE_{(i,a)} - \sum_{b\in[\ell]} E_{(i,a)(i,b)} \hU_{i,b} \Big)\;,
	\label{eq:degFnoise}
\end{eqnarray}
where $E_{(i,a)}\in\reals^{\ell n}$ is an column vector representing the $(\ell(i-1)+a)$-th row of $E$. 
Then, 
\begin{eqnarray*}
 	\|\Fnoise\|_F 
	&\leq&  \sqrt{\sum_{i\in[\n],a\in[\ell]} \big\|(B^{(i,a)})^{-1}\big\|_2^2\,\Big\| \hU^TE_{(i,a)} - \sum_{b\in[\ell]} E_{(i,a)(i,b)} \hU_{i,b} \Big\|^2} \\
	&\leq& \max_{i,a} \big\|(B^{(i,a)})^{-1}\big\|_2 \, \|\cP_\Omega(E)\hU\|_F\\
	&\leq& \frac{1}{1-\frac{\mu_1^2\k}{\n}}\, \sqrt{\k}\, \|\cP_\Omega(E)\|_2  \;,
\end{eqnarray*}
where $\cP_\Omega$ is the projection onto the sampled entries defined in \eqref{eq:defcP}, 
and we used \eqref{eq:boundB} to bound $\|(B^{(i,a)})^{-1}\|_2$. 
\end{proof}
\begin{proof}[Proof of Lemma \ref{lem:amQR}]
From Lemma 7 in \cite{GAGG13}, we know that  
	\begin{eqnarray*}
		\|(R_U^{(t+1)})^{-1}\|_2 &\leq& \frac{1}{\sigma_\k(\M)\,\sqrt{1-d^2(U^{(t)},U)}  - \|\Fmiss^{(t+1)}\|_2 - \|\Fnoise^{(t+1)}\|_2 }\;.
	\end{eqnarray*}
From Lemma \ref{lem:ammatrixbound} with 
$\mu \leq (\sigma_\k(M)/(6\sigma_1(M)))\,\sqrt{n/(2\k^{1.5})}$ 
and $\|\cP_\Omega(E)\|_2 \leq \sigma_\k(M)/(16\sqrt{\k})$,  
we have $\|\Fnoise^{(t+1)}\|_2 \leq \sigma_\k(M)/8 $ 
and $\|\Fmiss^{(t+1)} \|_2 \leq (1/6) \sigma_\k(M) \, d(\hU^{(t)},U)$. 
Assuming $d(\hU^{(t)},U) \leq 1 / 2$, 
 this proves the desired claim. 
\end{proof}
\begin{proof}[Proof of Lemma \ref{lem:amincoherence}]
\label{sec:amincoherence}
Assuming that $\hU^{(t)}$ is $\tmu$-incoherent,  
we make use of the following set of inequalities: 
\begin{eqnarray*}
	\|(B^{(i,a)})^{-1}\|_2 &\geq& 1-(\tmu^2\k/n)\\
	\|B^{(i,a)}\|_2 &=& \|\id_{\k\times\k} - \hU_{(i)}\hU_{(i)}^T\|_2 \;\;\leq\;\; 1 \\
	\|D\|_2 &=&  \|\hU^T U\|_2 \;\;\leq \;\;1 \\
	\|C^{(i,a)}\|_2 &=& \| \hU^TU - \hU_{(i)}  U_{(i)}^T\|_2 \;\;\leq\;\; 1+\mu\tmu\,\k/n \;. 
\end{eqnarray*}
Also, from Lemma \ref{lem:amQR}, we know that 
if $\tmu\leq\sqrt{n/2\k}$ as per our assumption, 
then $\|(R_U^{(t+1)})^{-1}\|_2 \leq \sqrt{3}/\sigma_\k(\M)$.   
Then, by \eqref{eq:amupdaterule} and the triangular inequality, 
\begin{eqnarray*}
	 \sum_{a\in[\ell]} \| \hU^{(t+1)}_{(i,a)} \|^2 
		&\leq&  \sum_{a\in[\ell]} \big\|  (B^{(i,a)})^{-1}\, C^{(i,a)}\,S  \,U_{(i,a)} \,+\, (B^{(i,a)})^{-1}\, N_{(i,a)}\big\|^2  \,\big\|\big( R_U^{(t+1)}\big)^{-1}\big\|_2^2 \\ 
		&\leq& \sum_{a\in[\ell]} 2\big\|\big( R_U^{(t+1)}\big)^{-1}\big\|_2^2  \, \big\|(B^{(i,a)})^{-1}\big\|_2^2\, 
			\Big\{   \, \big\|C^{(i,a)} \big\|_2^2 \,\|S\|_2^2  \,\|U_{(i,a)}\|^2 \,+\,  \|N_{(i,a)}\|^2 \Big\} \\ 
		&\leq& \frac{6}{\sigma_\k(M)^2\, (1-(\tmu^2\,\k/n))^2} \, \sum_{a\in[\ell]}  \Big\{ \sigma_1(\M)^2  \Big( 1+\mu\tmu \k/n \Big)\|U_{(i,a)}\|^2\,   +\,  \big\| \hU ^T\,\cP_\Omega(E)_{(i,a)} \big\|^2  \Big\}  \\
		&\leq& \frac{6}{\sigma_\k(M)^2\, (1-(\tmu^2\,\k/n))^2} \,  \Big\{ \sigma_1(\M)^2\Big(1+\frac{\mu\tmu \k}{n}\Big) \frac{\mu^2 \k}{n} \,+\, \| \cP_\Omega(E)_{(i,a)} \|^2  \Big\} \\
		&\leq& \frac{36 \sigma_1(M)^2}{\sigma_\k(M)^2 } \,  \frac{\mu^2 \k}{n}  \;, 
\end{eqnarray*}
where the last inequality follows from 
our assumption that $\tmu\leq \sqrt{n/(2\k)}$, 
$\mu\leq (\sigma_\k(M)/\sigma_1(M))\sqrt{n/(32\k)} $, 
and $\|\cP_\Omega(E)_{(i,a)}\| \leq \sigma_1(M) \mu \,\sqrt{3\k/(8n\ell)}$.
This proves that 
$\hU^{(t+1)}$ is $\mu_1$-incoherent for 
$\mu_1 = 6 \mu (\sigma_1(\M)/\sigma_\k(\M))$. 
\end{proof}
\begin{proof}[Proof of Lemma \ref{lem:aminitial}]
\label{sec:aminitial}
Let $\cP_\k(\hM)=\tU \tS \tU^T$ denote the best rank-$\k$ approximation of 
the observed matrix $\hM$ and 
$\cP_\Omega$ is the sampling mask operator defined in \eqref{eq:defcP} 
such that $M-\hM = \cP_\Omega(E) + M-\cP_\Omega(M)$.  Then, 
\begin{eqnarray}
	 \|\, \M-\cP_\k(\hM) \,\|_2 
		&\leq& \|\, \M- \hM \,\|_2 + \|\, \hM-\cP_\k(\hM) \,\|_2  \nonumber\\
		&\leq& 2\, \|\, \M- \hM \,\|_2 \nonumber\\
		&\leq& 2\, \big(\|\cP_\Omega(E)\|_2 + \|\M- \cP_{\Omega}(\M)\|_2 \big) \nonumber\\
		&\leq& 2\, \big(\|\cP_\Omega(E)\|_2 + \sigma_1(\M)\,\musp^2\k/n \big) \;, \label{eq:aminitspectral} 
\end{eqnarray}
where we used the fact that 
$\cP_\k(\hM)$ is the best rank-$\k$ approximation such that 
$\| \hM-\cP_\k(\hM)\|_2 \leq \|\hM-A\|_2$ for any rank-$\k$ matrix $A$, 
and $\|\M- \cP_{\Omega}(\M)\|_2 = \max_i \|U_{(i)}SU_{(i)}^T\|_2 \leq (\musp^2\k/\n)\sigma_1(M)$. 

The next series of inequalities provide an upper bound on $d(\tU,U)$ in terms of the spectral norm: 
\begin{eqnarray*}
	 \|\, \M -\cP_\k(\hM)\,\|_2 &=&  \|\, (\tU\tU^T) (USU^T-\tU\tS\tU^T) + (\tU_\perp\tU_\perp^T)(USU^T-\tU\tS\tU^T)\,\|_2\\
	 	&\geq&  \|(\tU_\perp\tU_\perp^T)(USU^T-\tU\tS\tU^T)\,\|_2 \\
		&=& \|\tU_\perp^T USU^T \,\|_2\\
		&\geq& \sigma_\k(S)\, \|\tU_\perp^T U\|_2  \\
		&\geq& \sigma_\k(S)\, d(\tU,U) \;,
\end{eqnarray*}
Together with \eqref{eq:aminitspectral}, this implies that 
\begin{eqnarray*}
	d(\tU,U) &\leq& \frac{2}{\sigma_\k(\M)}\big(\|\cP_\Omega(E)\|_2 + \sigma_1(\M) \musp^2 \k/n \big)\;.
\end{eqnarray*}
For $\|\cP_\Omega(E)\|_2 \leq \sigma_\k(\M)/(32\,\sqrt{\k})$ and 
$\musp \leq \sqrt{ \sigma_\k(M)\,n/(32\,\sigma_1(M)\,\k^{1.5})}$ as per our assumptions, 
we have 
\begin{eqnarray*}
	d(\tU,U) &\leq& \frac{1}{8\,\sqrt{\k}} \;.
\end{eqnarray*} 

Next, we show that by truncating large components of $\tU$, 
we can get an incoherent matrix $\hU^{(0)}$ which is also close to $U$. 
Consider a sub-matrix of $U$ which consists of the rows 
from $\ell(i-1)+1$ to $\ell i$. 
We denote this block by $U_{(i)}\in\reals^{\ell\times\k}$. 
Let $\bU$ denote an $\ell\n\times \k$ matrix obtained from $\tU$ 
by setting to zero all blocks that have Frobenius norm 
greater than $2\mu\sqrt{\k/n}$. 
Let $\hU^{(0)}$ be the orthonormal basis of $\bU$. 
We use the following lemma to bound the error and incoherence 
of the resulting $\hU^{(0)}$. A similar lemma has been proven in \cite[Lemma C.2]{JNS13}, 
and we provide a tighter bound in the following lemma.
For $\delta\leq1/(8\sqrt{\k})$, this lemma proves that 
we get the desired bound of $d(\hU^{(0)},U)\leq1/2$ and $\mu_0\leq 4\mu$. 
\end{proof}
\begin{lemma}
	Let $\mu_0$ denote the incoherence of $\bU$, 
	and define $\delta\equiv d(\tU,U)$. Then 
	\begin{eqnarray*}
		d(\hU^{(0)},U) \;\leq\; \frac{3\sqrt{\k}\,\delta}{1-2\sqrt{\k}\,\delta}\;,\;\;\;\;\; \text{ and }\;\;\;\;\;
		\mu_0 \;\leq\; \frac{2\mu}{1-2\sqrt{\k}\,\delta}\;.
	\end{eqnarray*}
\end{lemma}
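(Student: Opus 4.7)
The plan is to control three things in turn: how much mass the truncation removes from $\tU$, the least singular value of the truncated matrix $\bU$, and finally the incoherence and the distance of the orthonormalized matrix $\hU^{(0)}$.

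First, I would align the bases. Since $d(\tU,U)=\delta$ equals the sine of the largest principal angle between $\text{range}(U)$ and $\text{range}(\tU)$, a standard principal-angle identity yields an $r\times r$ orthogonal $Q$ with
\[
\|\tU-UQ\|_F^2 \;=\; 2\sum_{i=1}^r (1-\cos\theta_i) \;\le\; 2r\delta^2.
\]
Because right-multiplication by $Q$ preserves every block's Frobenius norm, $UQ$ is still $\mu$-incoherent. So, replacing $U$ by $UQ$, I may assume WLOG $\|\tU-U\|_F\le\sqrt{2r}\,\delta$.

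Second, I would bound the truncation error $\|\tU-\bU\|_2$. Let $B\equiv\{i:\|\tU^{(i)}\|_F>2\mu\sqrt{r/n}\}$ be the indices of the zeroed-out blocks. Since $\|U^{(i)}\|_F\le\mu\sqrt{r/n}$ by incoherence, for every $i\in B$ the reverse triangle inequality gives $\|\tU^{(i)}-U^{(i)}\|_F>\mu\sqrt{r/n}$, and in fact $\|\tU^{(i)}\|_F < 2\|\tU^{(i)}-U^{(i)}\|_F$. Summing over $i\in B$,
\[
\|\tU-\bU\|_F^2 \;=\; \sum_{i\in B}\|\tU^{(i)}\|_F^2 \;\le\; 4\sum_{i\in B}\|\tU^{(i)}-U^{(i)}\|_F^2 \;\le\; 8r\delta^2,
\]
so $\|\tU-\bU\|_2\le\|\tU-\bU\|_F\le 2\sqrt{2r}\,\delta$. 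A slightly tighter accounting (using the half-angle identity $1-\cos\theta_i=2\sin^2(\theta_i/2)$, which gains a factor of $2$, together with the $2{:}1$ gap between $\|\tU^{(i)}\|_F$ and $\|U^{(i)}\|_F$ that holds for $i\in B$) should sharpen this to $\|\tU-\bU\|_2\le 2\sqrt{r}\,\delta$, which is the constant needed in the denominator of the stated bounds. By Weyl's inequality, $\sigma_r(\bU)\ge 1-\|\tU-\bU\|_2\ge 1-2\sqrt{r}\,\delta$.

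Third, write $\hU^{(0)}=\bU\,R^{-1}$ for the (QR or polar) orthogonalization, so that $\|R^{-1}\|_2\le 1/\sigma_r(\bU)\le 1/(1-2\sqrt{r}\,\delta)$. The incoherence of $\hU^{(0)}$ then satisfies, for every block $i$,
\[
\|\hU^{(0),(i)}\|_F \;\le\; \|\bU^{(i)}\|_F\cdot\|R^{-1}\|_2 \;\le\; \frac{2\mu\sqrt{r/n}}{1-2\sqrt{r}\,\delta},
\]
where the $2\mu\sqrt{r/n}$ comes directly from the truncation threshold. For the distance,
\[
d(\hU^{(0)},U) \;=\; \|U_\perp^T\hU^{(0)}\|_2 \;\le\; \|U_\perp^T\bU\|_2\,\|R^{-1}\|_2 \;\le\; \frac{\|U_\perp^T\tU\|_2+\|U_\perp^T(\bU-\tU)\|_2}{1-2\sqrt{r}\,\delta} \;\le\; \frac{\delta+2\sqrt{r}\,\delta}{1-2\sqrt{r}\,\delta} \;\le\; \frac{3\sqrt{r}\,\delta}{1-2\sqrt{r}\,\delta}.
\]

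The main obstacle is the careful bookkeeping of constants: one must reconcile the operator-norm block-incoherence $\|U^{(i)}\|_2\le\mu\sqrt{r/n}$ used elsewhere with the Frobenius-norm threshold used in the truncation step, and extract precisely the constants $2$ and $3$ appearing in the statement. Aside from that, every estimate is either an elementary norm inequality or an application of Weyl's inequality, so no new ideas beyond the outline above should be needed.
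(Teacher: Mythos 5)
Your proof has the same skeleton as the paper's: bound the Frobenius norm of the removed blocks $\|\tU-\bU\|_F$, lower-bound $\sigma_\k(\bU)$ via Weyl to control $\|R^{-1}\|_2$, then read off the incoherence and distance bounds from $\hU^{(0)}=\bU R^{-1}$. Everything after the truncation bound matches the paper verbatim. The difference, and the gap, is in how you bound $\|\tU-\bU\|_F$.

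You align $\tU$ with a rotated copy $UQ$ of $U$ and obtain $\|\tU-UQ\|_F^2=2\sum_i(1-\cos\theta_i)\le 2\k\delta^2$. Feeding this into the $2$:$1$ gap argument gives $\|\tU-\bU\|_F\le 2\sqrt{2\k}\,\delta$, which is a factor $\sqrt{2}$ worse than the $2\sqrt{\k}\,\delta$ you need to produce $1-2\sqrt{\k}\,\delta$ in the denominators. Your proposed sharpening — that the half-angle identity $1-\cos\theta_i=2\sin^2(\theta_i/2)$ buys a factor of $2$ — does not work: the exact relation is $1-\cos\theta=\sin^2\theta/(1+\cos\theta)\le\sin^2\theta$, so the gain over $\sin^2\theta$ is the factor $1/(1+\cos\theta)$, which is strictly larger than $1/2$ for every $\theta>0$ and only approaches $1/2$ as $\theta\to 0$. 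As written, your argument therefore proves the lemma only with $2\sqrt{2\k}$ in place of $2\sqrt{\k}$ (and $3\sqrt{2\k}$ in the numerator), not the stated constants.

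The paper sidesteps this by comparing $\tU$ not to a rotated $U$ but to its orthogonal projection onto ${\rm range}(U)$, namely $U(U^T\tU)=UU^T\tU$. Then
\[
\|\tU-U(U^T\tU)\|_F \;=\; \|(\id-UU^T)\tU\|_F \;\le\; \sqrt{\k}\,\|(\id-UU^T)\tU\|_2 \;=\; \sqrt{\k}\,\delta,
\]
since the difference has rank at most $\k$; this is $\bigl(\sum_i\sin^2\theta_i\bigr)^{1/2}$ rather than $\bigl(2\sum_i(1-\cos\theta_i)\bigr)^{1/2}$, and the former is always at most the latter. Because $\|U^T\tU\|_2\le 1$, the matrix $U(U^T\tU)$ inherits the $\mu$-incoherence bound on its blocks, so your $2$:$1$ gap argument on the zeroed blocks applies unchanged to $U(U^T\tU)$ and yields $\|\tU-\bU\|_F\le 2\sqrt{\k}\,\delta$. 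Substituting $U(U^T\tU)$ for $UQ$ in your Step 2 and leaving the rest untouched recovers the stated constants. Separately, you correctly flag the mismatch between the operator-norm incoherence in \eqref{eq:defincoherence} and the Frobenius-norm threshold used in the truncation; that inconsistency is in the paper itself, and both your proof and the paper's tacitly use Frobenius-norm incoherence of the blocks of $U$.
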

\begin{proof}
	Denote the QR decomposition of $\bU$ by $\bU=\hU^{(0)} R$
	and let $\delta\equiv d(\tU,U)$. 
	Then, 
	\begin{eqnarray}
		d(\hU^{(0)},U) &=& \| U_\perp^T \hU^{(0)} \|_2 \nonumber\\
			&\leq& \| U_\perp^T\bU\|_2\,\|R^{-1}\|_2\nonumber\\
			&\leq& \big( \| U_\perp^T(\bU-\tU)\|_2 + \| U_\perp^T \tU \|_2 \big)\, \|R^{-1}\|_2\nonumber\\
			&=& \big( \| \bU-\tU\|_2 + \delta \big)\, \|R^{-1}\|_2\;.\label{eq:aminitbound}
	\end{eqnarray}
	
	First, we upper bound $\|\bU-\tU\|_F$ as follows. 
	Let $\cP()$ denote a projection operator that 
	sets to zero those blocks whose Frobenius norm is smaller than $2\mu\sqrt{\k/n}$ 
	such that $\cP(\tU)=\tU-\bU$. Then, 
	\begin{eqnarray}
		\|\cP(\tU)\|_F &\leq& \|\cP(\tU-U(U^T\tU)) \|_F \,+\, \|\cP(U(U^T\tU))\|_F \;. \label{eq:aminittU}
	\end{eqnarray}
	The first term can be bounded by 
	$ \|\cP(\tU-U(U^T\tU)) \|_F \leq \|\tU-U(U^T\tU) \|_F \leq \sqrt{\k} \|\tU-U(U^T\tU) \|_2 = \sqrt{\k}\, \delta$. 
	The second term can be bounded by 
	$ \|\cP(U(U^T\tU))\|_F = \|\cP(U)\,(U^T\tU)\|_F \leq  \| \cP(U)\|_F$. 
	 By incoherence of $U$, we have that 
	$ \| \cP(U)\|_F  \leq \sqrt{N} \mu\sqrt{\k/n}$, 
	where $N$ is the number of $\ell\times\k$ block matrices that is not set to zero by $\cP(\cdot)$. 
	
	To provide an upper bound on $N$, notice that 
	the incoherence of an $\ell\n\times\k$ matrix $U(U^T\tU)$ is $\mu$. 
	This follows from the fact that $\|U^T\tU\|_2\leq 1$. 
	Then, 
	\begin{eqnarray*} 
		\|U(U^T\tU)-\tU\|_F &\geq& \|\cP(U(U^T\tU)-\tU) \|_F\\
			&\geq& \sqrt{N}\, \mu\sqrt{\frac{\k}{n}}\;,
	\end{eqnarray*}
	where the last line follows from the fact that there are $N$ blocks where 
	the Frobenius norm of $U(U^T\tU$ in that block is at most $\mu\sqrt{\k/n}$ and 
	the Frobenius norm of $\tU$ is at least $2\mu\sqrt{\k/n}$. 
	On the other hand, we have $\|U(U^T\tU)-\tU\|_F \leq \sqrt{\k} \delta$. 
	Putting these inequalities together, we get that 
	\begin{eqnarray*}
		\sqrt{N} \leq \frac{\delta\,\sqrt{n}}{\mu} \;\; \text{ and } \|\cP(U(U^T\tU))\|_F \leq \sqrt{\k}\,\delta\;.
	\end{eqnarray*}
	Substituting these bounds in \eqref{eq:aminittU} gives 
	\begin{eqnarray}
		\| \tU-\bU \|_F &\leq&  2\,\delta\,\sqrt{\k}\;. \label{eq:aminitbound1}
	\end{eqnarray}
	Next, we show that 
	\begin{eqnarray}
		\| R^{-1}\|_2 &\leq&  \frac{1}{1-2\delta\sqrt{\k}}\;. \label{eq:aminitbound2}
	\end{eqnarray}
	By the definition of $R$, 
	we know that $\|R^{-1}\|_2 = 1/\sigma_\k(R) = 1/\sigma_\k(\hU^{(0)}=1/\sigma_\k(\bU)$. 
	Using Weyl's inequality, 
	we can lower bound $\sigma_\k(\bU) = \sigma_\k(\bU-\tU+\tU) \geq \sigma_\k(\tU)-\sigma_1(\bU-\tU)$.
	Since $\tU$ is an orthogonal matrix and using \eqref{eq:aminitbound1}, 
	this proves \eqref{eq:aminitbound2}. 
	Substituting \eqref{eq:aminitbound1} and 
	\eqref{eq:aminitbound2} into \eqref{eq:aminitbound}, 
	we get 
	\begin{eqnarray*}
		d(\hU^{(0)},U) &\leq& \frac{(2\sqrt{\k}+1)\delta}{1-2\delta\sqrt{\k}}\;.
	\end{eqnarray*}
	For $\delta\leq$, this gives the desired bound. 
	
	To provide an upper  bound on the incoherence $\mu_0$ of $\hU^{(0)}$, 
	recall that the incoherence is defined as  
	$\mu_0 \sqrt{\k/n} = \max_i \|\hU^{(0)}_{(i)}\|_F = 	\max_i \|\bU_{(i)}R^{-1}\|_F$. 
	By construction, $\|\bU_{(i)}\|_F \leq 2\mu\sqrt{\k/n}$, 
	and from \eqref{eq:aminitbound2} we know that $\|R^{-1}\|_2\leq 1/(1-2\delta\sqrt{\k})$. 
	Together, this gives 
	\begin{eqnarray*}
		\mu_0 &\leq& \frac{2\mu}{1-2\delta\sqrt{\k}}\;.
	\end{eqnarray*}
	This finishes the proof of the desired bounds. 
\end{proof}


\subsection{Proof of Theorem~\ref{thm:tensorls}}
In this section, we provide a detailed proof of Theorem~\ref{thm:tensorls}. To this end, we first provide an infinite sample version of the proof, i.e., when $P_{\Omega_3}(S_3)=P_{\Omega_3}(M_3)$. Then, in the next subsection, we bound each element of $P_{\Omega_3}(S_3)-P_{\Omega_3}(M_3)$ and extend the infinite sample version of the proof to the finite sample case.

Recall that $\hM_2=\hU_{M_2} \hSigma_{M_2} \hU_{M_2}^T$, 
$\varepsilon = \|\hM_2-M_2\|_2\|/ \sigma_r(\M_2)$, 
$\M_2$ is $\mu$-incoherent and $\hM_2$ is $\mu_1$-incoherent. 
Incoherence of a matrix is defined as in \eqref{eq:defincoherence}. Then, the following two remarks can be easily proved using standard matrix perturbation results (for example, see~\cite{AHK12}). 
\begin{remark}\label{claim:tls_cl1}
Suppose $\|\hM_2-M_2\|_2\leq \varepsilon \sigma_r(M_2)$, then
$$1-4\frac{\varepsilon^2}{(1-\varepsilon)^2}\leq \sigma_r(U^T\hum)\leq \sigma_1(\hum^T U) \leq 1.$$
That is, 
$$\|(I-\hU_{M_2}\hU_{M_2}^T)U\|_2\leq \varepsilon,\text{ and },$$
$$\|(U^T\hum)^T(U^T\hum)-I\|\leq 8\frac{\varepsilon^2}{(1-\varepsilon)^2}.$$
\end{remark}

\begin{remark}\label{claim:tls_cl2}
  Suppose $\|\hM_2-M_2\|_2\leq \varepsilon\sigma_r(M_2)$, then 
$$\|\id-\hSigma_{M_2}^{-1/2}\hU_{M_2}^T M_2 \hU_{M_2}\hSigma_{M_2}^{-1/2}\|_2\leq 2\varepsilon.$$
\end{remark}
\begin{proof}

\begin{eqnarray*}
	\|I-\hSigma_{M_2}^{-1/2}\hU_{M_2}^T M_2 \hU_{M_2}\hSigma_{M_2}^{-1/2}\|_2 &=& \|\hSigma_{M_2}^{-1/2}\hU_{M_2}^T (\hM_2- M_2) \hU_{M_2}\hSigma_{M_2}^{-1/2}\|_2\\
		&\leq&   \|\hSigma_{M_2}^{-1/2}\hU_{M_2}^T\|_2^2 \, \|\hM_2- M_2\|_2 \\
		&\leq& \frac{1}{\sigma_r(\M_2)(1-\varepsilon)}\sigma_r(\M_2) \varepsilon \;,
\end{eqnarray*}
where we used the fact that 
$\|\hSigma_{M_2}^{-1/2}\|_2^2  \geq 1/\sigma_r(\hM_2)$ 
and $\sigma_r(\hM_2) \geq \sigma_r(\M_2) (1-\varepsilon)$ by Weyl's inequality.
For $\varepsilon<1/2$ we have the desired bound.  
\end{proof}

\noindent We now define the following operators: $\hnu$ and $\hA$. 
Define $\hnu:\R^{r\times r\times r}\rightarrow \R^{\ell n\times \ell n\times \ell n}$ as: 
\begin{equation}
  \label{eq:tls_hnu}
  \hnu_{ijk}(Z)=\begin{cases}\sum_{abc}Z_{abc}(\hU_{M_2}\hsm^{1/2})_{ia}(\hU_{M_2}\hsm^{1/2})_{jb}(\hU_{M_2}\hsm^{1/2})_{kc},& \text{ if } \lceil \frac{i}{\ell} \rceil\neq \lceil \frac{j}{\ell} \rceil\neq \lceil \frac{k}{\ell} \rceil\neq \lceil \frac{i}{\ell} \rceil, \\
0, & \mbox{otherwise}.\end{cases}
\end{equation}
Define $\hA: \R^{\ell n\times \ell n\times \ell n}\rightarrow \R^{r\times r\times r}$ as: 
\begin{equation}
\label{eq:tls_ha}
\hA(Z)=\hnu(Z)\left[\hU_{M_2}\hsm^{-1/2},\hU_{M_2}\hsm^{-1/2},\hU_{M_2}\hsm^{-1/2}\right].\end{equation}
Now, let $R_3$ be defined as: $R_3=\hsm^{-1/2}\hum^T U \Sigma V^T W^{1/2}$. Note that, using Remark~\ref{claim:tls_cl2}, 
$$\|R_3R_3^T-I\|\leq 2\varepsilon.$$
Also, define the following tensor:
\begin{equation}
	\label{eq:tls_g}
		\tG=\sum_{q\in [r]} \frac{1}{\sqrt{w_q}} (R_3\e_q \otimes R_3\e_q \otimes R_3\e_q).
\end{equation}
Note that, as $R_3$ is nearly orthonormal, $\tG$ is a {\em nearly} orthogonally decomposable tensor. 

We now present a lemma that shows that $P_{\Omega_3}(M_3)\left[\hU_{M_2}\hsm^{-1/2},\hU_{M_2}\hsm^{-1/2},\hU_{M_2}\hsm^{-1/2}\right]$ and $\hA(\tG)$ are ``close''. 

\begin{lemma}\label{lemma:tls_haz}
  $$P_{\Omega_3}(M_3)\left[\hU_{M_2}\hsm^{-1/2},\hU_{M_2}\hsm^{-1/2},\hU_{M_2}\hsm^{-1/2}\right]=\hA(\tG)+E,$$
where 
\begin{eqnarray*}
	\|E\|_F &\leq&  \frac{12 \,\mu_1^3 \, \mu \, r^{3.5} \, \sigma_1(M_2)^{3/2}\, \varepsilon}{n\sqrt{\wmin}  \sigma_r(M_2)^{3/2}}  \;,
\end{eqnarray*} 
and we denote the Frobenius norm of a tensor as $\|E\|_F = \{\sum_{i,j,k}E_{i,j,k}^2\}^{1/2}$
\end{lemma}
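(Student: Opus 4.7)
}

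The strategy is to identify the error $E$ as coming entirely from the block–diagonal portion of $\Omega_3^c$, and to exploit the identity $\hU_{M_2}^T(I-\hU_{M_2}\hU_{M_2}^T)=0$ to kill the corresponding ``off–diagonal'' contribution. Write $B\equiv\hU_{M_2}\hSigma_{M_2}^{-1/2}$, $\tilde\pi_q\equiv\hU_{M_2}\hU_{M_2}^T\pi_q$ and $\Delta_q\equiv\pi_q-\tilde\pi_q=(I-\hU_{M_2}\hU_{M_2}^T)\pi_q$. A direct substitution using $\hU_{M_2}\hSigma_{M_2}^{1/2}R_3e_q=\sqrt{w_q}\,\tilde\pi_q$ gives
$\hnu(\tG)=\cP_{\Omega_3}\!\left(\sum_q w_q\,\tilde\pi_q^{\otimes 3}\right)$, so
\[
E=\cP_{\Omega_3}\!\left(\sum_q w_q\bigl(\pi_q^{\otimes 3}-\tilde\pi_q^{\otimes 3}\bigr)\right)[B,B,B].
\]

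Next I would telescope $\pi_q^{\otimes 3}-\tilde\pi_q^{\otimes 3}$ into three rank-one tensors, each having $\Delta_q$ in exactly one slot, e.g.\ $\Delta_q\otimes\pi_q\otimes\pi_q$, $\tilde\pi_q\otimes\Delta_q\otimes\pi_q$ and $\tilde\pi_q\otimes\tilde\pi_q\otimes\Delta_q$. The key observation is that $B^T\Delta_q=\hSigma_{M_2}^{-1/2}\hU_{M_2}^T(I-\hU_{M_2}\hU_{M_2}^T)\pi_q=0$. Hence, if we removed the projection $\cP_{\Omega_3}$, each of these three tensors evaluated at $[B,B,B]$ would vanish. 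Writing $\cP_{\Omega_3}=I-\cP_{\Omega_3^c}$ and using this vanishing, the error reduces to $E=-\cP_{\Omega_3^c}\!\bigl(\sum_q w_q(\cdots)\bigr)[B,B,B]$, where $\Omega_3^c$ consists of triples $(i,j,k)$ having at least one pair in the same $\ell$-block.

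I would then decompose $\Omega_3^c=S_{12}\cup S_{13}\cup S_{23}$, where $S_{pq}$ collects triples whose $p$-th and $q$-th coordinates share a block, and bound each contribution separately. For the typical term $\cP_{S_{12}}(\Delta_q\otimes\pi_q\otimes\pi_q)[B,B,B]$, splitting the sum by block $m\in[n]$ factorises the result as
\[
\sum_{m\in[n]} \bigl((B^{(m)})^{\!T}\Delta_q^{(m)}\bigr)\otimes \bigl((B^{(m)})^{\!T}\pi_q^{(m)}\bigr)\otimes (B^T\pi_q),
\]
whose Frobenius norm is at most $\|B^T\pi_q\|\cdot\sum_{m}\|(B^{(m)})^{\!T}\Delta_q^{(m)}\|\cdot\|(B^{(m)})^{\!T}\pi_q^{(m)}\|$. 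The remaining two terms in the telescoping give similar expressions with $\tilde\pi_q$ in place of one of the slots.

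The bounds are then obtained from the incoherence hypotheses and the spectral perturbation bounds of Remarks~\ref{claim:tls_cl1}--\ref{claim:tls_cl2}: block incoherence gives $\|B^{(m)}\|_2\leq \mu_1\sqrt{r/n}/\sqrt{\sigma_r(\hM_2)}$ and $\|\pi_q^{(m)}\|\leq\mu\sqrt{r/n}\sqrt{\sigma_1(M_2)/w_q}$ (using $\pi_q=(1/\sqrt{w_q})U\Sigma_{M_2}^{1/2}V'^{T}e_q$ for an $r\times r$ orthogonal $V'$); Cauchy--Schwarz yields $\sum_m\|\Delta_q^{(m)}\|\|\pi_q^{(m)}\|\leq\|\Delta_q\|\|\pi_q\|\leq \varepsilon\,\sigma_1(M_2)/w_q$; and $\|B^T\pi_q\|\leq\sqrt{2\sigma_1(M_2)/(w_q\sigma_r(M_2))}$. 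Multiplying through and applying $\sigma_r(\hM_2)\geq(1-\varepsilon)\sigma_r(M_2)$, each of the three terms is controlled by a constant times $\mu_1^{3}\mu\, r^{3.5}\sigma_1(M_2)^{3/2}\varepsilon/(n\,\sigma_r(M_2)^{3/2}\,w_q^{3/2})$ (after absorbing an $\|\tilde\pi_q^{(m)}\|$ via $\mu_1$-incoherence for the middle/last terms). Summing over $q$, the factor $\sum_q w_q\cdot w_q^{-3/2}\leq r/\sqrt{w_{\min}}$ produces the claimed prefactor. The main obstacle is the third telescoping term (where $\Delta_q$ is in the last slot): there $B^T\Delta_q=0$ no longer helps ``for free,'' and one must verify that using $\tilde\pi_q$ (rather than $\pi_q$) in the first two slots yields the same incoherence control; this is handled by noting $\|\tilde\pi_q^{(m)}\|\leq \|\hU_{M_2}^{(m)}\|_2\|\pi_q\|\leq \mu_1\sqrt{r/n}\sqrt{\sigma_1(M_2)/w_q}$. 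Assembling the three bounds and the three choices of $S_{pq}$ gives the stated inequality (with the constant $12$ absorbing the nine resulting summands and the $(1-\varepsilon)^{-1}$ factors).
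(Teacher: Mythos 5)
Your approach is essentially the same as the paper's, but reorganized in a cleaner way. The paper expands $F_{abc}$ and $H_{abc}$ entry-by-entry in terms of block-wise inner products $\langle \hQ^{(m)}_a, Q^{(m)}_q\rangle$ and $\langle \hQ^{(m)}_a, \Uc^{(m)}_q\rangle$, subtracts, and bounds via a product rule; the orthogonality $\hQ^TQ=\hQ^T\Uc$ is used implicitly to make the $G_{abc}$ pieces cancel. You make the same cancellation explicit by identifying $\hA(\tG)=\cP_{\Omega_3}(\sum_q w_q\,\tilde\pi_q^{\otimes 3})[B,B,B]$, telescoping $\pi_q^{\otimes 3}-\tilde\pi_q^{\otimes 3}$ into three rank-one pieces each containing exactly one $\Delta_q$, and then using $B^T\Delta_q=0$ to trade $\cP_{\Omega_3}$ for $-\cP_{\Omega_3^c}$. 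That is the same structural observation, but your framing isolates the key orthogonality and turns the bookkeeping into the standard block-factorized Cauchy--Schwarz $\sum_m\|\Delta_q^{(m)}\|\,\|\pi_q^{(m)}\|\le\|\Delta_q\|\,\|\pi_q\|$, which is a bit tidier than the paper's per-block-per-entry Lipschitz estimates.

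Two small points to tighten. First, $\Omega_3^c = S_{12}\cup S_{13}\cup S_{23}$ is a cover, not a partition: the triple-in-same-block triples belong to all three $S_{pq}$, so $\cP_{\Omega_3^c}\neq\cP_{S_{12}}+\cP_{S_{13}}+\cP_{S_{23}}$. You need either an inclusion--exclusion identity ($\ind_{\Omega_3^c}=\ind_{S_{12}}+\ind_{S_{13}}+\ind_{S_{23}}-2\,\ind_{\text{all same block}}$) or a disjoint refinement before applying the triangle inequality; this costs only a constant. (The paper's own displayed decomposition of $F_{abc}$ also misses the $+2$ correction for the all-same-block term, but since the same expansion is applied to both $F$ and $H$ the discrepancy cancels in $E$.) Second, your arithmetic is internally inconsistent: you assert a per-$q$ bound of order $\mu_1^3\mu\,r^{3.5}\sigma_1(M_2)^{3/2}\varepsilon/(n\,\sigma_r(M_2)^{3/2}w_q^{3/2})$ and then multiply by $\sum_q w_q\cdot w_q^{-3/2}\le r/\sqrt{\wmin}$, which gives an exponent $r^{4.5}$, not the claimed $r^{3.5}$. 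In fact a careful pass through the Cauchy--Schwarz you set up gives a per-$q$ bound of order $\mu_1^2\,r\,\sigma_1(M_2)^{3/2}\varepsilon/(n\,\sigma_r(M_2)^{3/2}w_q^{3/2})$ (only a $\mu_1^2 r$, with no $\mu$), which after weighting by $w_q$ and summing yields $\mu_1^2 r^2$ — strictly smaller than the stated $\mu_1^3\mu\,r^{3.5}$. So the route you describe does establish the lemma (and in fact a sharper constant); the discrepancy is in the intermediate quantities you wrote down, not in the method.

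Finally, the remark that for the third telescoping term ``$B^T\Delta_q=0$ no longer helps for free'' is a bit misleading: for each telescoping term the orthogonality still annihilates one of the three $S_{pq}$ pieces (the one where $\Delta_q$ occupies the unconstrained slot), leaving only two to bound; the only change you actually need is to replace the $\mu$-incoherence bound on $\pi_q^{(m)}$ by the $\mu_1$-incoherence bound $\|\tilde\pi_q^{(m)}\|\le\|\hU_{M_2}^{(m)}\|_2\|\pi_q\|$ when the companion slot carries $\tilde\pi_q$, exactly as you say.
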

\begin{proof}
Define $H=\hA(G)$ and $F=P_{\Omega_3}(M_3)\left[\hU_{M_2}\hsm^{-1/2},\hU_{M_2}\hsm^{-1/2},\hU_{M_2}\hsm^{-1/2}\right]$. Also, let $Q=U\Sigma V^TW^{1/2}$ and $\hQ=\hU_{M_2}\hsm^{-1/2}$. 

Note that, $F_{abc}=\sum_{ijk}\delta_{ijk}M_3(i,j,k)\hQ_{ia}\hQ_{jb}\hQ_{kc}, $ where $\delta_{ijk}=1,\text{ if }(i,j,k)\in \Omega_3$ and $0$ otherwise. 
 Also, $M_3(i,j,k)=\sum_{q\in [r]}\frac{1}{\sqrt{w_q}}Q_{iq}\cdot Q_{jq}\cdot Q_{kq}.$ Hence, \begin{equation}F_{abc}=\sum_{q\in [r]}\frac{1}{\sqrt{w_q}}\sum_{ijk}\delta_{ijk}Q_{iq}\cdot Q_{jq}\cdot Q_{kq}\cdot \hQ_{ia}\cdot \hQ_{jb}\cdot \hQ_{kc}.\end{equation}
Note that, $\sum_{i}\hQ_{ia}Q_{iq}=\ip{\hQ_{a}}{Q_{q}}=\e_a^T\hsm^{-1/2}\hU_{M_2}^TU\Sigma V^TW^{1/2}\e_q=\e_a^TR_3\e_q$. 
That is, 
\begin{multline}
  F_{abc}=G_{abc}-\sum_{q\in [r]}\frac{1}{\sqrt{w_q}}\sum_{m\in [n]} \ip{\hQ^{(m)}_a}{Q^{(m)}_q}\cdot \ip{\hQ^{(m)}_b}{Q^{(m)}_q} \cdot \ip{\hQ^{(m)}_c}{Q^{(m)}_q}\\-\sum_{q\in [r]}\frac{1}{\sqrt{w_q}}\e_a^TR_3\e_q\sum_{m\in [n]}  \ip{\hQ^{(m)}_b}{Q^{(m)}_q} \cdot \ip{\hQ^{(m)}_c}{Q^{(m)}_q}-\sum_{q\in [r]}\frac{1}{\sqrt{w_q}}\e_b^TR_3\e_q\sum_{m\in [n]}  \ip{\hQ^{(m)}_a}{Q^{(m)}_q} \cdot \ip{\hQ^{(m)}_c}{Q^{(m)}_q}\\-\sum_{q\in [r]}\frac{1}{\sqrt{w_q}}\e_c^TR_3\e_q\sum_{m\in [n]}  \ip{\hQ^{(m)}_a}{Q^{(m)}_q} \cdot \ip{\hQ^{(m)}_b}{Q^{(m)}_q}.
\end{multline}
On the other hand, 
\begin{equation}\hnu(G)_{ijk}=\begin{cases}\sum_{q\in [r]}\frac{1}{\sqrt{w_q}} \e_i^T(\hU_{M_2}\hsm^{1/2}R_3)\e_q\cdot \e_j^T(\hU_{M_2}\hsm^{1/2}R_3)\e_q\cdot \e_k^T(\hU_{M_2}\hsm^{1/2}R_3)\e_q,& \text{ if } \lceil \frac{i}{\ell} \rceil\neq \lceil \frac{j}{\ell} \rceil\neq \lceil \frac{k}{\ell} \rceil\neq \lceil \frac{i}{\ell} \rceil, \\
0, & \mbox{otherwise}.\end{cases}\end{equation}
That is, 
\begin{equation}
  H_{abc}=\sum_{q\in [r]}\frac{1}{\sqrt{w_q}} \sum_{ijk}\delta_{ijk}\e_i^T(\hU_{M_2}\hsm^{1/2}R_3)\e_q\cdot \e_j^T(\hU_{M_2}\hsm^{1/2}R_3)\e_q\cdot \e_k^T(\hU_{M_2}\hsm^{1/2}R_3)\e_q\cdot \hQ_{ia}\cdot \hQ_{jb}\cdot \hQ_{kc}. 
\end{equation}
Now,  note that $\sum_{i}\hQ_{ia}\e_i^T(\hU_{M_2}\hsm^{1/2}R_3)\e_q=\ip{\hQ_{a}}{\hU_{M_2}\hsm^{1/2}R_3\e_q}=\e_a^T\hsm^{-1/2}\hU_{M_2}^T\hU_{M_2}\hsm^{1/2}R_3\e_q=\e_a^TR_3\e_q$. Also, let $\Uc=\hum\hum^TQ$. That is, 
\begin{multline}
  H_{abc}=G_{abc}-\sum_{q\in [r]}\frac{1}{\sqrt{w_q}}\sum_{m\in [n]} \ip{\hQ^{(m)}_a}{\Uc^{(m)}_q}\cdot \ip{\hQ^{(m)}_b}{\Uc^{(m)}_q} \cdot \ip{\hQ^{(m)}_c}{\Uc^{(m)}_q}\\-\sum_{q\in [r]}\frac{1}{\sqrt{w_q}}\e_a^TR_3\e_q\sum_{m\in [n]}  \ip{\hQ^{(m)}_b}{\Uc^{(m)}_q} \cdot \ip{\hQ^{(m)}_c}{\Uc^{(m)}_q}-\sum_{q\in [r]}\frac{1}{\sqrt{w_q}}\e_b^TR_3\e_q\sum_{m\in [n]}  \ip{\hQ^{(m)}_a}{\Uc^{(m)}_q} \cdot \ip{\hQ^{(m)}_c}{\Uc^{(m)}_q}\\-\sum_{q\in [r]}\frac{1}{\sqrt{w_q}}\e_c^TR_3\e_q\sum_{m\in [n]}  \ip{\hQ^{(m)}_a}{\Uc^{(m)}_q} \cdot \ip{\hQ^{(m)}_b}{\Uc^{(m)}_q}.
\end{multline}
Now, 
\begin{eqnarray*}
	\Big| \ip{\hQ^{(m)}_c}{\Uc^{(m)}_q}-\ip{\hQ^{(m)}_c}{Q^{(m)}_q} \Big| &\leq& \|\hQ^{(m)}_c\|\, \|\Uc^{(m)}_q-Q^{(m)}_q\| \\
	&\leq& \|\hQ^{(m)}_c\|\, \|(I-\hum\hum^T)U\|_2\, \|\Sigma V^TW^{1/2}\|_2 \\
	&\leq& \frac{\mu_1 \sqrt{r}}{\sqrt{n (1-\varepsilon) \sigma_r(\M_2)}} \,  \varepsilon \,\sqrt{\sigma_1(\M_2) }  , 
\end{eqnarray*} 
where we used $ \|(I-\hum\hum^T)U\|_2 \leq\varepsilon$ from Remark \ref{claim:tls_cl1}, 
and the following remark to bound $\|\hQ^{(m)}_c\|$. 
Then, from Remark \ref{rem:tls_cl3}, 
\begin{align*}
	& | \<\hQ^{(m)}_a,\tQ^{(m)}_q \> \<\hQ^{(m)}_c,\tQ^{(m)}_q \> \,-\, \<\hQ^{(m)}_a, Q^{(m)}_q \> \<\hQ^{(m)}_c, Q^{(m)}_q \>  | \\
	& \;\; \leq  \;\; |( \<\hQ^{(m)}_a,\tQ^{(m)}_q \> \,-\, \<\hQ^{(m)}_a, Q^{(m)}_q \> ) \<\hQ^{(m)}_c,\tQ^{(m)}_q \>  | + |  \<\hQ^{(m)}_a, Q^{(m)}_q \>  (\<\hQ^{(m)}_c,\tQ^{(m)}_q \> \,-\,  \<\hQ^{(m)}_c, Q^{(m)}_q \>)  | \\
	&\;\;\leq \;\; \frac{\mu_1 \sqrt{r \,\sigma_1(\M_2)}}{\sqrt{n (1-\varepsilon) \sigma_r(\M_2)}} \,  \varepsilon \,  \frac{\mu_1(\mu+\mu_1)\, r}{n\,(1-\varepsilon)\,\sigma_r(M_2)}
\end{align*}
Further, $|e_a^TR_3e_q| \leq \mu_1\sqrt{(r\sigma_1(M_2))/(n(1-\varepsilon)\sigma_r(M_2))} $. 
The desired bound now follows by using the above inequalities to bound $\|E\|_F = \|H-F\|_F$.
\end{proof}
\begin{remark}
	\label{rem:tls_cl3}
	For $\tQ=\hU_{\M_2}\hU_{\M_2}^T Q$, $Q=U\Sigma V^TW^{1/2}$, and $\hQ=\hU_{M_2}\hsm^{-1/2}$,   
	suppose $\M_2$ is $\mu$-incoherent and $\hM_2$ is $\mu_1$-incoherent. Then, 
	$$\|\hQ^{(m)}_c\| \leq  \frac{\mu_1 r^{1/2}}{\sqrt{ (1 - \varepsilon)  \,n\,\sigma_r(\M_2)}}\;, 
	\;\;\;\;  \|\tQ^{(m)}_c \| \leq  \mu_1  \sqrt{\frac{r \sigma_1(\M_2)}{n}} 
	\;, \;\;\text{ and }\;\;\| Q^{(m)}_c \| \leq  \mu \sqrt{\frac{r \sigma_1(\M_2)}{n}}\;. $$
\end{remark} 
\begin{proof}
	\begin{eqnarray*}
		\|\hQ^{(m)}_c\| &=& \frac{1}{\sqrt{\hSigma_{cc}}}  \Big\{\sum_{a\in[\ell]} (\hU_{M_2})_{\ell (m-1) + a,c}   \Big\}^{1/2} \; \leq \frac{ \mu_1 \sqrt{r/n}}{\sqrt{\sigma_r(\M_2)(1-\varepsilon)}} \; .  
	\end{eqnarray*}
	The rest of the remark follows similarly.
\end{proof} 

Next, we now  show that $\|\hA^{-1}\|_2$  is small. 
\begin{lemma}\label{lemma:tls_ha}
  $\sigma_{\rm min}(\hA)\geq 1-8r^3\sigma_1(M_2)^2 (1+\varepsilon)^2/(n\sigma_r(M_2)^2(1-\varepsilon)^2)$ 
  and hence, 
  $$\|\hA^{-1}\|_2 \;\leq\; \frac{1}{1-72r^3\sigma_1(M_2)^2 /(n\sigma_r(M_2)^2)}\;.$$ 
\end{lemma}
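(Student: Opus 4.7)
The plan is to express $\hA$ as a perturbation of the identity and bound the perturbation's operator norm, then apply the standard $\sigma_{\min}(I-L) \geq 1 - \|L\|_{\text{op}}$ inequality. Since $\hU_{M_2}$ is orthonormal, we have $(\hU_{M_2}\hsm^{1/2})^T(\hU_{M_2}\hsm^{-1/2}) = \hsm^{1/2}\hU_{M_2}^T\hU_{M_2}\hsm^{-1/2} = I_r$, so if we were to drop the projection $\cP_{\Omega_3}$ inside $\hnu$ the composite operator would act as the identity on $Z$. The zero-out of the block-diagonal portion produces a correction term, so I would write $\hA(Z) = Z - L(Z)$ where $L(Z)$ captures exactly the contribution of triples $(i,j,k)$ whose block indices fail to be pairwise distinct.

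Next I introduce, for each $m \in [n]$, the $r \times r$ matrix
\[
A_m \;\equiv\; \hsm^{1/2}(\hU_{M_2}^{(m)})^T \hU_{M_2}^{(m)}\hsm^{-1/2},
\]
and check that $\sum_{m=1}^n A_m = I_r$ (again by orthonormality). A direct index calculation shows
\[
L(Z) \;=\; \sum_{(m_1,m_2,m_3)\in[n]^3,\ \text{not all distinct}} Z[A_{m_1},A_{m_2},A_{m_3}].
\]
Using inclusion–exclusion over which pair of positions coincide, this splits as $L = L_{12}+L_{13}+L_{23} - 2L_{123}$, where $L_{ij}(Z) = \sum_m Z[\cdots]$ has $A_m$ in positions $i,j$ and $I$ in the third slot (after applying $\sum_{m'}A_{m'}=I$), and $L_{123}(Z) = \sum_m Z[A_m,A_m,A_m]$.

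For each piece I bound the operator norm via an unfolding. For example $L_{12}$ acts on the mode-3 unfolding of $Z$ as the $r^2\times r^2$ matrix $K = \sum_m A_m\otimes A_m$, so $\|L_{12}\|_{\text{op}} = \|K\|_{\text{op}} \leq \sum_m \|A_m\|_2^2$. Writing $P_m^U \equiv (\hU_{M_2}^{(m)})^T\hU_{M_2}^{(m)}$ (PSD with $\sum_m P_m^U = I_r$), one has $\|A_m\|_2 \le \sqrt{\kappa(\hM_2)}\,\|P_m^U\|_2$, and since $P_m^U$ is PSD the bounds $\|P_m^U\|_2 \le \mathrm{tr}(P_m^U)$ and $\sum_m \mathrm{tr}(P_m^U) = r$ together with the block-incoherence $\|P_m^U\|_2 \leq \mu_1^2 r/n$ let me combine the $\sum_m\|A_m\|_2^2$ and $\sum_m\|A_m\|_2$ factors into the claimed $r^3/n$ rate. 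Finally Weyl's inequality gives $\kappa(\hM_2) \le \sigma_1(M_2)(1+\varepsilon)/(\sigma_r(M_2)(1-\varepsilon))$, producing the $(1+\varepsilon)^2/(1-\varepsilon)^2$ factor after squaring. Summing across $L_{12},L_{13},L_{23}$ and $L_{123}$ (and checking that $\|L_{123}\|$ is of smaller order, since it carries an extra $\max_m\|A_m\|_2$) yields the stated $\|L\|_{\text{op}} \leq 8r^3\sigma_1(M_2)^2(1+\varepsilon)^2/(n\sigma_r(M_2)^2(1-\varepsilon)^2)$. The $\|\hA^{-1}\|_2$ bound then follows from $(1+\varepsilon)^2/(1-\varepsilon)^2 \leq 9$ in the regime $\varepsilon \leq 1/2$, which inflates the constant $8$ to $72$.

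The main obstacle I expect is the bookkeeping in step three: one must organize the inclusion–exclusion decomposition so that each summand admits a unified bound in terms of $\sum_m \|A_m\|_2^k$ quantities, and then trade the $n$-fold block sum for the $r$-fold trace sum using $\sum_m P_m^U = I$ without letting extra $n$ or $\mu_1$ factors leak into the final expression. A naive bound $\|Z[A_{m_1},A_{m_2},A_{m_3}]\|_F \leq \prod_i \|A_{m_i}\|_2 \|Z\|_F$ summed term-by-term is too crude; the unfolding argument is essential to recover the correct $r^3/n$ scaling.
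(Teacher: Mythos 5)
Your approach is correct in spirit but takes a genuinely different route from the paper's. Both start from the same decomposition $\hA = I - L$ built from the block contractions $A_m$ (the paper's $\sum_{i}\hQ^{(m)}_{ia}\Uc^{(m)}_{ia'}$ quantities are exactly the entries of your $A_m$) and the same inclusion--exclusion split into pair-coincidence and triple-coincidence corrections. From there the paper vectorizes $\hA$ into an $r^3\times r^3$ matrix $B$, bounds the diagonal entries $B_{pp}$ and the off-diagonal entries $B_{pq}$ separately using incoherence, and invokes Gershgorin's circle theorem. You instead bound each $L_{ij}$ directly in operator norm by observing that on the appropriate unfolding it acts as the Kronecker-product matrix $\sum_m A_m\otimes A_m$, so $\|L_{ij}\|_{\rm op}\le\sum_m\|A_m\|_2^2$, and then convert the $n$-fold sum into an $r$-fold one via $\sum_m P_m^U=I_r$. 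Your route is cleaner and in fact tighter: the Gershgorin step requires summing over all $r^3-1$ off-diagonal entries in a row, and applying the paper's stated per-entry bound verbatim produces an extra $r^3$ factor that the lemma statement does not carry; your unfolding argument sidesteps this because the operator-norm bound controls all the off-diagonal mass globally through ${\rm tr}\,\sum_m P_m^U=r$ rather than entry by entry. Two caveats to keep in mind. First, both routes pick up a factor of the block-incoherence $\mu_1$ of $\hM_2$ (from $\|P_m^U\|_2\le\mu_1^2 r/n$), which the lemma statement omits; do not expect your derivation to make it disappear---the same discrepancy is present between the paper's stated lemma and its own derivation. Second, your Weyl step gives $\kappa(\hM_2)\le\kappa(M_2)(1+\varepsilon)/(1-\varepsilon)$ and hence a single power of $\sigma_1(M_2)/\sigma_r(M_2)$ from $\|A_m\|_2^2$, whereas the lemma states $\sigma_1(M_2)^2/\sigma_r(M_2)^2$; this again points to slack in the paper's constants rather than a gap in your argument, and your final observation that $(1+\varepsilon)^2/(1-\varepsilon)^2\le 9$ for $\varepsilon\le 1/2$ correctly accounts for the $8\mapsto 72$ inflation in the $\|\hA^{-1}\|_2$ bound.
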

\begin{proof}
Let $\hQ=\hU_{M_2}\hsm^{-1/2}$, $\Uc=\hU_{M_2}\hsm^{1/2}$, and $H=\hA(Z)$. Then, $$H_{abc}=\sum_{ijk}\delta_{ijk} \sum_{a'b'c'}Z_{a'b'c'} \Uc_{ia'}\cdot \Uc_{jb'}\cdot \Uc_{kc'}\cdot \hQ_{ia}\cdot \hQ_{jb}\cdot \hQ_{kc},$$
where $\delta_{ijk}=1,\text{ if }(i,j,k)\in \Omega_3$ and $0$ otherwise. 
That is, 
\begin{multline}
  H_{abc}=Z_{abc}-\sum_{a'b'c'}Z_{a'b'c'}\sum_{m\in [n]}\ip{\hQ^{(m)}_a}{\Uc^{(m)}_{a'}}\cdot \ip{\hQ^{(m)}_b}{\Uc^{(m)}_{b'}} \cdot \ip{\hQ^{(m)}_c}{\Uc^{(m)}_{c'}}\\-\sum_{b'c'}Z_{ab'c'}\sum_{m\in [n]}  \ip{\hQ^{(m)}_b}{\Uc^{(m)}_{b'}} \cdot \ip{\hQ^{(m)}_c}{\Uc^{(m)}_{c'}}-\sum_{a'c'}Z_{a'bc'}\sum_{m\in [n]}  \ip{\hQ^{(m)}_a}{\Uc^{(m)}_{a'}} \cdot \ip{\hQ^{(m)}_c}{\Uc^{(m)}_{c'}}\\-\sum_{a'b'}Z_{a'b'c}\sum_{m\in [n]}  \ip{\hQ^{(m)}_{a}}{\Uc^{(m)}_{a'}} \cdot \ip{\hQ^{(m)}_c}{\Uc^{(m)}_{c'}}.
\end{multline}
Let ${\rm vec}(H)=B\cdot {\rm vec}(Z)$. 
We know that $|\< \tQ^{(m)}_a,\hQ^{(m)}_a\>| \leq \mu_1^2 \k/\n $ and 
$ |\<\tQ^{(m)}_a,\tQ^{(m)}_{a'}\>| \leq \mu_1^2  \k\sigma_1(\M_2)(1+\varepsilon) /(\n\sigma_r(\m_2)(1-\varepsilon)) $ for $a\neq a'$.
Now, using the above equation and using incoherence: 
$$1-4r^2 \mu_1^4 /\n \leq B_{pp}\leq 1+4r^2 \mu_1^4 /\n, \forall 1\leq p\leq r.$$
Similarly, 
$|B_{pq}| \leq 4r^2 \mu_1^4 \sigma_1(\M_2)^2(1+\varepsilon)^2 /(\n \sigma_r(M_2)^2(1-\varepsilon)^2), \forall p\neq q$. 
Theorem now follows using Gershgorin's theorem. 
\end{proof}

Finally, we combine the above two lemmas to show that the least squares procedure approximately recovers $\tG$. 
\begin{lemma}\label{lemma:tls_main}
	Let $G$ be as defined in \eqref{eq:tls_g}. Also, let $\hG$ be obtained by solving the following least squares problem:
	$$\hG=\arg\min_{Z}\|\hA(Z)-P_{\Omega_3}(M_3)\left[\hU_{M_2}\hsm^{-1/2},\hU_{M_2}\hsm^{-1/2},\hU_{M_2}\hsm^{-1/2}\right]\|_F^2.$$
	Then, for $n\geq 144 r^3 \sigma_1(M_2)^2/\sigma_r(M_2)^2$ such that $\|\hA^{-1}\|_2 \leq 2$, 
	\begin{eqnarray*}
		\|\hG-\tG\|_F &\leq& \frac{24 \mu_1^3 \, \mu \, r^{3.5} \sigma_1(M_2)^{3/2} \varepsilon }{n \sqrt{\wmin} \sigma_r(M_2)}  \;. 
	\end{eqnarray*} 
\end{lemma}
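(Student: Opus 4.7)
The plan is to reduce the lemma to a direct combination of Lemma~\ref{lemma:tls_haz} and Lemma~\ref{lemma:tls_ha}, since these already do the technical heavy lifting: the former controls the error between the ``measurement'' $\cP_{\Omega_3}(M_3)[\hQ,\hQ,\hQ]$ and the image $\hA(\tG)$, and the latter shows that $\hA$ is well-conditioned on $\reals^{r\times r\times r}$. Once both are in place, the proof becomes essentially a one-line perturbation argument.

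First I would observe that under the hypothesis $n\geq 144\,r^3\sigma_1(M_2)^2/\sigma_r(M_2)^2$, Lemma~\ref{lemma:tls_ha} gives $\sigma_{\min}(\hA)\geq 1/2$, so $\hA:\reals^{r\times r\times r}\to\reals^{r\times r\times r}$ is invertible with $\|\hA^{-1}\|_2\leq 2$. In particular, the quadratic least-squares objective defining $\hG$ has a unique minimizer characterized by the normal equations, which because $\hA$ is a bijection reduces to
\begin{equation*}
\hA(\hG)\;=\;\cP_{\Omega_3}(M_3)\bigl[\hU_{M_2}\hsm^{-1/2},\,\hU_{M_2}\hsm^{-1/2},\,\hU_{M_2}\hsm^{-1/2}\bigr].
\end{equation*}

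Next I would invoke Lemma~\ref{lemma:tls_haz} to rewrite the right-hand side as $\hA(\tG)+E$ with the explicit bound $\|E\|_F \leq 12\mu_1^3\mu r^{3.5}\sigma_1(M_2)^{3/2}\varepsilon/(n\sqrt{\wmin}\,\sigma_r(M_2)^{3/2})$. Subtracting $\hA(\tG)$ from both sides and applying $\hA^{-1}$ yields $\hG-\tG=\hA^{-1}(E)$, and hence $\|\hG-\tG\|_F\leq \|\hA^{-1}\|_2\,\|E\|_F \leq 2\|E\|_F$, which gives the claimed bound up to the stated constants.

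There is no genuine obstacle left at this level — the only minor wrinkle is bookkeeping the operator norm of $\hA^{-1}$ against the Frobenius norm of $E$, which is clean because for an invertible linear map between finite-dimensional tensor spaces the inequality $\|\hA^{-1}(E)\|_F\leq \|\hA^{-1}\|_2\|E\|_F$ holds with $\|\hA^{-1}\|_2$ interpreted as the operator norm on $\reals^{r\times r\times r}$ with the Frobenius inner product (which is exactly the norm bounded in Lemma~\ref{lemma:tls_ha} via its Gershgorin-style argument). The real work was already done in Lemma~\ref{lemma:tls_haz}, where the incoherence of $\M_2$ and $\hM_2$, together with the subspace perturbation estimate $\|(I-\hU_{M_2}\hU_{M_2}^T)U\|_2\leq \varepsilon$ from Remark~\ref{claim:tls_cl1}, were used to control each block-inner-product deviation between $\hQ^{(m)}$, $Q^{(m)}$ and $\tQ^{(m)}$ entrywise; the present lemma only needs to propagate that error through the inverse of a near-identity operator.
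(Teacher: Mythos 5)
Your proposal is correct and follows the same route as the paper's proof verbatim: use Lemma~\ref{lemma:tls_ha} together with $n\geq 144 r^3\sigma_1(M_2)^2/\sigma_r(M_2)^2$ to get $\|\hA^{-1}\|_2\leq 2$, observe that invertibility of the square operator $\hA$ turns the least-squares minimizer into the exact solve $\hA(\hG)=\cP_{\Omega_3}(M_3)[\hQ,\hQ,\hQ]$, substitute $\hA(\tG)+E$ from Lemma~\ref{lemma:tls_haz}, and conclude $\|\hG-\tG\|_F\leq\|\hA^{-1}\|_2\|E\|_F\leq 2\|E\|_F$. The only thing you did not flag is that doubling the $\|E\|_F$ bound from Lemma~\ref{lemma:tls_haz} yields $\sigma_r(M_2)^{3/2}$ in the denominator, whereas the lemma statement prints $\sigma_r(M_2)$ — an apparent typo in the paper, not a gap in your argument.
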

\begin{proof}
Note that $\hA: \R^{r\times r\times r} \rightarrow \R^{r\times r\times r}$ is a square operator. Moreover, using Lemma~\ref{lemma:tls_haz}: 
$$P_{\Omega_3}(M_3)\left[\hU_{M_2}\hsm^{-1/2},\hU_{M_2}\hsm^{-1/2},\hU_{M_2}\hsm^{-1/2}\right]=\hA(\tG)+E.$$ 
Hence, $\|\hG-\tG\|_F = \|\hA^{-1}(\hA(\hG)-\hA(\tG))\|_2 \leq  \|\hA^{-1}\|_2 \,\|E\|_F\,$.  
Together with  Lemma~\ref{lemma:tls_haz} and \ref{lemma:tls_ha}, we get the desired bound.  
\end{proof}

\begin{proof}[Proof of Theorem~\ref{thm:tensorls}]
Note that $A: \R^{r\times r\times r} \rightarrow \R^{r\times r\times r}$ is a square operator. Moreover, using Lemma~\ref{lemma:tls_haz}: 
$$P_{\Omega_3}(M_3)\left[\hU_{M_2}\hsm^{-1/2},\hU_{M_2}\hsm^{-1/2},\hU_{M_2}\hsm^{-1/2}\right]=\hA(\tG)+E.$$ 

In the case of finite many samples, we use $S_3=\frac{1}{|\S|}\sum_{t= 1+|\S|/2}^{|\S|} x_t \otimes x_t \otimes x_t$ for estimating the low-dimensional tensor $\tG$. In particular, we compute the following quantity: \begin{equation}\label{eq:fin_hh}\hH=P_{\Omega_3}(S_3)\left[\hU_{M_2}\hsm^{-1/2},\hU_{M_2}\hsm^{-1/2},\hU_{M_2}\hsm^{-1/2}\right].\end{equation} We then use this quantity to solve the least squares problem. That is, we find $\hG$ as: 
$$\hG=\arg\min_{Z}\|\hA(Z)-\hH\|_F^2.$$
Now, we show that such a procedure gives $\hG$ that is close to $\tG$ (see \eqref{eq:tls_g}). 

\begin{eqnarray*}
	\|\hG-\tG\|_F &=& \| \hA^{-1}(\hA(\hG)) - \hA^{-1}(\hA(\tG)) \|_F \\
		&=& \| \hA^{-1}(\cP_{\Omega_3}(S_3)[\hU_{M_2}\hsm^{-1/2},\hU_{M_2}\hsm^{-1/2},\hU_{M_2}\hsm^{-1/2}]) - \hA^{-1}(\hA(\tG)) \|_F \\
		&=& \| \hA^{-1}(\cP_{\Omega_3}(S_3-M_3+M_3)[\hU_{M_2}\hsm^{-1/2},\hU_{M_2}\hsm^{-1/2},\hU_{M_2}\hsm^{-1/2}]) - \hA^{-1}(\hA(\tG)) \|_F \\
		&\leq & \| \hA^{-1}\|_2 \, \Big( \,  \|E \|_F  + \|\cP_{\Omega_3}(S_3-M_3)[\hU_{M_2}\hsm^{-1/2},\hU_{M_2}\hsm^{-1/2},\hU_{M_2}\hsm^{-1/2}] \, \|_F \,\Big)   \\
	&\leq& \|A^{-1}\|_2 \,\Big(\,  \frac{12\mu_1^3\mu r^{3.5} \sigma_1(M_2)^{3/2} \varepsilon}{n\sqrt{\wmin}\sigma_r(M_2)^{3/2}} + 
	\|\cP_{\Omega_3}(S_3-M_3)[\hU_{M_2}\hsm^{-1/2},\hU_{M_2}\hsm^{-1/2},\hU_{M_2}\hsm^{-1/2}] \, \|_F \,\Big)
	\;.
\end{eqnarray*}
This finishes the proof of the desired claim.
\end{proof}


%
%
\subsection{Proof of Lemma \ref{lem:conc_mx}}
\label{sec:conc_mx}

Let $E = E^{(1)} - E^{(2)}$ where 
$E^{(1)} \equiv S_2-\E[S_2]$,  
$E^{(2)} \equiv \cP_{\Omega_2^c}(S_2-\E[S_2]) $, and $\Omega_2^c$ is the complement of $\Omega_2$. 
We first note that $\|x_t\|^2=n$. Hence, applying Matrix Hoeffding bound (see Theorem 1.3 of \cite{Tropp12}), we get with probability at least $1-\delta$: 
$$\|E^{(1)}\|_2 \;\;=\;\; \Big\|\frac{2}{|\S|}\sum_{t\in \{1,\ldots,|\cS|/2\}}(x_tx_t^T)-\E\Big[\frac{2}{|\S|}\sum_{t\in \{1,\ldots,|\cS|/2\}}(x_tx_t^T)\Big]\Big\|_2 \;\; \leq \;\; \sqrt{\frac{32n^2\log(n\ell/\delta)}{|\S|}}.$$
The second term $E^{(2)}$ is a diagonal matrix, with each diagonal entry $E^{(2)}_{ii}$ distributed as 
a binomial distribution. Applying standard Hoeffding's bound, we get that with probability at least $1-\delta$, 
\begin{eqnarray*}
	  \| E^{(2)}\|_2 \;\;=\;\; \max_{i\in[\ell\n]} |E^{(2)}_{ii}|  \;\; \leq \;\; \sqrt{\frac{2\log(2/\delta)}{|\cS|}} \;.
\end{eqnarray*}
This gives the desired bound on $\|E^{(1)}+E^{(2)}\|_2$.

Similarly, $x_{t,i} \|x_t\|_2\leq \sqrt{n}, \forall i$. Hence, using standard Hoeffding Bound, we get with probability at least $1-\delta$, 
$$\Big\|\frac{2}{|\S|}\sum_{t\in [|\cS|/2]}(x_tx_t)_i-\E[S_2]_i\Big\|_2\leq \sqrt{\frac{16n\log(2/\delta)}{|\S|}}.$$
%
%
\subsection{Proof of Lemma \ref{lem:conc_tr}}
\label{sec:conc_tr}

The claim follows form the following lemma. 

\begin{lemma}
\label{lemma:tls_sample}
Let $H=P_{\Omega_3}(\M_3)\left[\hU_{M_2}\hsm^{-1/2},\hU_{M_2}\hsm^{-1/2},\hU_{M_2}\hsm^{-1/2}\right]$ and $\hH$ be as defined above. Then, with probability larger than $1-\delta$, we have: 
\begin{eqnarray*}
	|H_{abc}-\hH_{abc}| &\leq&2\Big( \frac{2\,r\,n}{\sigma_r(M_2)} \Big)^{3/2} \mu_1^3  \sqrt{\frac{\log(1/\delta)}{|\cS|}} \;.
\end{eqnarray*}
 \end{lemma}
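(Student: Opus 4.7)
The plan is to first invoke the entry-wise concentration bound stated in Lemma \ref{lemma:tls_sample} (whose proof must be supplied separately), then pass from an entry-wise bound on the $r \times r \times r$ tensor $\cP_{\Omega_3}(M_3-S_3)[\hum\hsm^{-1/2},\hum\hsm^{-1/2},\hum\hsm^{-1/2}]$ to its Frobenius norm by paying at most a factor of $r^{3/2}$. Since the tensor has only $r^3$ entries, one has
\[
  \bigl\|\cP_{\Omega_3}(M_3-S_3)[\hum\hsm^{-1/2},\hum\hsm^{-1/2},\hum\hsm^{-1/2}]\bigr\|_F \;\leq\; r^{3/2}\,\max_{a,b,c}\,|H_{abc}-\hH_{abc}|,
\]
and plugging in the entry-wise bound from Lemma \ref{lemma:tls_sample} (absorbing the $r^{3/2}$ into the $r^3$ prefactor) yields the advertised bound, possibly after rescaling the constant $C$ to absorb an extra $\sqrt{\log r}$ coming from a union bound over the $r^3$ entries.

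For the entry-wise bound itself, the strategy is a standard scalar Hoeffding inequality. The key observation that makes this work is that the samples $x_{|\cS|/2+1},\ldots,x_{|\cS|}$ used to form $S_3$ are, by the subsampling remark, independent of $\hum,\hsm$, so conditionally on the first half of the samples the matrix $\hQ\equiv\hum\hsm^{-1/2}$ is deterministic and
\[
  \hH_{abc} \;=\; \frac{2}{|\cS|}\sum_{t=|\cS|/2+1}^{|\cS|} \underbrace{\sum_{(i,j,k)\in\Omega_3} x_{t,i}x_{t,j}x_{t,k}\,\hQ_{ia}\hQ_{jb}\hQ_{kc}}_{=:\,Y_t}
\]
is an average of i.i.d.\ bounded random variables with mean $H_{abc}$. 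The crucial deterministic bound on $|Y_t|$ exploits the block-indicator structure of $x_t$: in each block $m$, exactly one coordinate is equal to one, so $\sum_{i:x_{t,i}=1}\hQ_{ia}$ is a sum of $n$ entries of $\hQ$, each of which is bounded by $\|\hQ^{(m)}_a\|\leq \mu_1\sqrt{r/(n\sigma_r(M_2))}$ by the incoherence bound in Remark \ref{rem:tls_cl3}. Summing over blocks gives $|\sum_{i:x_{t,i}=1}\hQ_{ia}|\leq \mu_1\sqrt{nr/\sigma_r(M_2)}$, and taking the triple product yields
\[
  |Y_t| \;\leq\; \mu_1^3 \Bigl(\frac{nr}{\sigma_r(M_2)}\Bigr)^{3/2},
\]
which is exactly the scaling appearing in the final bound.

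With this deterministic range, Hoeffding's inequality gives
\[
  |\hH_{abc}-H_{abc}| \;\leq\; C\,\mu_1^3 \Bigl(\frac{nr}{\sigma_r(M_2)}\Bigr)^{3/2} \sqrt{\frac{\log(1/\delta)}{|\cS|}}
\]
with probability at least $1-\delta$, and a union bound over the $r^3$ entries (adjusting $\delta$ to $\delta/r^3$ and absorbing the $\log r$ factor into the constant) completes the proof. I expect the main subtlety to lie not in the Hoeffding step itself but in justifying the reduction to a single deterministic $\hQ$: it rests entirely on the sample-splitting scheme that separates the samples used for $\hM_2$ from those used for $S_3$, which is why the algorithm was designed with the $|\cS|/2$ split in the first place. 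Beyond that, the only careful bookkeeping is in the incoherence-based bound on $|Y_t|$, and this routinely gives the $\mu_1^3 n^{3/2}/\sigma_r(M_2)^{3/2}$ dependence matching the lemma statement.
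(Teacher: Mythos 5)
Your core argument --- sample-splitting makes $\hQ=\hum\hsm^{-1/2}$ deterministic conditional on the first half of the samples, so $\hH_{abc}$ is an average of i.i.d.\ bounded variables $Y_t$ with mean $H_{abc}$, and Hoeffding applied with an incoherence-based range bound gives the claim --- is exactly the paper's proof. The opening paragraph of your write-up is mis-scoped: you treat the target statement as a stepping stone toward a Frobenius-norm bound (that is Lemma~\ref{lem:conc_tr}, a separate corollary proved by summing the $r^3$ entries), whereas the lemma you were asked to prove \emph{is} the entry-wise Hoeffding bound, so the union-bound/Frobenius discussion is off-topic here.

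There is one genuine gap in the range bound on $|Y_t|$. You compute the triple product $\ip{\hQ_a}{x_t}\ip{\hQ_b}{x_t}\ip{\hQ_c}{x_t}$ and declare $|Y_t|\leq\mu_1^3(nr/\sigma_r(M_2))^{3/2}$, but $Y_t$ is the sum only over $(i,j,k)\in\Omega_3$, i.e.\ over index triples in three \emph{distinct} blocks, and since the summands can have either sign you cannot pass from the restricted sum to the unrestricted triple product without further argument. The paper handles this by explicitly writing $Y_t$ as the unrestricted triple product minus the block-diagonal correction terms (see \eqref{eq:yt}) and bounding each of the handful of terms by the same incoherence argument you use; the net effect is only a small constant, but it has to be written down. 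You also silently drop the $(1-\varepsilon)^{-1}$ factor in the incoherence bound $\|\hQ_a^{(m)}\|\leq \mu_1\sqrt{r/(n(1-\varepsilon)\sigma_r(M_2))}$; in the paper this is what the extra factor of $2$ inside the parentheses in the stated bound is absorbing, under the standing assumption $\varepsilon\leq 1/2$. With the $\Omega_3$ correction terms accounted for and the $(1-\varepsilon)$ factor tracked, your argument coincides with the paper's proof.
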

 
 \begin{proof}
   Let $\hH_{abc}=\frac{1}{|\S|}\sum_{t\in \cS} Y^t_{a,b,c}$, where 
   $Y^t_{a,b,c} = \sum_{(i,j,k)\in \Omega_3} x_{t,i}x_{t,j}x_{t,k}  \Qw_{ia} \Qw_{jb} \Qw_{kc},$ where 
   $\hQ=\hum\hsm^{-1/2}$. 
   Then $\E[Y^t]=H$.
   That is, \begin{multline}Y^t_{a,b,c}=\ip{\hQ_a}{x^t}\cdot \ip{\hQ_b}{x^t}\cdot\ip{\hQ_c}{x^t} - \sum_{m\in [n]} \ip{\hQ_a^{(m)}}{(x^t)^{(m)}}\ip{\hQ_b^{(m)}}{(x^t)^{(m)}}\ip{\hQ_c^{(m)}}{(x^t)^{(m)}}\\ - \ip{\hQ_a}{x^t}\cdot\sum_{m\in [n]}\ip{\hQ_b^{(m)}}{(x^t)^{(m)}}\ip{\hQ_c^{(m)}}{(x^t)^{(m)}}-\ip{\hQ_b}{x^t}\cdot\sum_{m\in [n]}\ip{\hQ_a^{(m)}}{(x^t)^{(m)}}\ip{\hQ_c^{(m)}}{(x^t)^{(m)}}\\-\ip{\hQ_c}{x^t}\cdot\sum_{m\in [n]}\ip{\hQ_a^{(m)}}{(x^t)^{(m)}}\ip{\hQ_b^{(m)}}{(x^t)^{(m)}}.\label{eq:yt}\end{multline}
Note that, 
$|\ip{\hQ_b^{(m)}}{x_t^{(m)}}| \leq \frac{\mu_1\sqrt{r}}{\sqrt{n(1-\varepsilon)\sigma_r(\M_2)}}$. 
Hence, for all $a\in[r]$, 
$|\ip{\hQ_a}{x^t}| \leq \frac{\mu_1\sqrt{r \,n}}{\sqrt{(1-\varepsilon)\sigma_r(\M_2)}} $. 

Using the above inequality with \eqref{eq:yt}, we get: 
$|Y^t_{a,b,c} | \leq \big(r \,n /((1-\varepsilon) \sigma_r(M_2))\big)^{3/2} \mu_1^3 $. 
 Lemma now follows by using Hoeffding's inequality. 
 \end{proof}

%
%
\subsection{Proof of Theorem~\ref{thm:main_comps} }

We first observe that as $U_{M_2}=U R_1$, where $R_1\in \R^{r\times r}$ is an orthonormal matrix. Also, $\Sigma_{M_2}=R_1^T \Sigma  V^T W V \Sigma R_1$. Hence, $\Sigma_{M_2}^{1/2}=R_1^T \Sigma  V^T W^{1/2}R_3,$ where $R_3$ is an orthonormal matrix. Moreover, $\Sigma_{M_2}^{-1/2}=R_3^TW^{-1/2}V \Sigma^{-1} R_1$. Hence,
\begin{align}G&=M_3[U_{M_2}\Sigma_{M_2}^{-1/2}, U_{M_2}\Sigma_{M_2}^{-1/2}, U_{M_2}\Sigma_{M_2}^{-1/2}]=\sum_{q=1}^k w_q (R_3^TW^{-1/2}\e_q)\otimes (R_3^TW^{-1/2}\e_q)\otimes (R_3^TW^{-1/2}\e_q)\nonumber \\
&=\sum_{q=1}^k \frac{1}{\sqrt{w_q}}(R_3^T\e_q)\otimes (R_3^T\e_q) (R_3^T\e_q).\end{align}
Now, using orthogonal tensor decomposition method of \cite{AGHKT12}, we get: $\Lambda^G=W^{-1/2}$ as the eigenvalues and $V^G=R_3^T$ as the eigenvectors. Theorem now follows by observing:
$$U_{M_2}\cdot \Sigma_{M_2}^{1/2} \cdot V^G \cdot \Lambda^G=U_{M_2}\cdot R_1^T \Sigma   V^T W^{1/2}R_3\cdot R_3^T\cdot W^{-1/2} =U\Sigma V^T=\Pi.$$

%
%
\subsection{Proof of Theorem~\ref{thm:consistency} and Theorem~\ref{thm:finite}}
\label{sec:prf_main}
\begin{proof}[Proof of Theorem~\ref{thm:consistency}] 
Recall that in this case, the number of samples are infinite, i.e., $|\S|=\infty$. 
Hence, $P_{\Omega_2}(S_2)=P_{\Omega_2}(M_2)$. That is, $E=0$. Furthermore, $T=\infty$. 
Hence, using Theorem~\ref{thm:matrixam}, Algorithm~\ref{algo:altmin} exactly recovers $M_2$, i.e., $\hM_2^{(T)}=M_2$. 

Furthermore, using Theorem~\ref{thm:tensorls}, we have $\hG=G$; as, $\varepsilon=\|M_2-\hM_2\|_2 = 0$ and $|\S|=\infty$. 
Now, consider $R_3R_3^T=\hsm^{-1/2}\hum^T\Pi W^{1/2}\cdot W^{1/2}\Pi^T\hum\hsm^{-1/2}=\hsm^{-1/2}\hum^TM_2\hum\hsm^{-1/2}=I$. That is, $R_3$ is orthonormal. Hence, using orthogonal decomposition method of \cite{AGHKT12} (see Theorem~\ref{thm:aghkt}), we get $V^G=R_3$ and $\Lambda^G=W^{-1/2}$. Now, using step 6 of Algorithm~\ref{algo:main}, $\hPi=\hum\hum^T\Pi$. Theorem now follows as $\hum\hum^TU=U$ using Remark~\ref{claim:tls_cl1}. 

Also note that 
from Theorem \ref{thm:matrixam}, 
$\hM_2$ is $\mu_1$ incoherent with $\mu_1=6\mu\sigma_1(M_2)/\sigma_r(M_2)$.

\end{proof}
%
%
\begin{proof}[Proof of Theorem~\ref{thm:finite}]

To simplify the notations, we will assume that the permutation that matches the output of our algorithm to the actual 
types is the identity permutation. 
Let's define 
\begin{eqnarray}
	\varepsilon_M \;\equiv\;\frac{\|\hM_2-M_2\|_2}{\sigma_r(M_2)}\; \; \text{ and } \;\;\;\; \varepsilon_G\;\equiv\; \|\hG-\tG\|_2\;, 
	\label{eq:main_2}
\end{eqnarray}
where $\hG$ is the output of the \tensorLS{} and $\tG=M_3[\hU_{M_2}\hSigma_{M_2},\hU_{M_2}\hSigma_{M_2},\hU_{M_2}\hSigma_{M_2}]$.  

The spectral algorithm outputs $\hPi = \hU_{M_2}\hSigma_{M_2}^{1/2}\hV^G\hLambda^G$, 
and we know that $\Pi=U_{M_2}\Sigma^{1/2}_{M_2}V^GW^{-1/2}$.
In order to show that these two matrices are close, now might hope to prove that each of the terms are close. 
For example we want $\|U_{M_2}-\hU_{M_2}\|_2$ to be small. However, even if $U_{M_2}$ and $\hU_{M_2}$ span the same 
subspaces the distance might be quite large. Hence, we project $P$ onto the subspace spanned by $\hU_{M_2}$ to prove the bound we want. Define 
\begin{eqnarray}
	\tV &\equiv& \hSigma_{M_2}^{-1/2} \hU_{M_2}^T \Pi W^{1/2}\;,
\end{eqnarray}
such that 
\begin{eqnarray}
	\tG &=& \sum_{i=1}^r \frac{1}{\sqrt{w_i}} (\tv_i \otimes \tv_i \otimes \tv_i) \;,
\end{eqnarray}
where $\tV=[\tv_1,\ldots,\tv_r]$. Then, we have $\hU_{M_2}\Pi = \hSigma_{M_2}^{1/2}\tV Q^{-1/2}$.
Then, 
\begin{eqnarray}
	\|\Pi-\hPi\|_2 &\leq& \|\hU_{M_2}\hU_{M_2}^T \Pi -\Pi \|_2 + \| \hPi - \hU_{M_2}\hU_{M_2}^T \Pi\|_2 \nonumber\\
		&=& \|(\hU_{M_2}\hU_{M_2}^T-\id) \Pi \|_2 + \| \hU_{M_2}\hSigma_{M_2}^{1/2}\hV^G \hLambda^G - \hU_{M_2}  \hSigma^{1/2}_{M_2} \tV^G W^{-1/2}\|_2\nonumber\\
		&\leq& \|(\hU_{M_2}\hU_{M_2}^T-\id)  \Pi \|_2 + \| \hU_{M_2}\hSigma_{M_2}^{1/2}(\hV^G -\tV )W^{-1/2}\|_2 + \| \hU_{M_2}\hSigma_{M_2}^{1/2}\hV^G (\hLambda^G - W^{-1/2})\|_2\;.\label{eq:errorbound0}
\end{eqnarray}

To bound the first term, denote the SVD of $\Pi$ as $\Pi=U\Sigma V^T$.
Using Remark~\ref{claim:tls_cl1}, $\|\hum\hum^T\Pi-\Pi\|_2\leq \|\hum\hum^TU-U\|_2\|\Sigma\|_2\leq \varepsilon_M\sigma_1(\Pi)$. 

Note that $\|\hSigma_{M_2}\|_2 \leq \|\hM_2-M_2\|+ \|M_2\|_2  \leq \varepsilon_{M}\sigma_r(M_2) + \|M_2\|_2 \leq 2\|M_2\|_2$, 
when $\varepsilon_M\leq 1/2$.
To prove that the second term is bounded by $C\sqrt{\|M_2\|_2 r \wmax/\wmin} (\varepsilon_G + (1/\sqrt{\wmin})\varepsilon_M )$, 
we claim that 
\begin{eqnarray*}
	\|\tV - \hV^G\|_2 &\leq &  C \sqrt{r \wmax} \Big(\varepsilon_G + \frac{1}{\sqrt{\wmin}}\varepsilon_M \Big)\;, \text{ and }\\
	\| W^{-1/2}-\hLambda^G\|_2	&\leq& C\, \Big(\varepsilon_G + \frac{1}{\sqrt{\wmin}}\varepsilon_M \Big)\;.
\end{eqnarray*}

Now recall that, $R_3=\hsm^{-1/2}\hum^T \Pi W^{1/2}$. 
Let the SVD of $\tV$ be $\tV= U_1 \Sigma_1 V_1^T$. 
Define an orthogonal matrix $R=U_1 V_1^T$, such that $RR^T=R^TR=\id$. 
Using Remark~\ref{claim:tls_cl2} we have $\|\tV-R\|_2\leq 2\varepsilon_M$. 
Moreover, $\tG=\sum_{q\in [r]} \frac{1}{\sqrt{w_q}}(R\e_q\otimes R\e_q\otimes R\e_q)+E_G$, where \begin{equation}\label{eq:main_eg}\|E_G\|_2\leq 2\frac{\varepsilon_M(1+\varepsilon_M)^2}{\sqrt{w_{min}}}\leq \frac{8\varepsilon_M}{\sqrt{w_{min}}}, \end{equation}
where, last inequality follows by $\varepsilon_M\leq 1$. 

Hence, using \eqref{eq:main_2}, \eqref{eq:main_eg}, we have (w.p. $\geq 1-2\delta$): 
\begin{equation}
  \label{eq:main_ehg}
\|\hG-\sum_{q\in [r]} \frac{1}{\sqrt{w_q}}(R\e_q\otimes R\e_q\otimes R\e_q)\|_2\leq \varepsilon_G+\|E_G\|_2\leq \varepsilon_G + (8/\sqrt{\wmin})\varepsilon_M. 
\end{equation}
Since $R$ is orthogonal by construction, we can apply 
Theorem~\ref{thm:aghkt} to bound the distance between 
$\hV^G$ and $R$, i.e. $\|\hV^G-R\|_2 \leq 
8\sqrt{r\,\wmax }(\varepsilon_G+(8/\sqrt{\wmin})\varepsilon_M)$. 
By triangular inequality, we get that 
\begin{eqnarray*}
	\|\hV^G - \tV\|_2 &\leq& \|\hV^G-R\|_2 + \|R-\tV\|_2 \\
		&\leq& 8\sqrt{r\,\wmax }\,\Big(\,\varepsilon_G+\frac{8}{\sqrt{\wmin}}\varepsilon_M\,\Big)\, + 2\varepsilon_M \\
		&\leq& C\sqrt{r\,\wmax }\,\Big(\,\varepsilon_G+\frac{1}{\sqrt{\wmin}}\varepsilon_M\,\Big)  \;.
\end{eqnarray*}  
Similarly, 
\begin{equation*}
	\|W^{-1/2}-\hLambda^G\|_2 \;\leq\; 5\,\Big(\varepsilon_G+ \frac{8}{\sqrt{\wmin}}\varepsilon_M\,\Big)\;. 
\end{equation*}
This implies that the third term in \eqref{eq:errorbound0} is bounded by 
$\|\hU_{M_2} \hSigma_{M_2}^{1/2}\hV^G(\hLambda^G-W^{-1/2})\|_2  \leq C \sqrt{\|M_2\|_2} (\varepsilon_G + \varepsilon_M/\sqrt{\wmin})$, using the assumption on $|\cS|$ such that $(\sqrt{r\wmax} )\varepsilon_G \leq C$ and $(\sqrt{r \wmax/\wmin} )\varepsilon_M \leq C$. 

Putting these bounds together, we get that 
\begin{eqnarray*}
	\|\hPi - \Pi\|_2 &\leq& C \sqrt{\frac{r\,\wmax\,\|M_2\|_2}{\wmin}}\Big(\,\varepsilon_G+\frac{1}{\sqrt{\wmin}}\varepsilon_M \,\Big)\;,
\end{eqnarray*}
where we used the fact that $\|\Pi\|_2 \leq (1/\sqrt{\wmin}) \|M_2\|_2^{1/2}$.

From Theorems \ref{thm:matrixam} and \ref{thm:tensorls} and Lemmas \ref{lem:conc_mx} and \ref{lem:conc_tr}, we get that 
\begin{eqnarray*}
	\varepsilon_M &\leq&   C\frac{n\,\|M_2\|_F\,r^{1/2}}{\sigma_r(M_2)^2} \sqrt{\frac{\log(n/\delta)}{|\S|}} \;,\;\text{ and  }\\
	 \varepsilon_G &\leq& C \frac{\mu^4 r^{3.5}}{\sqrt{\wmin}}\Big(\frac{\sigma_1(\M_2)}{\sigma_r(\M_2)}\Big)^{4.5} \frac{1}{n} \varepsilon_M + Cr^3\mu^3\frac{\sigma_1(M_2)^3\,n^{1.5}}{\sigma_r(M_2)^{4.5}} \sqrt{\frac{\log(n/\delta)}{|\S|}}\;,
\end{eqnarray*}
when $|\S| \geq C' (\ell+r)(n^2/\sigma_r(M_2)^2)\log(n/\delta)$ and $n\geq C'(r^3+r^{1.5}\mu^2)(\sigma_1(M_2)/\sigma_r(M_2))^2$.
Further, if $n \geq C' \mu^4r^{3.5}(\sigma_1(M_2)/\sigma_r(M_2))^{4.5}$, then 
\begin{eqnarray*}
	\varepsilon_G &\leq& C \frac{1}{\sqrt{\wmin}} \varepsilon_M + Cr^3\mu^3\frac{\sigma_1(M_2)^3\,n^{1.5}}{\sigma_r(M_2)^{4.5}} \sqrt{\frac{\log(n/\delta)}{|\S|}} \;.
\end{eqnarray*}

\end{proof}

\begin{theorem}[Restatement of Theorem 5.1 by \cite{AGHKT12}]\label{thm:aghkt}
  Let $G=\sum_{i \in [r]} \lambda_i (v_i \otimes v_i \otimes v_i)+E$, where $\|E\|_2\leq C_1\frac{\lambda_{min}}{r}$. Then the tensor power-method after $N\geq C_2 (\log r+ \log \log \left(\frac{\lambda_{max}}{\|E\|_2}\right)$, generates vectors $\hv_i, 1\leq i\leq r$, and $\hlambda_i, 1\leq i\leq r$, s.t., 
  \begin{equation}
    \label{eq:thm_tnsr}
    \|v_i-\hv_{P(i)}\|_2\leq 8\|E\|_2/\lambda_{P(i)},\quad |\lambda_{i}-\hlambda_{P(i)}|\leq 5\|E\|_2. 
  \end{equation}
where $P$ is some permutation on $[r]$. 
\end{theorem}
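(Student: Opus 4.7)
The plan is to analyze the robust tensor power method directly, since the assumed noise bound $\|E\|_2 \le C_1 \lambda_{\min}/r$ puts us in the regime where noise only slightly perturbs the dynamics of the exact power iteration. I would analyze one step of the iteration: writing any unit vector as $u = \sum_i c_i v_i + c_\perp v_\perp$ (using orthonormality of the $v_i$), a direct expansion gives $G(I,u,u) = \sum_i \lambda_i c_i^2 v_i + E(I,u,u)$. Thus a single iteration turns the coefficient $c_j$ into something proportional to $\lambda_j c_j^2$ (up to an additive error of size $\|E\|_2$), so if one index $j^*$ has $\lambda_{j^*} c_{j^*}^2$ strictly larger than all the others by a constant factor, the gap widens quadratically and the iterate converges to $v_{j^*}$ at a doubly-exponential rate.

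From there, I would split the proof into three pieces. First, a \emph{basin-of-attraction} lemma: if the initialization $u^{(0)}$ satisfies $\lambda_{j^*}|c_{j^*}|^2 > (1+\gamma)\lambda_i c_i^2$ for all $i \ne j^*$ and some constant $\gamma > 0$, then after $N = O(\log r + \log\log(\lambda_{\max}/\|E\|_2))$ iterations one gets $\|u^{(N)}-v_{j^*}\| = O(\|E\|_2/\lambda_{j^*})$ and $|T(u^{(N)},u^{(N)},u^{(N)}) - \lambda_{j^*}| = O(\|E\|_2)$; this is a straightforward (but notation-heavy) induction tracking the dominant coefficient versus the sum of the others. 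Second, an \emph{initialization} lemma: by drawing $\mathrm{poly}(r)$ random unit vectors (or Gaussians) and running a few iterations from each, a standard anti-concentration argument shows that with high probability at least one initialization lands in the basin of attraction of some eigenvector. Finally, a \emph{deflation} step: once $(\hat\lambda_{j^*},\hat v_{j^*})$ have been extracted, replace $G \leftarrow G - \hat\lambda_{j^*}\hat v_{j^*}^{\otimes 3}$; since $\|\hat\lambda_{j^*}\hat v_{j^*}^{\otimes 3} - \lambda_{j^*} v_{j^*}^{\otimes 3}\|_2 = O(\|E\|_2)$, the remaining tensor still has the form $\sum_{i \ne j^*}\lambda_i v_i^{\otimes 3} + E'$ with $\|E'\|_2 \le 2\|E\|_2$, so the same analysis can be applied to find the next eigenvector, and so on.

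The identification of the permutation $P$ in the conclusion is then just a bookkeeping step: the order in which eigenvectors are extracted need not correspond to the original indexing, but the bijection $i \mapsto P(i)$ between the extracted pairs and the true pairs satisfies the claimed bounds. Combining the $O(\|E\|_2/\lambda_{P(i)})$ per-vector error with the fact that after $r$ deflations the residual noise is at most a constant multiple of the original $\|E\|_2$ (it does not blow up geometrically because each deflation error is absorbed into the $C_1/r$ slack in the hypothesis), and using the power-iteration rate $N = \Omega(\log r + \log\log(\lambda_{\max}/\|E\|_2))$, yields exactly the stated bounds $\|v_i-\hat v_{P(i)}\|_2 \le 8\|E\|_2/\lambda_{P(i)}$ and $|\lambda_i - \hat\lambda_{P(i)}| \le 5\|E\|_2$.

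The main obstacle I anticipate is the deflation bookkeeping. Each extracted pair is only accurate up to $O(\|E\|_2)$ in the rank-one residual, and naively these errors could compound after $r$ deflation steps. The argument must show that these residuals remain within the $C_1\lambda_{\min}/r$ regime where the basin-of-attraction analysis for the next eigenvector still applies; this is precisely where the factor of $r$ in the noise hypothesis $\|E\|_2 \le C_1 \lambda_{\min}/r$ is used, and getting the constants to line up is the most delicate part of the argument.
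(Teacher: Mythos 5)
The paper offers no proof of this statement: it is explicitly a restatement of Theorem~5.1 of \cite{AGHKT12} and is invoked as a black box (e.g., in the proofs of Theorem~\ref{thm:consistency} and Theorem~\ref{thm:finite}), so there is no in-paper argument to compare your attempt against. On its own terms, your sketch is a faithful high-level reconstruction of how \cite{AGHKT12} actually proves the result: quadratic amplification of the dominant coefficient in one power-iteration step, a basin-of-attraction argument giving doubly-exponential convergence, random restarts with an anti-concentration argument to land in a basin, and deflation. You also correctly flag the crux of the analysis: after each deflation the residual tensor is only approximately of the form $\sum_{i\neq j^*}\lambda_i v_i^{\otimes 3}$, and naively the errors could compound over $r$ deflation rounds; the actual argument in \cite{AGHKT12} must show the accumulated deflation residual stays within a constant multiple of the original $\|E\|_2$, and this is exactly where the $\lambda_{\min}/r$ scale in the hypothesis is consumed. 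One small imprecision worth noting: your basin-of-attraction lemma assumes a constant relative gap $\gamma$ at initialization, which by itself would only require $O(\log\log(\lambda_{\max}/\|E\|_2))$ iterations; the additional $\log r$ in $N$ is needed because a random initialization typically produces a gap on the order of $1/\mathrm{poly}(r)$, and $O(\log r)$ power steps are spent amplifying that to a constant before the doubly-exponential phase takes over.
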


%
%
\subsection{Proof of Corollary \ref{coro:KL}}
\label{sec:KL} 

Feldman et al. proved that if we have a good estimate of $w_i$'s and $\pi_i$'s in absolute difference, then  
the thresholding and normalization defined in Section \ref{sec:result} gives a good estimate in KL-divergence. 
\begin{thm}[{\cite[Theorem 12]{FOS08}}]
	Assume $\Z$ is a mixture of $\k$ product distributions on $\{1,\ldots,\ell\}^n$ with mixing weights $w_1\ldots,w_\k$ and 
	probabilities $\pi^{j}_{i,a}$, and the following are satisfied: 
	\begin{itemize}
		\item for all $i\in[\k]$ we have $|w_i-\hw_i|\leq\varepsilon_w$, and
		\item for all $i\in[\k]$ such that $w_i\geq \varepsilon_{\rm min}$ we have $|\pi^{(j)}_{i,a}-\hpi^{(j)}_{i,a}| \leq \varepsilon_\pi$ for all $j\in[\n]$ and $a\in[\ell]$.
	\end{itemize}
	Then, for sufficiently small $\varepsilon_w$ and $\varepsilon_\pi$, the mixture $\hZ$ satisfies 
	\begin{eqnarray}
		D_{\rm KL}(\Z||\hZ) &\leq& 12 n \ell^3 \varepsilon_\pi^{1/2} + n k \varepsilon_{\rm min} \log(\ell/\varepsilon_\pi) + \varepsilon_w^{1/3}\;. \label{eq:KLbound}
	\end{eqnarray}
\end{thm}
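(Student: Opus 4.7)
The plan is to reduce the KL-divergence between the two mixtures to sums of per-coordinate, per-type KL-divergences and then bound each contribution by a careful case analysis based on the thresholding. View $Z$ as the $X$-marginal of the pair $(T,X)$ with $T \sim w$ and $X \mid T=i \sim \pi_i$, and similarly $\hat Z$ as the $\hat X$-marginal of $(\hat T,\hat X)$ with $\hat T \sim \hat w$, $\hat X \mid \hat T = i \sim \hat\pi_i$. Applying the data-processing inequality (for the projection onto the sample coordinate) and the chain rule for KL gives
\begin{equation*}
D_{\rm KL}(Z\|\hat Z) \;\le\; D_{\rm KL}(w\|\hat w) \;+\; \sum_{i\in[k]} w_i\, D_{\rm KL}(\pi_i\|\hat\pi_i),
\end{equation*}
and product-distribution structure yields $D_{\rm KL}(\pi_i\|\hat\pi_i) = \sum_{j\in[n]} D_{\rm KL}(\pi_i^{(j)}\|\hat\pi_i^{(j)})$. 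The three summands in the claimed bound correspond, respectively, to (a) heavy-type coordinate contributions, (b) light-type coordinate contributions, and (c) the mixing-weight term.

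For (a), fix a heavy type $i$ (with $w_i \ge \varepsilon_{\min}$) and a coordinate $j$. The hypothesis gives $|\pi_{i,a}^{(j)} - \hat\pi_{i,a}^{(j)}| \le \varepsilon_\pi$ for every symbol $a\in[\ell]$, and the thresholding step defining $\tilde\pi$ forces $\hat\pi_{i,a}^{(j)} \ge \varepsilon_\pi$ up to a negligible renormalization factor. I would then split the alphabet at $\sqrt{\varepsilon_\pi}$: on atoms with $\pi_{i,a}^{(j)} \ge \sqrt{\varepsilon_\pi}$ the relative perturbation $\varepsilon_\pi / \pi_{i,a}^{(j)} \le \sqrt{\varepsilon_\pi}$ lets $\log(\pi/\hat\pi) \le 2\sqrt{\varepsilon_\pi}$ via $\log(1+x)\le x$; on atoms with $\pi_{i,a}^{(j)} < \sqrt{\varepsilon_\pi}$, the crude bound $\pi\log(\pi/\hat\pi) \le \pi\log(1/\hat\pi) \le \sqrt{\varepsilon_\pi}\,\log(1/\varepsilon_\pi)$ suffices. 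Summing over the $\ell$ atoms gives a single-coordinate KL of order $\ell\sqrt{\varepsilon_\pi}\,\log(1/\varepsilon_\pi)$; polylogarithmic slack is absorbed into the constant $12\ell^3$. Summing over $j\in[n]$ and over heavy $i$, using $\sum_{i\text{ heavy}} w_i \le 1$, yields the term $12 n\ell^3\sqrt{\varepsilon_\pi}$.

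For (b), the closeness assumption is unavailable for light types ($w_i < \varepsilon_{\min}$), but the thresholding still enforces $\hat\pi_{i,a}^{(j)} \ge \varepsilon_\pi$, so each single-coordinate KL is trivially at most $\log(\ell/\varepsilon_\pi)$. Hence $\sum_{i\text{ light}} w_i \sum_j D_{\rm KL}(\pi_i^{(j)} \| \hat\pi_i^{(j)}) \le \varepsilon_{\min}\cdot k\cdot n\cdot \log(\ell/\varepsilon_\pi)$, matching the second term.

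For (c), I would split indices by whether $w_i > \varepsilon_w^{1/3}$. On large indices the relative error satisfies $\varepsilon_w/w_i \le \varepsilon_w^{2/3}$, so the log contribution summed against the $w_i$ is of order $\varepsilon_w^{2/3}$; on the at most $k$ small indices, the $\varepsilon_w$-floor on $\hat w$ combined with the fact that $x\log(x/\varepsilon_w)\to 0$ as $x\to 0$ yields a contribution that is $O(\varepsilon_w^{1/3})$ with room to spare. Combining these gives $D_{\rm KL}(w\|\hat w) \le \varepsilon_w^{1/3}$, producing the final summand. The main obstacle throughout is the case analysis at the thresholds: one must verify that the logarithmic blow-up of the KL integrand near the boundary of the simplex is tamed by the probability mass being small there, which is precisely why the split points $\sqrt{\varepsilon_\pi}$ and $\varepsilon_w^{1/3}$ are chosen as they are and why the rate on $\varepsilon_w$ is $1/3$ rather than linear.
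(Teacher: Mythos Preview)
The paper does not prove this theorem at all: it is quoted verbatim as a black-box result from \cite{FOS08} (Theorem~12 there) and is used only as an ingredient in the proof of Corollary~\ref{coro:KL}. So there is no ``paper's own proof'' to compare your proposal against.

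That said, your sketch follows the natural route and is essentially how the result is established in \cite{FOS08}: pass to the joint law $(T,X)$ via data processing, apply the chain rule to separate the mixing-weight KL from the weighted sum of per-type KLs, use the product structure to sum over coordinates, and then do a threshold-based case split on each univariate KL. The split points $\sqrt{\varepsilon_\pi}$ for the per-symbol analysis and $\varepsilon_w^{1/3}$ for the weights are exactly the devices that balance the two regimes and produce the exponents $1/2$ and $1/3$ in the final bound. One point to tighten if you were to write this out in full: the normalization step in the definition of $\tilde\pi$ can in principle shrink each entry by a factor as large as $1+\ell\varepsilon_\pi$, and for light types you have no a~priori control on $\sum_a \hat\pi'^{(j)}_{i,a}$, so the lower bound $\hat\pi^{(j)}_{i,a}\gtrsim \varepsilon_\pi$ that drives your $\log(\ell/\varepsilon_\pi)$ bound in part~(b) needs a short argument; this is where the extra factor of $\ell$ inside the logarithm and the cubic $\ell^3$ in part~(a) get absorbed.
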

For the right-hand-side of \eqref{eq:KLbound} to be less than $\eta$, 
it suffices to have $\varepsilon_w=O(\eta^3)$, $\varepsilon_\pi=O(\eta^2/n^2\ell^6)$, and 
$\varepsilon_{\rm min}=O(\eta/nk\log(\ell/\varepsilon_\pi))$. 

From Theorem \ref{thm:finite}, 
$|\hw_i-w_i| = O(\varepsilon_M ) $. 
Then $\varepsilon_M \leq C \eta^3  $ 
for some positive constant $C$ ensures that 
the condition is satisfied with $\varepsilon_w=O(\eta^3)$.
From Theorem \ref{thm:finite},  we know that that 
$|\hpi^{(j)}_{i,a} - \pi^{(j)}_{i,a}| =O(\varepsilon_M \sqrt{\sigma_1(M_2)\wmax r/ \wmin})$.
Then $\varepsilon_M \leq C( \eta^2\wmin^{1/2}\,/\,(n^2\ell^6(\sigma_1(M_2) \wmax r)^{1/2}) )$ 
for some positive constant $C$  
ensures that the condition is satisfied with $\varepsilon_\pi = O(\eta^2/n^2\ell^6)$.

These results are true for any values of $w_{\rm min}$, as long as it is positive. 
Hence, we have $\varepsilon_{\rm min}=0$.
It follows that for a choice of 
\begin{eqnarray*}
	\varepsilon_M \leq C\, \eta^2  \,\min\Big\{ \frac{\wmin^{1/2}}{ n^2\ell^6 (\sigma_1(M_2) \, \wmax \,r)^{1/2}} , \eta \Big\} \;,
\end{eqnarray*}
we have the desired bound on the KL-divergence. 

%
%
\subsection{Proof of Corollary \ref{coro:cluster}}
\label{sec:cluster}

We use a technique similar to those used to analyze distance based clustering algorithms in \cite{AK01,AM05,McS01}. 
The clustering algorithm of  \cite{AK01} 
uses $\hPi$ obtained in Algorithm~\ref{algo:main} 
to reduce the dimension of the samples and apply distance based clustering algorithm of \cite{AK01}. 


Following the analysis of \cite{AK01}, 
we want to identify the conditions such that 
two samples from the same type are closer than the distance between two samples from two different types. 
In order to get a large enough gap, we apply $\hPrb$ and show that 
\begin{eqnarray*}
	\|\hPrb^T(x_i-x_j) \|\; < \; \|\hPrb^T(x_i-x_k)\|\;,
\end{eqnarray*}
for all $x_i$ and $x_j$ that belong to the same type and for all $x_k$ with a different type. 
Then, it is sufficient to show that 
$\|\hPrb^T(\pi_a-\pi_b)\| \geq 4 \max_{i\in\cS} \| x_i - \E[x_i]\|$ for all $a\neq b \in[\k]$. 
From Theorem \ref{thm:finite}, we know that for 
$|\cS| \geq C \mu^6 r^7 n^3 \sigma_1(M_2)^7 \wmax \log(n/\delta)/ (\wmin^2 \sigma_r(M_2)^9 \tepsilon^2)$, 
$\|\pi_a-\hpi_a\| \leq \varepsilon_M \sqrt{r \wmax \sigma_1(M_2)/\wmin}\leq \tepsilon$ for all $a\in[\k]$. Then, 
\begin{eqnarray*}
	\|\hPi^T(\pi_a-\pi_b)\| &\geq& \|\Pi^T(\pi_a-\pi_b)\| - \|(\Pi-\hPi)^T(\pi_a-\pi_b)\| \\
		&\geq&  \sqrt{(\pi_a^T(\pi_a-\pi_b))^2 + (\pi_b^T(\pi_a-\pi_b))^2} - \|\Pi-\hPi\|_2\, \|\pi_a-\pi_b\| \\
		&\geq& \|\pi_a-\pi_b\|^2  - \sqrt{\k} \tepsilon \|\pi_a-\pi_b\| 
\end{eqnarray*}
On the other hand, applying a  concentration of measure inequality gives 
\begin{eqnarray*}
	\prob \Big( |\hpi_a^T(x_i-\E[x_i])| \geq \|\hpi_a\| \sqrt{2\log(\k/\delta)} \Big) &\leq& \frac{\delta}{\k}\;.
\end{eqnarray*}
Applying union bound, 
$\|\hPi^T(x_i-\E[x_i])\| \leq \|\hPi\|_F \, \sqrt{2\log(\k/\delta)} \leq (\sqrt{2}\,\|\Pi\|_F + \sqrt{2\k}  \tepsilon ) \sqrt{4 \log(\k/\delta)}$ 
with probability at least $1-\delta$, 
where we used the fact that 
$\|\hPi\|_F^2 \leq \sum_a (\|\pi_a\|+\tepsilon)^2 \leq 2\sum_a (\|\pi_a\|^2+\tepsilon^2)\leq 
2(\|\Pi\|_F +\sqrt{\k}\,\tepsilon)^2$.

For $\tepsilon \geq (\|\pi_a-\pi_b\|^2 - \|\Pi\|_F \sqrt{8 \log(r/\delta)})/(\sqrt{r}\|\pi_a-\pi_b\| + \sqrt{8 r \log(r/\delta)})$, 
it follows that 
$\|\hPrb^T(\pi_a-\pi_b)\| \geq 4 \max_{i\in\cS} \| x_i - \E[x_i]\|$, and this proves that the distance based algorithm of 
\cite{AK01} will succeed in finding the right clusters for all samples. 



\end{document}